\documentclass{article} 
\usepackage{iclr2024_conference,times}


\usepackage{amsmath,amsfonts,bm}









\def\eqref#1{equation~\ref{#1}}









\def\1{\bm{1}}










\DeclareMathAlphabet{\mathsfit}{\encodingdefault}{\sfdefault}{m}{sl}
\SetMathAlphabet{\mathsfit}{bold}{\encodingdefault}{\sfdefault}{bx}{n}













\DeclareMathOperator*{\argmin}{arg\,min}

\usepackage{hyperref}
\usepackage{url}
\usepackage{booktabs}
\usepackage{graphicx}
\usepackage{subcaption}
\usepackage{pifont}
\usepackage{amsmath}
\usepackage{yfonts}
\usepackage{amssymb}
\usepackage{wrapfig}
\usepackage{mathtools}
\usepackage{amsthm}

\theoremstyle{plain}
\newtheorem{theorem}{Theorem}[section]
\newtheorem{proposition}[theorem]{Proposition}
\newtheorem{lemma}[theorem]{Lemma}

\theoremstyle{definition}
\newtheorem{definition}[theorem]{Definition}
\newtheorem{assumption}{Assumption}

\newtheorem{remark}[theorem]{Remark}

\DeclarePairedDelimiter\abs{\lvert}{\rvert}
\newcommand{\norm}[1]{\left\lVert#1\right\rVert}

\title{On the Generalization and Approximation Capacities of Neural Controlled Differential Equations}


\author{Linus Bleistein\\ Inria Paris, UEVE \And Agathe Guilloux\\
Inria Paris}

\iclrfinalcopy

%

\begin{document}

\maketitle

\begin{abstract}

Neural Controlled Differential Equations (NCDEs) are a state-of-the-art tool for supervised learning with irregularly sampled time series \citep{kidger2020neural}. In this article, we provide the first statistical analysis of their performance by merging the rich theory of controlled differential equations (CDE) and Lipschitz-based measures of the complexity of deep neural nets. Our first result is a generalization bound for this class of predictors that depends on the regularity of the time series data. In a second time, we leverage the continuity of the flow of CDEs to provide a detailed analysis of both the sampling-induced bias and the approximation bias. Regarding this last result, we show how classical approximation results on neural nets may transfer to NCDEs. Our theoretical results are validated through a series of experiments.   

\end{abstract}

\section{Introduction}

Time series are ubiquitous in many domains such as finance, agriculture, economics and healthcare. A common set of tasks consists in predicting an outcome $y \in \mathcal{Y}$, such as a scalar or a label, from a time-evolving set of features. This problem has been addressed with a great variety of methods, ranging from Vector Auto-Regressive (VAR) models \citep{hamilton2020time} and Gaussian processes \citep{roberts2013gaussian}, to deep models such as Recurrent Neural Networks \citep[RNN, ][] {elman1991distributed} and Long-Short-Term-Memory Networks \citep[LSTM, ][]{hochreiter1996lstm}.

\paragraph{Irregular Time Series.} In real-world scenarios, time series are often irregularly sampled (when data collection is difficult or expensive for instance). Mathematically, for a time series $\mathbf{x} = (\mathbf{x}_{t_0},\dots,\mathbf{x}_{t_K}) \in \mathbb{R}^{d \times (K+1)}$, this means that $\Delta_{j} := t_j -t_{j-1}$ may vary with $j=1,\dots,K$. We are interested in understanding the effect of coarseness of sampling on the learning performance. Intuitively, coarser sampling makes learning harder: bigger gaps between sampling times increase the probability of missing important events - say, a spike in the data - that help predict the outcome. Formally, it is natural to consider that the time series $\mathbf{x}$ is a degraded version - through subsampling - of an unobserved underlying path $x = (x_t)_{t \in [0,1]}$ which, in turn, determines the outcome $y \in \mathcal{Y} \subset \mathbb{R}$. On a high level perspective, we have that $y = F(x) + \varepsilon$, where $F$ is an operator that maps the space of paths $\{x:[0,1] \to \mathbb{R}^d\}$ to $\mathcal{Y}$, and $\varepsilon$ is a noise term. For instance, in healthcare, the value of a vital of interest of a patient is determined by the continuous and unobserved trajectory of her biomarkers, rather than by the discrete measurements made by a physician.

\paragraph{Models for Irregular Time Series.} While many models and approaches have been proposed for learning from irregular time series (see Section \ref{section:related_work}), a quite fruitful point of view is to view the irregular data and the outcome as being linked through a dynamical system. In this article, we focus on Neural Controlled Differential Equations (NCDE), which take this point of view. NCDEs learn a time-dependant vector field, continuously evolving with the streamed data, that steers a latent state. The terminal value of this latent state is then eventually used for prediction - as one would for instance do with RNNs. 

\paragraph{Our setup.} We restrict our attention to a supervised learning setup in which sampling times are arbitrarily spaced. Consider a sample $\{(y^i,\mathbf{x}^{D,i})\}$, where $y^i \in \mathcal{Y}$ is the label of a time series $\mathbf{x}^{D,i} \in \mathbb{R}^{d \times (K+1)}$ discretized on an arbitrary grid $D = \{t_0,\dots,t_K\}$. We will always require that this sampling grid is constant across individuals, and assume that $t_0 = 0$ and $t_K = 1$. We consider a predictor $\Hat{f}^D$ obtained by empirical risk minimization (ERM) on this sample.

While NCDE have been proven to yield excellent performance on supervised learning tasks \citep{kidger2020neural, morrill2021neural, qian2021integrating, seedat2022continuous, norcliffe2023benchmarking}, the literature on their theoretical properties is still sparse. Remarkably, \cite{kidger2022neural} proved that NCDEs are universal approximators i.e. that given a wide enough neural vector field, a NCDE can approximate any continuous function mapping the space of paths to a target space. This result hence transfers a classical result on functional approximation by neural networks to NCDEs. In this article, we aim at filling a theoretical gap on the learning capacities of NCDEs by adressing three critical questions.

\begin{enumerate}
    \item How do NCDE \textit{generalize} i.e. how well does $\hat{f}^D$ perform on unseen data drawn from the same distribution that the training data ? How strong is the curse of dimensionality for NCDEs ?
    \item How well can NCDE \textit{approximate} general controlled dynamical systems ? 
    \item How does the \textit{irregular sampling} affect generalization and approximation ? Put otherwise, how does the predictor $\Hat{f}^D$ compare to a theoretical predictor $\Hat{f}$ obtained by ERM on the unseen streamed data ? 
\end{enumerate}

Our work is, to our knowledge, the first to address these critical questions. The last point is particularly crucial, since it determines how temporal sampling affects prediction performance which is a central question both from a theoretical perspective and for practitioners.

\paragraph{Contribution.} Our contribution is threefold. First, we use a Lipschitz-based argument to obtain a sampling-dependant generalization bound for NCDE. In a second time, we build on a CDE-based well-specified model to obtain a precise decomposition of the approximation and sampling bias. With these techniques at hand, we are able to precisely assess how approximation and generalization are affected by irregular sampling. Our theoretical results are illustrated by experiments on synthetic data.

\paragraph{Overview.} Section \ref{section:related_work} covers related works. Section \ref{section:setup} details our setup and assumptions. In Section \ref{section:generalization_bound}, we state our first result which is a generalization bound for NCDE. Section \ref{section:biases} gives an upper bound on the total risk, which is our second major result. Finally, we provide numerical illustrations in Section \ref{section:numericals}.

\section{Related works}

\label{section:related_work}

\paragraph{Hybridization of deep learning and differential equations.} The idea of combining differential systems and deep learning algorithms has been around for many years \citep{rico1992discrete, rico1994continuous}. The recent work of \citet{chen2018neural} has revived this idea by proposing a model which learns a representation of $x \in \mathbb{R}^d$ by using it to set the initial condition $z_0 = \varphi^\star_\xi(x)$ of the ordinary differential equation (ODE)
\[
dz_t = \mathbf{G}_\psi(z_t)dt,
\]
where $\mathbf{G}_\psi$ and $\varphi^\star_\xi$ are neural networks. The terminal value $z_1$ of this ODE is then used as input of any classical machine learning algorithm. Numeral contributions have build upon this idea and studied the connection between deep learning and differential equations in recent years \citep{dupont2019augmented, chen2019residual, chen2020learning, finlay2020train}. \citet{massaroli2020dissecting} and \citet{kidger2022neural} offer comprehensive introductions to this topic. 

\paragraph{Learning with irregular time series.} Numerous models have been proposed to handle time-dependant irregular data. 
\citet{che2018recurrent} introduce a modified Gated Recurrent Unit (GRU) with learnt exponential decay of the hidden state between sampling times. Other models hybridizing classical deep learning architectures and ODE include GRU-ODE \citep{de2019gru} and RNN-ODE \citep{rubanova2019latent}. 
\begin{wrapfigure}{r}{0.40\textwidth}
    \centering
    \includegraphics[width=0.40\textwidth]{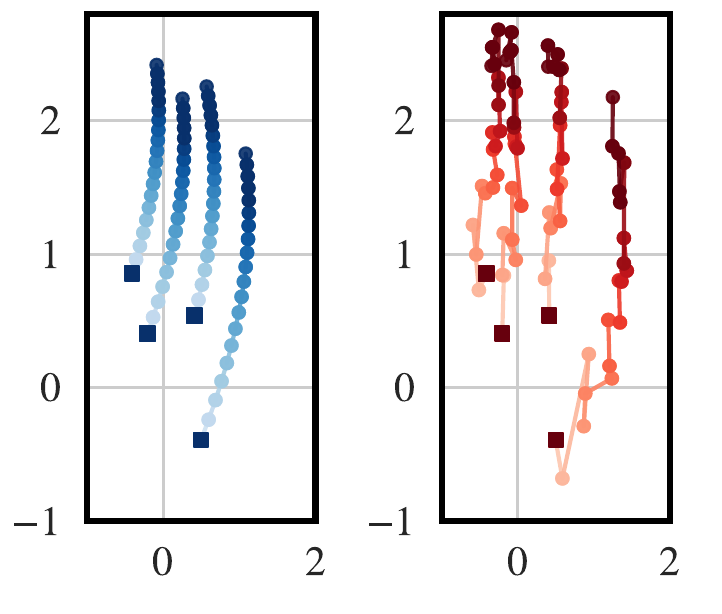}
    \caption{\small 2D latent state of a CDE driven by the smooth path $x_t = (t,t)$, i.e. a regular ResNet, on the \textbf{left} and a CDE driven by a Brownian motion with drift $x_t = (t+W^{(1)}_t,t+W^{(2)}_t)$ on the \textbf{right}. The square markers indicate the initial values of the latent states. The color gradient indicates evolution through time.}
    \label{fig:resnet_vs_ncde}
\end{wrapfigure}
Neural Controlled Differential Equations have been introduced through two distinct frameworks, namely Neural Rough Differential Equations \citep{morrill2021neuralrough} and NCDEs \citep{kidger2020neural}. In this article, we focus on the latter which extends neural ODEs to sequential data by first interpolating a time series $\mathbf{x}^D$ with cubic splines. The first value of the time series and the interpolated path $\Tilde{\mathbf{x}}$ are then used as initial condition and driving signal of the CDE
\[
dz_t = \mathbf{G}_\psi(z_t)d\Tilde{\mathbf{x}}_t
\]
or equivalently
\[
z_t = \varphi^\star_\xi(x_{0}) + \int_0^t\mathbf{G}_\psi(z_s)d\Tilde{\mathbf{x}}_s.
\]
We refer to Appendix \ref{appendix:rs_integral} for a formal definition. As for neural ODEs, the terminal value of this NCDE is then used for classification or regression. Other methods for learning from irregular time series include Gaussian Processes \citep{li2016scalable} and the signature transform \citep{chevyrev2016primer, fermanian2021embedding, bleistein2023learning}.

\paragraph{Generalization Bounds.} The generalization  capacities of recurrent networks have been studied since the end of the 1990s, mainly through bounding their VC dimension \citep{dasgupta1995sample, koiran1998vapnik}. \citet{bartlett2017spectrally} sparked a line of research connecting the Lipschitz constant of neural networks to their generalization capacities. \citet{chen2020generalization} leverage this technique to derive generalization bounds for RNN, LSTM and GRU. \citet{fermanian2021framing} also obtain generalization bounds for RNN and a particular class of NCDE. However, they crucially impose restrictive regularity conditions which then allow for linearization of the NCDE in the signature space. Generalization bounds are then derived using kernel learning theory. \citet{hanson2022fitting} consider an encoder-decoder setup involving excess risk bounds on neural ODEs. Closest to our work is the recent work of \citet{marion2023generalization}, which leverages similar tools to study the generalization capacities of a particular class of neural ODE and deep ResNets. Our work is the first one analysing NCDEs. Recent related work is summarized in Table \ref{table:related_work}, and a detailed comparison between ResNets and NCDEs is given in Appendix \ref{appendix:resnet_vs_ncde} - see also Figure \ref{fig:resnet_vs_ncde} for an illustration. We refer to Appendix \ref{appendix:earlier_work_chen} and \ref{appendix:earlier_work_golovitch} for a detailed comparison with \cite{golowich2018size} and \cite{chen2020generalization}.

\begin{table}[!ht]
    \tiny
    \centering
    \begin{tabular}{lccccc}
        \toprule
        \textbf{Article} & \textbf{Data} & \textbf{Model} & \textbf{Proof Technique} & \textbf{Approximation Error} \\ \toprule
        \cite{bartlett2017spectrally} & Static & $H(x) = W_L \circ \sigma \circ W_{L-1} \circ \dots \circ W_1 (x)$ & Lipschitz continuity  & \ding{55}\\
        & & (MLP) &  w.r.t. to $(W_i)_{i=1}^L$ & \\
        \midrule 
        
        \cite{chen2020generalization} & Sequential & $H_{t+1} = \sigma\big(AH_t + BX_t + C\big)$ (RNN) & \cite{bartlett2017spectrally} & \ding{55} \\ \midrule

       \cite{fermanian2021framing}  & Sequential & $H_{t+1} = H_{t} + \frac{1}{L}\sigma\big(AH_t + BX_t + C\big)$ & Kernel-based & \ding{55} \\ 
       & & (Residual RNN) & generalization bounds & \\
       \midrule

       \cite{hanson2022fitting}  & Static & $x \approx e^{a_m(x)f_m}\circ \dots \circ e^{a_1(x)f_1} \xi $ & Control theory bound & NA \\ 
       & & (ODE-Autoencoder) & & \\ \midrule
        \cite{marion2023generalization} & Static & $dH_t = W_t \sigma(H_t)dt$ (Neural ODE) & \cite{bartlett2017spectrally} & \ding{55} \\ 
         &  & $H_{t+1} = H_t + \frac{1}{L}W_{t+1}\sigma(H_t)$ (ResNet) &  &  \\
         \midrule 
         \textbf{This article} & Sequential & $dH_t = \mathbf{G}_\theta(H_t)dX_t$ (Neural CDE) & \cite{bartlett2017spectrally} & \ding{51} \\
         & & $H_{t+1} = H_t + \mathbf{G}_\theta(H_t) \Delta X_{t+1}$ & + Continuity of the flow & \\
         & & (Controlled ResNet) & & \\
         \bottomrule
    \end{tabular}
    \caption{Summary of recent related works on generalization bounds. We do not include prior results leveraging VC dimensions. We refer to the cited work for precise definitions.}
    \label{table:related_work}
\end{table}
\paragraph{Approximation Bounds.} The approximation capacities of neural ODEs have notably been studied by \cite{zhang2020approximation} and \cite{ruiz2023neural}. In their seminal work on NCDE, \cite{kidger2020neural} show that their are dense in the space of continuous functions acting on time series. In contrast to this work, we provide a precise decomposition of the approximation error when the outcome is generated by an unknown controlled dynamical system. 

\section{Learning with Controlled Differential Equations}

\label{section:setup}

\subsection{Neural Controlled Differential Equations}

We first define Neural Controlled Differential Equations driven by a very general class of paths. 

\begin{definition}
    Let $x:= (x_t)_{t \in [0,1]} \in \mathbb{R}^d$ be a continuous path and let $\mathbf{G}_\psi:\mathbb{R}^p \to \mathbb{R}^{p \times d}$ be a neural vector field parameterized by $\psi$. Consider the solution $z := (z_t)_{t \in [0,1]} \in \mathbb{R}^p$ of the controlled differential equation
    \begin{align*}
        dz_t = \mathbf{G}_\psi(z_t)dx_t,
    \end{align*}
    with initial condition $z_0 = \varphi_\xi(x_0)$, where $\varphi_\xi: \mathbb{R}^d \to \mathbb{R}^p$ is a neural network parameterized by $\xi$. We call $z$ the \textit{latent space trajectory} and $$\Phi^\top z_1 =: f_\theta(x) \in \mathbb{R}, $$ for $\Phi \in \mathbb{R}^p$, the \textit{prediction} of the NCDE with parameters $\theta = (\Phi,\psi,\xi)$. 
\end{definition}

The Picard-Lindelöf Theorem, recalled in Appendix \ref{thm:pl_theorem}), ensures that the solution of this NCDE is well defined if the path $(x_t)_{t \in [0,1]}$ is of bounded variation. We make the following hypothesis on $x$ in line with this theorem.

\begin{assumption}
\label{assumption:x_bounded_var}
    The path $(x_t)_{t \in [0,1]}$ is $L_x$-Lipschitz continuous, that is $\norm{x_t - x_s} \leq L_x \abs{t-s}$ for all $s,t \in [0,1]$.
\end{assumption}

Assumption \ref{assumption:x_bounded_var} indeed implies that $x$ has finite bounded variation, or put otherwise for all $t \in [0,1]$, 
\[
\norm{x}_{\textnormal{1-var},[0,t]} := \sup\limits_D  \sum \norm{x_{t_{i+1}}-x_{t_i}}  \leq L_xt \leq L_x,
\]
where the supremum is taken on all finite discretizations $D = \{0 = t_0 < t_1 < \dots < t_N = t \}$ of $[0,t]$, for $N \in \mathbb{N}^*$. Turning to our data $\{(\mathbf{x}^{D,i}),y_i\}$, we make two assumptions. 
\begin{assumption}
    For every individual $i=1,\dots,n$, the time series $\mathbf{x}^{D,i}$ is a discretization of a path $(x^i_t)_{t \in [0,1]}$ satisfying Assumption \ref{assumption:x_bounded_var}, i.e. $\mathbf{x}^{D,i}_{t_k} = x^i_{t_k}$ for all $k=0,\dots,K$.
\end{assumption}

\begin{assumption}
\label{assump:bounded_x0}
    There exists a constant $B_x > 0$ such that for every individual $i=1,\dots,n$, $\norm{x^i_0} \leq B_x$. 
\end{assumption}

\paragraph{Neural vector fields.} The vector field $\mathbf{G}_\psi$ can be any common neural network, since most architectures are Lipschitz continuous with respect to their input \citep{virmaux2018lipschitz}. This ensures that the solution of the NCDE is well defined. We restrict our attention to deep feed-forward neural vector fields with $q$ hidden layers of the form 
\begin{align}
\label{eq:deep_neural_vf}
    \mathbf{G}_\psi(z) = \sigma \Big( \mathbf{A}_q\sigma\Big(\mathbf{A}_{q-1}\sigma \Big(\dots \sigma\Big(\mathbf{A}_1z + \mathbf{b}_1\Big)\Big)+\mathbf{b}_{q-1}\Big) + \mathbf{b}_q\Big) ,
\end{align}
with $L_\sigma$-Lipschitz activation function $\sigma:\mathbb{R} \to \mathbb{R}$ acting entry-wise that verifies $\sigma(0)= 0$. Our framework can be extended to architectures with different activation functions: in this case, $L_\sigma$ will be defined as the maximal Lipschitz constant. Also note that the last activation function can be taken to be the identity. For every $h=1,\dots,q$, $\mathbf{A}_h:\mathbb{R}^{p_{h-1}} \to \mathbb{R}^{p_{h}}$ is a linear operator and $\mathbf{b} \in \mathbb{R}^{p_{h}}$ a bias term. Since the neural vector field $\mathbf{G}_\psi$ maps $\mathbb{R}^p$ to $\mathbb{R}^{p \times d}$, the dimensions of the first and last hidden layer must verify $p_{0} = p$ and $p_{q-1}=dp$. To alleviate notations and without loss of generality, we fix the size of hidden layers (but the last one) to be uniformly equal to $p$. Put otherwise, for every $h=1,\dots,q-1$, $\mathbf{A}_h \in \mathbb{R}^{p \times p}$ and $\mathbf{b}_h \in \mathbb{R}^p$, while $\mathbf{A}_{q} \in \mathbb{R}^{p \times dp}$ and $\mathbf{b}_q \in \mathbb{R}^{p \times d}$.

Following \cite{kidger2022neural}, we restrict ourselves to initializations of the form
\begin{align}
    z_0 = \varphi^\star_\xi(x_0) =  \sigma(\mathbf{U}x_0+v),
\end{align}
where $\mathbf{U} \in \mathbb{R}^{p\times d}$ and $v \in \mathbb{R}^p$. We use the notation $\text{NN}_{\mathbf{U},v}:u \mapsto \sigma\big(\mathbf{U}u+v\big)$ to refer to this initialization network. The activation function is assumed to be identical for the initialization layer and the neural vector field, but such an assumption can be relaxed. The learnable parameters of the model are therefore the linear readout $\Phi$, the hidden parameters $\psi = \Big(\mathbf{A}_q,\dots, \mathbf{A}_1,\mathbf{b}_q,\dots, \mathbf{b}_1\Big)$, and the initialization weights $\xi = (\mathbf{U},v)$. We write $\theta= \big( \Phi, \psi,\xi \big)$ for the collection of these parameters. 

\paragraph{Learning with time series.} To apply NCDEs to time series $\mathbf{x}^D$, one needs to embed $\mathbf{x}^D$ into the space of paths of bounded variation. This can be done through any reasonable embedding mapping 
\[
\rho: \mathbf{x}^D \in \big(\mathbb{R}^d\big)^{K+1} \mapsto (\Tilde{\mathbf{x}}_t)_{t \in [0,1]}
\]
such as splines, polynomials or linear interpolation. In this work, we focus on the fill-forward embedding, which simply defines the value of $\Tilde{\mathbf{x}}$ between two consecutive points of $D$ as the last observed value of $\mathbf{x}^D$.

\begin{definition}
    The fill-forward embedding of a time series $\mathbf{x}^D = (\mathbf{x}_{t_0},\dots,\mathbf{x}_{t_K})$ sampled on $D = \{0 = t_0 < \dots < t_K = 1 \}$ is the piecewise constant path $(\Tilde{\mathbf{x}}_t)_{t \in [0,1]}$  defined for every $t \in [t_k,t_{k+1}[$ as $\Tilde{\mathbf{x}}_t = \mathbf{x}^D_{t_k}$ for $k \in \{0,\dots,K-1\}$, and $\Tilde{\mathbf{x}}_1 = \mathbf{x}_{t_K} = x_1$.
\end{definition}

Using this embedding in a NCDE with parameters $\theta = (\Phi, \psi,\xi)$ recovers a piecewise constant latent space trajectory $z^D := (z^D_t)_{t \in [0,1]}$ recursively defined for every $t \in [t_k,t_{k+1}[$ by
\begin{align}\label{eq:discret_NCDE}
z^D_t = z^D_{t_{k-1}} +\mathbf{G}_\psi(z_{t_{k-1}} )(\mathbf{x}_{t_{k}}-\mathbf{x}_{t_{k-1}})
\end{align}
for $k=1,\dots,K-1$ and initialized as $z_0^D := z_0 = \sigma(\mathbf{U}x_0+v)$. 

Formally, the prediction $\Phi^\top z^D_1$ is equal to $\Phi^\top z^D_1 = f_\theta \circ \rho_{\textnormal{ FF }}\big(\mathbf{x}^D\big)$
where $\rho_{\textnormal{ FF }}$ is the fill-forward operator. To lighten notations, we simply write $\Phi^\top z^D_1 = f_\theta(\mathbf{x}^D)$. 
This recursive architecture has been studied with random neural vector fields by \citet{cirone2023neural} under the name of \textit{homogenous controlled ResNet} because of its resemblance with the popular weight-tied ResNet \citep{he2015deep}. We highlight that the latent continuous state $z$ is more informative about the outcome $y$ than $z^D$: it embeds the full trajectory of $x$ while $z^D$ embeds a version of $x$ degraded through sampling.

\paragraph{Restrictions on the parameter space.} In order to obtain generalization bounds, we must furthermore restrict the size of parameter space by requiring that the parameters $\theta$ lie in a bounded set $\Theta$. This means that there exist $B_\Phi, B_\mathbf{A},B_\mathbf{b}, B_\mathbf{U},B_v$ such that
\begin{align}\label{eq:bounds_parameters} 
\norm{\mathbf{A}_h} \leq B_\mathbf{A}, \norm{\mathbf{b}_h} \leq B_\mathbf{b}, \norm{\mathbf{U}} \leq B_\mathbf{U}, \norm{v} \leq B_v,  \norm{\Phi} \leq B_\Phi
\end{align}
for all $h=1,\dots, q$ and for all NCDEs considered, where $\norm{\cdot}$ is the Frobenius norm. Such restrictions are classical for deriving generalization bounds \citep{bartlett2017spectrally,bach2021learning,fermanian2021framing}. We call 
\begin{align*}
    \mathcal{F}_\Theta = \Big\{f_\theta: (x_t)_{t \in [0,1]} \mapsto f_\theta(x) \in \mathbb{R} \,\,  \text{ s.t. } \,\, \theta \in \Theta \Big\}
\end{align*}
this class of predictors. We also write 
\begin{align*}
    \mathcal{F}^D_\Theta = \Big\{f_\theta: \mathbf{x}^D\in \mathbb{R}^{d \times \abs{D}} \mapsto f_\theta(\mathbf{x}^D) \in \mathbb{R} \,\,  \text{ s.t. } \,\, \theta \in \Theta \Big\}
\end{align*}
for the class of homogenous controlled ResNets. We now state an important lemma, which ensures that the outputs of NCDEs remain bounded. It is a direct consequence of Gronwall's Lemma, recalled in Lemma \ref{lemma:Gronwall}. We let $\abs{D} := \max_{j=1,\dots,K-1} \abs{t_{j+1}-t_j}$ be the greatest gap between two sampling times.

\begin{lemma}
\label{lemma:bound_y_and_f}
    The value $f_\theta(x)$ is uniformly upper bounded by 
    \begin{align*}
        M_\Theta := B_\Phi L_\sigma  \exp\big((B_\mathbf{A}L_\sigma)^q L_x\big)\Big( B_\mathbf{U}B_x +  B_v +  \kappa_\Theta(\mathbf{0})L_x\Big)
    \end{align*}
    and $f_\theta(\mathbf{x}^D)$ is upper bounded by
    \begin{align*}
        M_\Theta^D := B_\Phi L_\sigma \Big(1+\big(L_\sigma B_\mathbf{A}\big)^qL_x \abs{D}\Big)^{K} \Big( B_\mathbf{U}B_x + B_v  + \kappa_\Theta(\mathbf{0}) L_x \Big),
    \end{align*}
    where $\kappa_\Theta(\mathbf{0}) = L_\sigma B_\mathbf{b} \frac{\big(L_\sigma B_\mathbf{A}\big)^{q}-1}{L_\sigma B_\mathbf{A}-1}$ is an upper bound of $\norm{\mathbf{G}_\psi(\mathbf{0})}_{\textnormal{op}} =: \max\limits_{\norm{u} = 1} \norm{\mathbf{G}_\psi(0)u}$. 
\end{lemma}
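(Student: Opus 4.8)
The plan is to reduce both bounds to controlling the norm of the terminal latent state, and then to propagate that control through the dynamics using Gronwall's Lemma (Lemma~\ref{lemma:Gronwall}) in the continuous case and its discrete analogue in the sampled case. First I would peel off the linear readout: since $f_\theta(x) = \Phi^\top z_1$, Cauchy--Schwarz gives $\abs{f_\theta(x)} \le \norm{\Phi}\,\norm{z_1} \le B_\Phi \norm{z_1}$, and likewise $\abs{f_\theta(\mathbf{x}^D)} \le B_\Phi \norm{z^D_1}$. So it suffices to bound $\norm{z_1}$ and $\norm{z^D_1}$, and the same intermediate estimates will serve both the continuous and the discrete statement.

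Next I would establish two elementary facts about the vector field $\mathbf{G}_\psi$ of \eqref{eq:deep_neural_vf}. First, each layer $u \mapsto \sigma(\mathbf{A}_h u + \mathbf{b}_h)$ is $L_\sigma\norm{\mathbf{A}_h} \le L_\sigma B_\mathbf{A}$-Lipschitz, so as a composition of $q$ such layers $\mathbf{G}_\psi$ is $(L_\sigma B_\mathbf{A})^q$-Lipschitz. Second, evaluating the layerwise recursion at $z = \mathbf{0}$ and using $\sigma(0)=0$ together with the $L_\sigma$-Lipschitz property gives, for the norm $a_h$ of the $h$-th hidden activation, $a_1 \le L_\sigma B_\mathbf{b}$ and $a_{h+1} \le L_\sigma(B_\mathbf{A} a_h + B_\mathbf{b})$; solving this geometric recursion yields exactly $\norm{\mathbf{G}_\psi(\mathbf{0})}_{\textnormal{op}} \le \kappa_\Theta(\mathbf{0})$. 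The triangle inequality then gives the affine growth bound $\norm{\mathbf{G}_\psi(z)}_{\textnormal{op}} \le (L_\sigma B_\mathbf{A})^q \norm{z} + \kappa_\Theta(\mathbf{0})$, which is the key ingredient. The initial condition is handled identically: $\norm{z_0} = \norm{\sigma(\mathbf{U}x_0+v)} \le L_\sigma(\norm{\mathbf{U}}\norm{x_0} + \norm{v}) \le L_\sigma(B_\mathbf{U}B_x + B_v)$, using Assumption~\ref{assump:bounded_x0}.

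For the continuous bound I would start from $z_t = z_0 + \int_0^t \mathbf{G}_\psi(z_s)\,dx_s$, take norms, and use that Assumption~\ref{assumption:x_bounded_var} forces $\norm{x}_{\textnormal{1-var},[0,t]} \le L_x t$, so that the integral obeys $\norm{z_t} \le \norm{z_0} + L_x\int_0^t \norm{\mathbf{G}_\psi(z_s)}_{\textnormal{op}}\,ds$. Substituting the affine growth bound gives $\norm{z_t} \le \big(\norm{z_0} + \kappa_\Theta(\mathbf{0})L_x t\big) + (L_\sigma B_\mathbf{A})^q L_x \int_0^t \norm{z_s}\,ds$, and Gronwall's Lemma applied with the non-decreasing free term $t \mapsto \norm{z_0} + \kappa_\Theta(\mathbf{0})L_x t$ yields $\norm{z_1} \le \big(\norm{z_0} + \kappa_\Theta(\mathbf{0})L_x\big)\exp\big((L_\sigma B_\mathbf{A})^q L_x\big)$. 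Inserting the bound on $\norm{z_0}$ and multiplying by $B_\Phi$ produces $M_\Theta$. For the discrete bound, the Euler step $z^D_{t_k} = z^D_{t_{k-1}} + \mathbf{G}_\psi(z^D_{t_{k-1}})(\mathbf{x}_{t_k}-\mathbf{x}_{t_{k-1}})$ of \eqref{eq:discret_NCDE}, combined with $\norm{\mathbf{x}_{t_k}-\mathbf{x}_{t_{k-1}}} = \norm{x_{t_k}-x_{t_{k-1}}} \le L_x \abs{D}$ and the affine growth bound, gives the linear recursion $\norm{z^D_{t_k}} \le \big(1 + (L_\sigma B_\mathbf{A})^q L_x \abs{D}\big)\norm{z^D_{t_{k-1}}} + \kappa_\Theta(\mathbf{0})L_x\abs{D}$. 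Unrolling this over the $K$ steps (the discrete analogue of Gronwall) produces the factor $\big(1 + (L_\sigma B_\mathbf{A})^q L_x \abs{D}\big)^K$, and bounding the resulting geometric sum of the constant terms together with the readout yields $M_\Theta^D$.

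The layerwise Lipschitz estimate and the geometric-sum computation for $\kappa_\Theta(\mathbf{0})$ are routine. The main obstacle is the bookkeeping needed to make the constants line up exactly: propagating the $\mathbf{G}_\psi(\mathbf{0})$ contribution $\kappa_\Theta(\mathbf{0})$ cleanly so that in the continuous case it lands inside, rather than outside, the exponential factor, which requires invoking the form of Gronwall's Lemma that allows a time-dependent affine free term; and, in the discrete case, telescoping the geometric series of the additive constants so that it collapses into the single clean factor $\big(1 + (L_\sigma B_\mathbf{A})^q L_x \abs{D}\big)^K$ multiplying the combined constant $B_\mathbf{U}B_x + B_v + \kappa_\Theta(\mathbf{0})L_x$.
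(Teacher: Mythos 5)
Your route is the same as the paper's: Cauchy--Schwarz on the readout $\Phi$, the Lipschitz constant $(L_\sigma B_\mathbf{A})^q$ and the affine growth bound $\norm{\mathbf{G}_\psi(z)}_{\textnormal{op}} \le (L_\sigma B_\mathbf{A})^q\norm{z}+\kappa_\Theta(\mathbf{0})$ for the neural vector field, the bound $\norm{z_0}\le L_\sigma(B_\mathbf{U}B_x+B_v)$ for the initialization, then continuous Gronwall for $z_1$ and a discrete recursion for $z_1^D$. The continuous half goes through exactly as you describe and matches the paper's argument.

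The one step that does not land where you claim is the discrete summation. From your uniform per-step recursion $\norm{z^D_{t_k}} \le a\,\norm{z^D_{t_{k-1}}} + b$ with $a = 1+(L_\sigma B_\mathbf{A})^qL_x\abs{D}$ and $b = \kappa_\Theta(\mathbf{0})L_x\abs{D}$, unrolling gives $a^K\norm{z_0} + b\,(a^K-1)/(a-1) = a^K\norm{z_0} + \kappa_\Theta(\mathbf{0})(a^K-1)/(L_\sigma B_\mathbf{A})^q$, and this additive term is not bounded by the target term $L_\sigma\kappa_\Theta(\mathbf{0})L_x\,a^K$ in general: it exceeds it whenever $L_\sigma L_x (L_\sigma B_\mathbf{A})^q$ is small, and the cruder estimate $b\sum_{j<K}a^{j} \le \kappa_\Theta(\mathbf{0})L_x\,K\abs{D}\,a^K$ overshoots by the factor $K\abs{D}\ge 1$, which can be of order $K$ on a very non-uniform grid. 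The culprit is that you bound \emph{each} increment $\norm{\mathbf{x}_{t_k}-\mathbf{x}_{t_{k-1}}}$ by $L_x\abs{D}$ in the additive term and then sum $K$ copies, discarding the fact that the increments sum to at most the total variation $L_x$. The fix --- which is what the paper's application of the discrete Gronwall lemma (Lemma~\ref{lemma:Gronwallsequemce}) amounts to --- is to keep the additive terms as $b_k = \kappa_\Theta(\mathbf{0})\norm{\Delta x_{t_k}}$, use $\norm{\Delta x_{t_k}}\le L_x\abs{D}$ only inside the multiplicative factors $\big(1+(L_\sigma B_\mathbf{A})^q\norm{\Delta x_{t_k}}\big)$, and bound $\sum_k b_k \le \kappa_\Theta(\mathbf{0})L_x$ directly; then $\sum_k b_k\prod_{j>k}\big(1+(L_\sigma B_\mathbf{A})^qL_x\abs{D}\big) \le \kappa_\Theta(\mathbf{0})L_x\,a^K$ and the constants collapse to $M_\Theta^D$ as stated.
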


\begin{remark}
    Remark that the constant $M^D_\Theta$ grows exponentially with $q$ if $B_\mathbf{A}L_\sigma > 1$. In practice, NCDEs are however used with shallow neural vector fields, i.e. $q$ is typically chosen to be less than $3$ \citep{kidger2020neural}. Additionally, if one wants to use very deep neural vector fields, this result suggest controlling their Frobenius norm by setting it equal to $1$ to avoid exponential blow-up of the network's Lipschitz constant - see for instance \cite{li2019orthogonal}.
\end{remark}

\begin{remark}
    It is natural to wonder whether $M_\Theta^D \to M_\Theta$ as $\abs{D} \to 0$. This is indeed true for well-behaved samplings, i.e. when adding more sampling points decreases the mesh size $\abs{D}$ at sufficient speed. Indeed, if $K^{-1} = \mathcal{O}(\abs{D})$, one can use the fact that $(1+\alpha x^{-1})^x \to \exp(\alpha)$ as $x\to 0$ to see that $M_\Theta^D \to M_\Theta$.
\end{remark}
 
\subsection{The learning problem}

We now detail our learning setup. We consider an i.i.d. sample $  \{(y^i,\mathbf{x}^{D,i})\}_{i=1}^n \sim y,x$ with given discretization $D$. For a given predictor $f_\theta \in \mathcal{F}_\Theta$, define 
\[
\mathcal{R}^D_n(f_\theta) = \frac{1}{n}\sum\limits_{i=1}^n \ell(y^i,f_\theta(\mathbf{x}^{D,i})) \quad \text{and }\quad  \mathcal{R}^D(f_\theta) = \mathbb{E}_{x,y}\Big[\ell(y,f_\theta(\mathbf{x}^{D}))\Big]\]
as the empirical risk and the expected risk on the discretized data. Similarly, define 
\[
\mathcal{R}_n(f_\theta) = \frac{1}{n} \sum\limits_{i=1}^n \ell(y^i,f_\theta(x^i)) \quad \text{ and} \quad \mathcal{R}(f_\theta) = \mathbb{E}_{x,y}\Big[\ell(y,f_\theta(x))\Big]
\]
as the empirical risk and expected risk on the continuous data. We stress that $\mathcal{R}_n(f_\theta)$ cannot be optimized, since we do not have access to the continuous data. Let $\Hat{f}^D \in \argmin\limits_{\theta \in \Theta} \mathcal{R}_n^D(f_\theta)$
be an optimal predictor obtained by empirical risk minimization on the discretized data. In order to obtain generalization bounds, the following assumptions on the loss and the outcome are necessary \citep{mohri2018foundations}.

\begin{assumption}
    \label{assumption:y_bounded}
    The outcome $y \in \mathbb{R}$ is bounded a.s. 
\end{assumption}

\begin{assumption}
\label{assumption:loss_assumption}
The loss $\ell:\mathbb{R} \times \mathbb{R} \to \mathbb{R}_+$ is Lipschitz with respect to its second variable, that is there exists $L_\ell $ such that for all $u,u' \in \mathcal{Y}$ and $y \in \mathcal{Y}$, $\abs{\ell(y,u)-\ell(y,u')} \leq L_\ell \abs{u-u'}$. 
\end{assumption} 

This hypothesis is satisfied for most classical losses, such as the the mean squared error, as long as the outcome and the predictions are bounded. This is true by Assumption \ref{assumption:y_bounded} and Lemma~\ref{lemma:bound_y_and_f}. The loss function is thus bounded since it is continuous on a compact set, and we let $M_\ell$ be a bound on the loss function.

\section{A Generalization Bound for NCDE}
\label{section:generalization_bound}

We now state our main theorem. 

\begin{theorem}
\label{thm:generalization_bound}
With probability at least $1-\delta$, the generalization error $\mathcal{R}^D(\Hat{f}^D) -\mathcal{R}_n(\Hat{f}^D)$ is upper bounded by
\begin{align*}
\frac{24M^D_\Theta L_\ell }{\sqrt{n}} \sqrt{2p U^D_1 + (q-1)p(p+1) U^D_2 + dp(2+p)U^D_3} +  M_\ell \sqrt{\frac{\log 1/\delta}{2n}},
\end{align*}
with $U^D_1 :=\log (\sqrt{n}C_qK^D_1)$,  $U^D_2 := \log (\sqrt{np} C_q K^D_2)$, $U_3^D = \log( \sqrt{ndp}C_q K^D_2)$, and $C_q := (8q+12)$. $K^D_1$ and $K^D_2$ are two discretization and depth dependant constants equal to 
\begin{align*}
    K^D_1 := \max \{B_\Phi M^D_\Theta, B_vC_v \} \text{  and  } K_2^D := \max \{B_{\mathbf{b}} C^D_\mathbf{b}, B_\mathbf{A}C^D_\mathbf{A},B_\mathbf{U}C_\mathbf{U}\},
\end{align*}
where $C^D_\mathbf{A},C^D_\mathbf{b},C_v,C_\mathbf{U}$ are Lipschitz constants given in Appendix \ref{proof:theorem4}.
\end{theorem}
This result is similar to results obtained for instance by \cite{bartlett2017spectrally}, \cite{chen2020generalization} and \cite{marion2023generalization} for different models. Indeed, the generalization error is upper bounded by a noise induced term and a complexity term which grows with the square root of the model's complexity measured by the number of hidden layers $q$ and the dimension  $p$ of the hidden state. This theorem is obtained by upper bounding the covering number of $\mathcal{F}^D_\Theta$ and the Rademacher complexity of NCDEs.

\begin{proposition}
    \label{prop:covering_nb_cont_ncde}
    When using discretized inputs, the covering number $\mathcal{N}\big(\mathcal{F}_\Theta,\eta; D\big)$ of $\mathcal{F}^D_\Theta$ is bounded from above by
    \begin{align*}
        \Bigg(1+\frac{C_qK^D_1}{2\eta}\sqrt{p}\Bigg)^{(q+1)p(1+p)}\Bigg(1+\frac{C_qK^D_2}{2\eta}(\sqrt{p}+\sqrt{d})\Bigg)^{2dp} 
     \Bigg(1+\frac{C_qK^D_1}{2\eta}\sqrt{dp}\Bigg)^{dp^2}.
    \end{align*}
    Furthermore, the empirical Rademacher complexity $\textnormal{Rad}(\mathcal{F}^D_\Theta) $ associated to an i.i.d. sample of size $n$ discretized on $D$ is upper bounded by 
\begin{align*}
  & \frac{4}{n} + \frac{24M^D_\Theta}{\sqrt{n}} \sqrt{(q+1)p(p+1) U^D_1  + 2dp U_2^D + dp^2 U_3^D}. 
\end{align*}
 When using continuous inputs, the constants appearing in the upper bound of $\textnormal{Rad}(\mathcal{F}_\Theta)$ are discretization independent.
\end{proposition}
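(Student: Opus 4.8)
The plan is to follow the Bartlett--Foster--Telgarsky programme adapted to the recursion (\ref{eq:discret_NCDE}): first bound the covering number of $\mathcal{F}^D_\Theta$ by reducing it to a covering of the bounded parameter set $\Theta$, then convert this metric-entropy bound into a Rademacher complexity bound via Dudley's chaining integral. The linchpin is a \emph{Lipschitz-in-parameters} estimate. I would show that for two parameter vectors $\theta,\bar\theta\in\Theta$ differing in a single block, the prediction gap $\abs{f_\theta(\mathbf{x}^D)-f_{\bar\theta}(\mathbf{x}^D)}$ is controlled linearly by the perturbation of that block, with an explicit constant; the two constants $K^D_1,K^D_2$ in the statement are exactly these block-wise Lipschitz constants, and they differ according to whether the perturbed block enters the dynamics (the $\mathbf{A}_h,\mathbf{b}_h$ acting through $\mathbf{G}_\psi$, and the readout $\Phi$) or the initial condition (the interface weights $\mathbf{U},v$ and the data-facing last layer).

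For this estimate I would set $\delta_k := \norm{z^D_{t_k}-\bar z^D_{t_k}}$ for the two latent trajectories and derive the one-step inequality $\delta_k \le \delta_{k-1}\big(1+(L_\sigma B_\mathbf{A})^q\norm{\mathbf{x}_{t_k}-\mathbf{x}_{t_{k-1}}}\big)+\varepsilon_{\bar\theta}\norm{\mathbf{x}_{t_k}-\mathbf{x}_{t_{k-1}}}$, where the multiplicative term uses that $\mathbf{G}_\psi$ is $(L_\sigma B_\mathbf{A})^q$-Lipschitz in its input and the source term $\varepsilon_{\bar\theta}$ measures the sensitivity of the feed-forward field $\mathbf{G}$ to the perturbed block. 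Bounding $\norm{\mathbf{x}_{t_k}-\mathbf{x}_{t_{k-1}}}\le L_x\abs{D}$ through Assumption \ref{assumption:x_bounded_var} and unrolling with the discrete Gronwall lemma produces the factor $\big(1+(L_\sigma B_\mathbf{A})^q L_x\abs{D}\big)^K$ already visible in $M^D_\Theta$; multiplying by $B_\Phi$ and adding the readout contribution $\norm{\bar z^D_1}\,\norm{\Phi-\bar\Phi}$ yields $K^D_1,K^D_2$. Each parameter block is a Frobenius ball, and using $\mathcal{N}(\text{ball of radius }B,\epsilon)\le(1+2B/\epsilon)^{\dim}$, apportioning the budget $\eta$ across the $\mathcal{O}(q)$ blocks (which is where $C_q=8q+12$ and the dimension counts $(q+1)p(1+p)$, $2dp$, $dp^2$ in the exponents come from) and taking the product over blocks, gives the stated covering number.

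For the Rademacher bound I would apply Dudley's entropy integral with respect to the empirical $\ell_2$ metric, using $\norm{f}_\infty\le M^D_\Theta$ from Lemma \ref{lemma:bound_y_and_f} to bound the diameter of the class; a uniform parameter-Lipschitz cover at scale $\epsilon/\sqrt n$ serves as an $\ell_2$ cover at scale $\epsilon$. Since $\log\mathcal{N}$ is non-increasing in the scale, I would bound the integrand by its value at the lower cut-off and pull it out of the integral. Taking the cut-off of order $n^{-1/2}$ produces the additive $4/n$ term, evaluating the covering number there turns the three exponents into the three summands $(q+1)p(p+1)U^D_1+2dp\,U^D_2+dp^2U^D_3$ with $U^D_1=\log\sqrt n C_qK^D_1$ and the others analogous, and the integration length $M^D_\Theta$ together with Dudley's numerical constants yields the prefactor $24M^D_\Theta/\sqrt n$.

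Finally, the continuous-input claim follows by running the identical argument on the exact CDE flow instead of the recursion (\ref{eq:discret_NCDE}): the one-step discrete stability inequality is replaced by its integral analogue and the continuous Gronwall lemma, so the amplification factor becomes $\exp\!\big((B_\mathbf{A}L_\sigma)^q L_x\big)$ --- the constant from $M_\Theta$ rather than $M^D_\Theta$ --- which depends only on the total variation $L_x$ and not on $D$. I expect the discrete Gronwall stability estimate with explicit, block-by-block constants to be the main obstacle, since the recursion compounds every parameter perturbation multiplicatively across all $K$ steps; the delicate point is separating the input-Lipschitz contribution from the parameter-sensitivity source term cleanly enough that the resulting $K^D_1,K^D_2$ do not grow faster than $M^D_\Theta$ itself.
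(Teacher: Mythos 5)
Your proposal is correct and follows essentially the same route as the paper: a blockwise Lipschitz-in-parameters estimate for $f_\theta(\mathbf{x}^D)$, an $\eta/(2q+3)$ apportionment of the covering budget across the parameter blocks using the matrix-ball covering bound of Lemma \ref{lemma:covering_matrices}, and Dudley's integral (Lemma \ref{lemma:rad_cov}) with the integrand bounded at the cut-off $\eta = n^{-1/2}$, which is exactly how the paper produces the $4/n$ term and the $24 M^D_\Theta/\sqrt{n}$ prefactor. The only cosmetic difference is that you obtain the parameter-Lipschitz constants by unrolling the recursion directly with the discrete Gronwall lemma, whereas the paper packages the same computation as a consequence of its general flow-continuity result (Theorem \ref{thm:flow_continuity}, applied to two CDEs with the same driving path but perturbed vector fields and initial conditions, via Lemma \ref{lemma:lipschitz_deep_vf}) — the two are the same argument in different clothing, since the flow-continuity theorem for piecewise-constant paths is itself proved by the Gronwall iteration you describe.
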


This result is obtained by showing that NCDE are Lipschitz with respect to their parameters, such that covering each parameter class yields a covering of the whole class of predictors. In combination with arguments borrowed from \citet{chen2020generalization} and \citet{bartlett2017spectrally}, one may then upper bound the empirical Rademacher complexity by making use of the Dudley Integral.

\section{A Bound on the Total Risk}

\label{section:biases}

\subsection{A CDE-Based Well-Specified Model}

In order to bound the approximation bias, we introduce a model that links the underlying path $x$ to the outcome $y$.

\begin{assumption}
There exists a vector field $\mathbf{G}^*:\mathbb{R}^{p}\to \mathbb{R}^{p \times d}$, a function $\varphi^\star: \mathbb{R}^d \to \mathbb{R}^p$ and a vector $\Phi_\star \in \mathbb{R}^p$ that govern a latent state $(z^\star_t)_{t \in [0,1]}$ through the CDE
\begin{align}
\label{eq:model_cde}
    dz^\star_t = \mathbf{G}^\star(z^\star_t)dx_t
\end{align}
with initial value $z^\star_0 = \varphi^\star(x_0)$ such that the outcome $y$ is given by 
\begin{align*}
    y = \Phi_\star^\top z_1^\star + \varepsilon, 
\end{align*}
where $\varepsilon$ is a noise term bounded by $M_\varepsilon > 0$.
\end{assumption}
\begin{assumption}\label{assumption:generative_model}
    The parameters of the well-specified model satisfy $\norm{\Phi_\star} \leq B_\Phi$, $\varphi^\star \in C(\mathbb{R}^d,\mathbb{R}^p)$ and $\mathbf{G}^\star \in \text{Lip}(\mathbb{R}^p,\mathbb{R}^{p \times d})$ with Lipschitz constant $L_{\mathbf{G}^\star}$.
\end{assumption}

\begin{remark}This is a very general model, which encompasses ODE-based models used for instance in biology or physics. Indeed, by setting $x_t = t$ for all $t$, one recovers a generic ODE. It also allows for the modelization of complex dynamical systems whose dynamics depend on external time-dependant data. For instance, \cite{simeoni2004predictive} model tumor growth kinetics as a CDE driven by a time series recording the intake of anticancer agents, which perturbs the dynamics of the growth of the tumor.    
\end{remark}

We call the map $\mathbf{f}^\star:x \mapsto \Phi_\star^\top z_1^\star$ the true predictor. Uniqueness and existence to \eqref{eq:model_cde} is given by the Picard-Lindelöf theorem stated in Appendix \ref{appendix:pl_theorem}. We have the following important lemma, which ensures that outcomes of the well-specified model are bounded. Its is analogous to Lemma \ref{lemma:bound_y_and_f}.  

\begin{lemma}
\label{lemma:generative_model_bound}
    Let $y$ be generated from the CDE (\ref{eq:model_cde}) from a $L_x$-Lipschitz path $x$. Let $B_\varphi^\star := \max\limits_{\norm{u} \leq B_x} \norm{\varphi^\star(u)}$. Then Assumption \ref{assumption:y_bounded} holds, i.e. $y$ is bounded, and 
    \begin{align*}
        \abs{y} \leq B_\Phi\Big( B_\varphi^\star + \norm{\mathbf{G}^\star(0)}_{\textnormal{op}} L_x\Big) \exp\big(L_{\mathbf{G}^\star}L_x \big) + M_\varepsilon.
    \end{align*}
\end{lemma}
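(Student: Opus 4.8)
The plan is to mirror the argument behind Lemma \ref{lemma:bound_y_and_f}: first control the terminal latent state $z^\star_1$ through a Gronwall-type estimate, then transfer this to $y = \Phi_\star^\top z_1^\star + \varepsilon$ via Cauchy--Schwarz and the triangle inequality. First I would rewrite the CDE \eqref{eq:model_cde} in its integral (Riemann--Stieltjes) form $z^\star_t = \varphi^\star(x_0) + \int_0^t \mathbf{G}^\star(z^\star_s)\, dx_s$, which is well defined since $x$ is of bounded variation, and take norms to get $\norm{z^\star_t} \leq \norm{z^\star_0} + \norm{\int_0^t \mathbf{G}^\star(z^\star_s)\, dx_s}$.

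The key step is to bound the integral against the driving path. Using the standard estimate for integrals against paths of bounded variation, $\norm{\int_0^t \mathbf{G}^\star(z^\star_s)\, dx_s} \leq \int_0^t \norm{\mathbf{G}^\star(z^\star_s)}_{\textnormal{op}}\, d\norm{x}_{\textnormal{1-var},[0,s]}$, and since Assumption \ref{assumption:x_bounded_var} makes $x$ be $L_x$-Lipschitz, the associated total-variation measure satisfies $d\norm{x}_{\textnormal{1-var}} \leq L_x\, ds$. This yields $\norm{z^\star_t} \leq \norm{z^\star_0} + L_x \int_0^t \norm{\mathbf{G}^\star(z^\star_s)}_{\textnormal{op}}\, ds$. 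I would then invoke the Lipschitz continuity of $\mathbf{G}^\star$ from Assumption \ref{assumption:generative_model}, namely $\norm{\mathbf{G}^\star(z)}_{\textnormal{op}} \leq \norm{\mathbf{G}^\star(0)}_{\textnormal{op}} + L_{\mathbf{G}^\star}\norm{z}$, to obtain the closed inequality $\norm{z^\star_t} \leq \big(\norm{z^\star_0} + L_x \norm{\mathbf{G}^\star(0)}_{\textnormal{op}}\, t\big) + L_x L_{\mathbf{G}^\star}\int_0^t \norm{z^\star_s}\, ds$.

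Applying Gronwall's Lemma (Lemma \ref{lemma:Gronwall}) with the nondecreasing forcing term $\norm{z^\star_0} + L_x \norm{\mathbf{G}^\star(0)}_{\textnormal{op}}\, t$ and constant multiplier $L_x L_{\mathbf{G}^\star}$, then evaluating at $t=1$, gives $\norm{z^\star_1} \leq \big(\norm{z^\star_0} + L_x\norm{\mathbf{G}^\star(0)}_{\textnormal{op}}\big)\exp(L_x L_{\mathbf{G}^\star})$. To finish, I would bound $\norm{z^\star_0} = \norm{\varphi^\star(x_0)} \leq B_\varphi^\star$ using Assumption \ref{assump:bounded_x0} together with the definition $B_\varphi^\star := \max_{\norm{u} \leq B_x}\norm{\varphi^\star(u)}$, apply Cauchy--Schwarz and $\norm{\Phi_\star}\leq B_\Phi$ to get $\abs{\Phi_\star^\top z^\star_1} \leq B_\Phi\norm{z^\star_1}$, and add $\abs{\varepsilon} \leq M_\varepsilon$ by the triangle inequality, recovering exactly the stated bound. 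The only delicate point is the treatment of the Riemann--Stieltjes integral and the fact that Lipschitz continuity of $x$ transfers to its $1$-variation measure as $d\norm{x}_{\textnormal{1-var}} \leq L_x\, ds$; once this is granted, the remaining steps are routine and identical in spirit to the proof of Lemma \ref{lemma:bound_y_and_f}.
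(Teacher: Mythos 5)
Your proposal is correct and follows essentially the same route as the paper: integral form of the CDE, the linear growth bound $\norm{\mathbf{G}^\star(z)}_{\textnormal{op}} \leq \norm{\mathbf{G}^\star(0)}_{\textnormal{op}} + L_{\mathbf{G}^\star}\norm{z}$, a Gronwall estimate, then Cauchy--Schwarz with $\norm{\Phi_\star}\leq B_\Phi$ and the triangle inequality for $\varepsilon$. The only (immaterial) difference is that you convert the Stieltjes integral to a $ds$-integral via $d\norm{x}_{\textnormal{1-var}} \leq L_x\,ds$ before applying classical Gronwall, whereas the paper keeps $\int_0^t \norm{z_s}\norm{dx_s}$ and invokes its CDE version of Gronwall's Lemma directly, bounding $\norm{x}_{\textnormal{1-var},[0,t]}\leq L_x$ at the end; both yield the identical constant.
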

\subsection{Bounding the Total Risk}

\begin{figure}
\centering
    \includegraphics[width=0.329\textwidth]{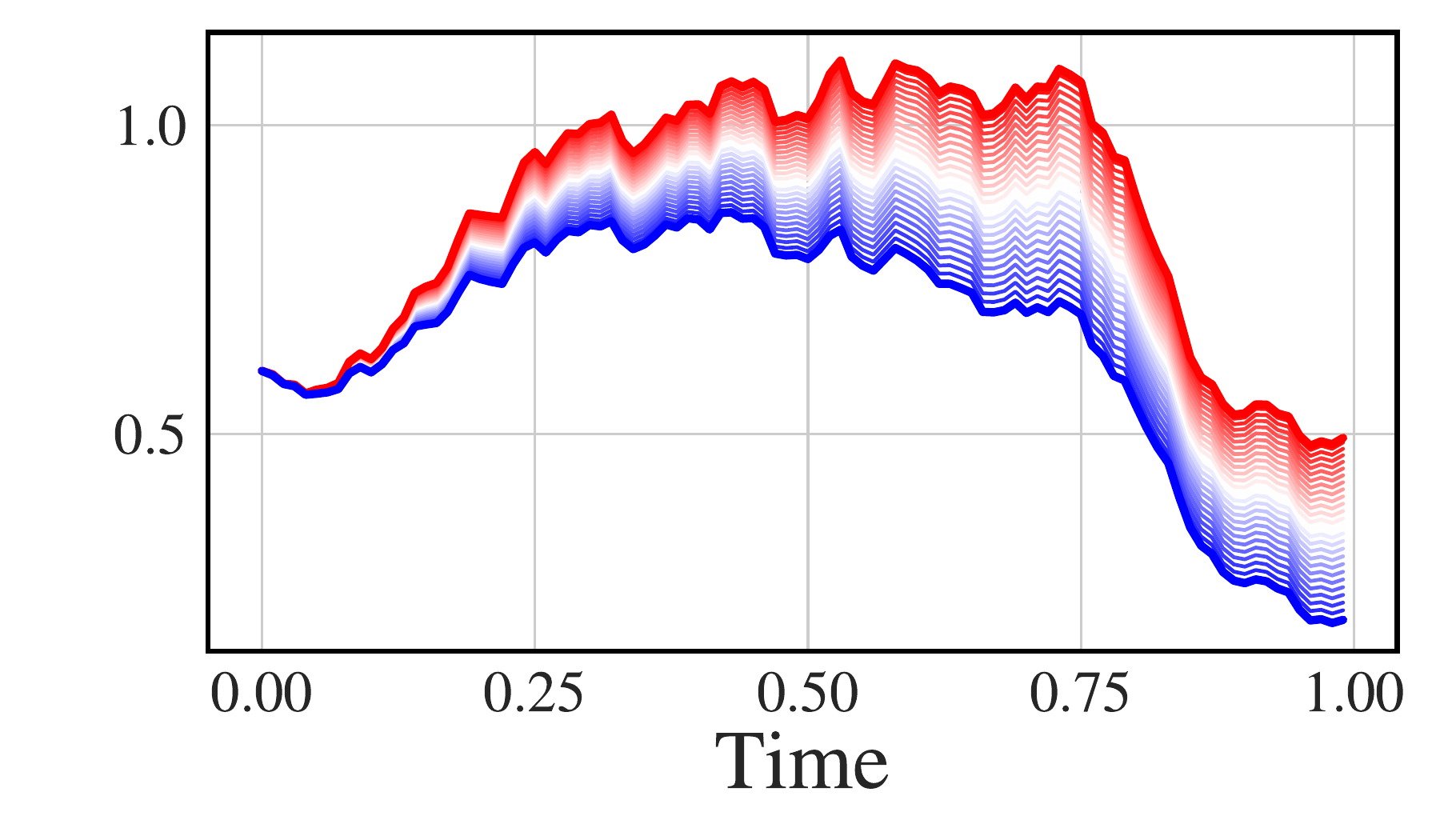}
    \includegraphics[width=0.329\textwidth]{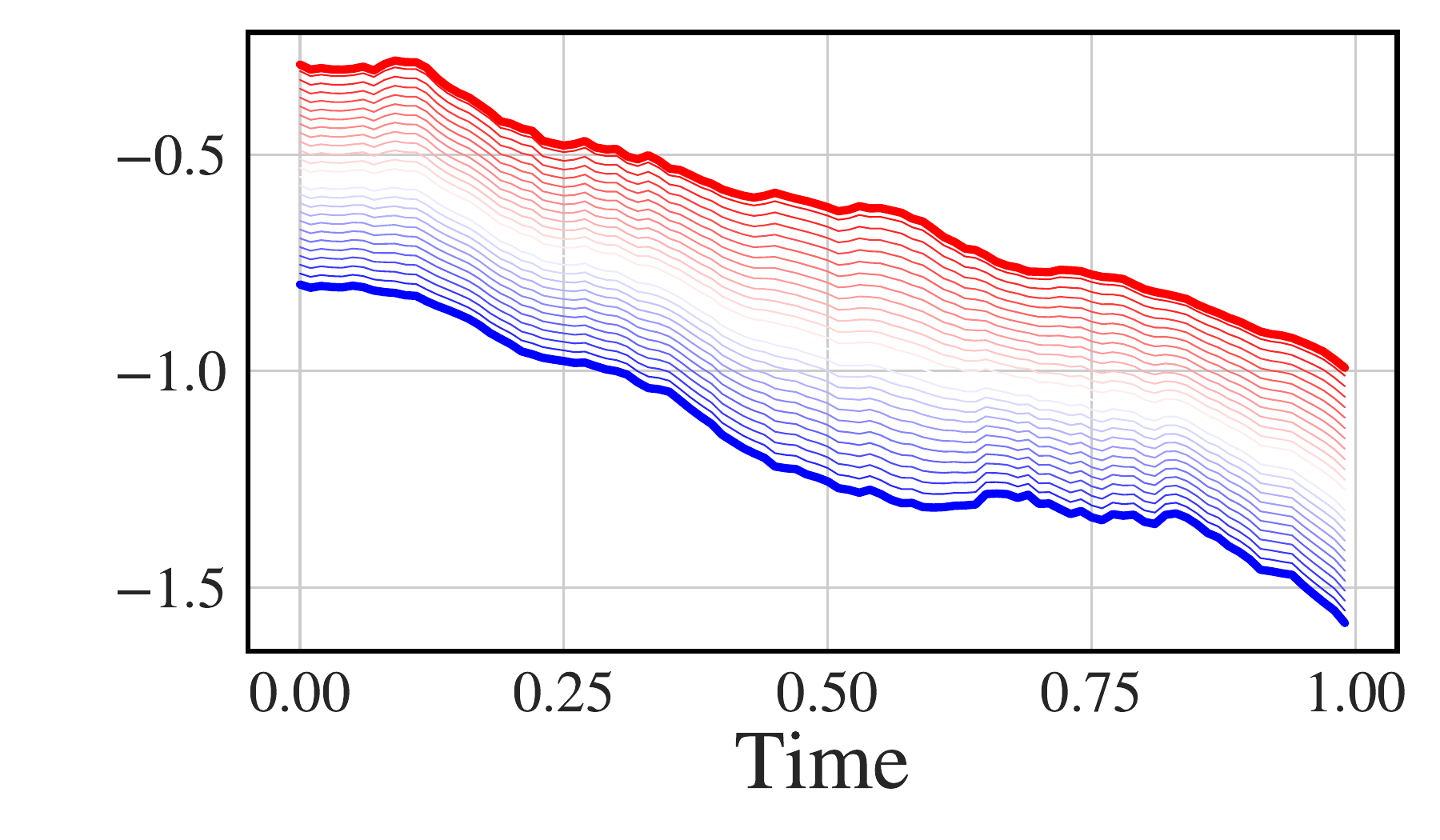}
    \includegraphics[width=0.329\textwidth]{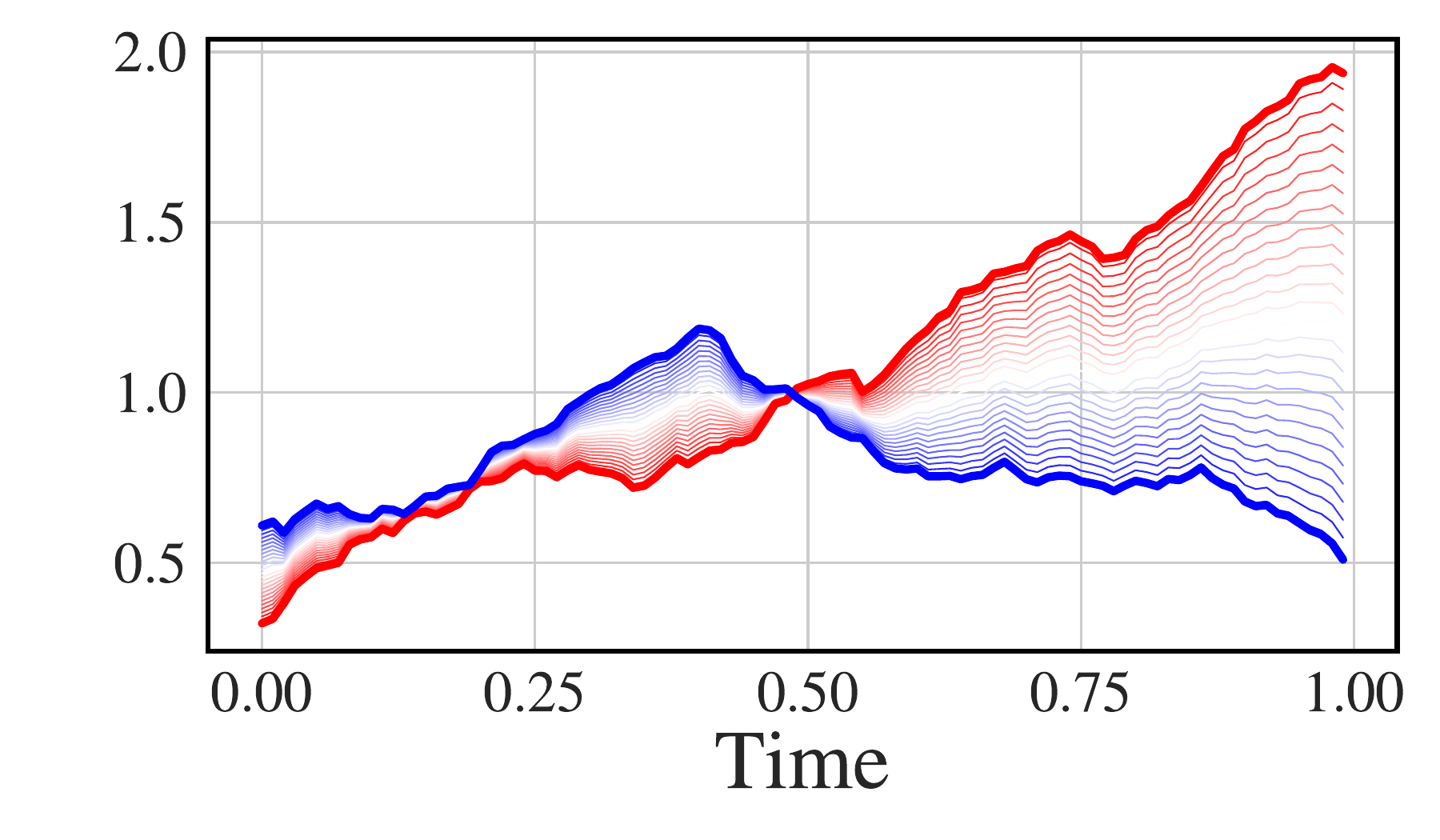}
    \caption{\textbf{Left:} in bold, we plot the solutions of two shallow $(q=1)$ NCDEs who only differ in their vector field $\mathbf{A}^1$ and $\mathbf{A}^2$. We then plot the solutions of the NCDEs with interpolated vector field $\delta \mathbf{A}^1 + (1-\delta)\mathbf{A}^2$ for $\delta \in [0,1]$ in red-blue gradient. \textbf{Center:} we consider a given NCDE and interpolate linearly between two initial conditions $x^1_0$ and $x^2_0$. \textbf{Right:} we consider a given NCDE and drive it with the linear interpolation of two paths $(x^1_t)_t$ and $(x^2_t)_t$. In all three cases, the solutions evolve continuously as we interpolate between the models.}
     \label{fig:flow_continuity}
\end{figure}

We first decompose the difference between the expected risk of  $\Hat{f}^D$ learnt from the sample $S$ and the expected risk of $\mathbf{f}^*$.

\begin{lemma}
\label{lemma:risk_decomposition}
    For all $f_\theta \in \mathcal{F}_\theta$, the total risk $\mathcal{R}^D(\Hat{f}^D) - \mathcal{R}(\mathbf{f}^*)$ is almost surely bounded from above by
    \begin{align*}
        & \sup\limits_{g \in \mathcal{F}^D_\Theta} \Big[\mathcal{R}^D(g) - \mathcal{R}_n^D(g) \Big] +  \sup\limits_{g \in \mathcal{F}_\Theta}\Big[\mathcal{R}(g) - \mathcal{R}_n(g)\Big] + \underbrace{\mathcal{R}_n^D(f_\theta) - \mathcal{R}_n(f_\theta)}_{\textnormal{Discretization bias}}  + \underbrace{\mathcal{R}(f_\theta) - \mathcal{R}(\mathbf{f}^*)}_{\textnormal{Approximation bias}}.
    \end{align*}
\end{lemma}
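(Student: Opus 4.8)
The statement is a purely deterministic (almost-sure) inequality: it holds pointwise on the realized sample, so no probabilistic machinery is needed. The plan is therefore to derive it from a telescoping decomposition of $\mathcal{R}^D(\hat f^D) - \mathcal{R}(\mathbf{f}^*)$ that interpolates between the two endpoints through a carefully ordered chain of intermediate risks, and then to bound each increment by one of the four target quantities. This is a standard estimation/approximation split, but adapted to the fact that the learned predictor is optimal for the \emph{discretized} empirical risk whereas the target $\mathbf{f}^*$ lives on the \emph{continuous} side.

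Concretely, I would insert the intermediate values $\mathcal{R}_n^D(\hat f^D)$, $\mathcal{R}_n^D(f_\theta)$, $\mathcal{R}_n(f_\theta)$, and $\mathcal{R}(f_\theta)$, in that order, and write
\begin{align*}
\mathcal{R}^D(\hat f^D) - \mathcal{R}(\mathbf{f}^*) &= \big[\mathcal{R}^D(\hat f^D) - \mathcal{R}_n^D(\hat f^D)\big] + \big[\mathcal{R}_n^D(\hat f^D) - \mathcal{R}_n^D(f_\theta)\big] \\
&\quad + \big[\mathcal{R}_n^D(f_\theta) - \mathcal{R}_n(f_\theta)\big] + \big[\mathcal{R}_n(f_\theta) - \mathcal{R}(f_\theta)\big] + \big[\mathcal{R}(f_\theta) - \mathcal{R}(\mathbf{f}^*)\big].
\end{align*}
Each bracket is then handled in isolation. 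The first bracket is bounded by $\sup_{g \in \mathcal{F}^D_\Theta}[\mathcal{R}^D(g) - \mathcal{R}_n^D(g)]$, since $\hat f^D \in \mathcal{F}^D_\Theta$. The third and fifth brackets are, by definition, exactly the discretization bias and the approximation bias, and are left untouched. The fourth bracket $\mathcal{R}_n(f_\theta) - \mathcal{R}(f_\theta)$ is bounded by the uniform deviation $\sup_{g \in \mathcal{F}_\Theta}[\mathcal{R}(g) - \mathcal{R}_n(g)]$ over the continuous class, using that $f_\theta \in \mathcal{F}_\Theta$; note this is a one-sided deviation, whose two orientations are controlled by the same Rademacher complexity, so the direction appearing in the statement is immaterial.

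The one genuinely load-bearing step is the second bracket, which is where the optimality of $\hat f^D$ enters. Here I would invoke the definition $\hat f^D \in \argmin_{\theta \in \Theta}\mathcal{R}_n^D(f_\theta)$: the arbitrary competitor $f_\theta$ is parameterized by some $\theta \in \Theta$ whose discretized realization also lies in $\mathcal{F}^D_\Theta$, so $\mathcal{R}_n^D(\hat f^D) \le \mathcal{R}_n^D(f_\theta)$ and this bracket is non-positive and may be dropped. The main subtlety — more a matter of bookkeeping than a true obstacle — is choosing the interpolation chain so that the optimality of $\hat f^D$, which is stated only for the discretized empirical risk, connects cleanly to the continuous predictor $\mathbf{f}^*$: the chain must pass through $\mathcal{R}_n^D(f_\theta)$ to exploit ERM optimality, then through $\mathcal{R}_n(f_\theta)$ to peel off the discretization bias, and only then cross to the continuous expected risks. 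It is precisely this ordering that makes the two bias terms emerge in the exact form claimed.
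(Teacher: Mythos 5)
Your proof is correct and follows essentially the same route as the paper: the identical telescoping chain through $\mathcal{R}_n^D(\hat f^D)$, $\mathcal{R}_n^D(f_\theta)$, $\mathcal{R}_n(f_\theta)$, $\mathcal{R}(f_\theta)$, dropping the ERM-optimality bracket and taking suprema over the two uniform-deviation brackets. The one point you flag (the orientation of $\mathcal{R}_n(f_\theta)-\mathcal{R}(f_\theta)$ versus the supremum of $\mathcal{R}(g)-\mathcal{R}_n(g)$) is handled the same way, implicitly, in the paper, so there is no substantive difference.
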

Turning to the total risk, we get the following result by bounding every term of Lemma \ref{lemma:risk_decomposition}.
\begin{theorem}
\label{thm:main_thm_2}
The total risk $\mathcal{R}^D(\Hat{f}^D) - \mathcal{R}(\mathbf{f}^*)$ is bounded from above by 
\begin{align*}
     & 4\max\{ \textnormal{Rad}(\mathcal{F}_\Theta),\textnormal{Rad}(\mathcal{F}^D_\Theta)\} \tag{Worst-case bound}\\
     & + L_\ell B_\Phi L_x \Bigg[1+\big(L_\sigma B_\mathbf{A}\big)^q \Big(\big(L_\sigma B_\mathbf{A}\big)^q \frac{\min\{M^D_\Theta, M_\Theta \}}{B_\Phi}+ \kappa_\Theta(\mathbf{0})\Big)L_x\Bigg]\abs{D} \tag{Discretization Bias}\\
     & + L_\ell B_\Phi \exp( L_{\mathbf{G}^\star}L_x)\Big[L_x
     \min\limits_{\psi}\max\limits_{u \in \Omega}\norm{\mathbf{G}_\psi(u) - \mathbf{G}^\star(u)}_{\textnormal{op}} +  \min\limits_{\mathbf{U},v} \max\limits_{\norm{u} \leq B_x} \norm{\varphi^\star(u)-\textnormal{NN}_{\mathbf{U},v}(u)}\Big]. \tag{Approximation Bias}
\end{align*}
\end{theorem}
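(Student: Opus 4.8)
The plan is to bound, one at a time, the four terms produced by the decomposition in Lemma~\ref{lemma:risk_decomposition}, which holds simultaneously for \emph{every} $f_\theta \in \mathcal{F}_\Theta$; the freedom to choose $f_\theta$ is exactly what will produce the two minima in the approximation bias. I would first handle the two supremum terms $\sup_{g \in \mathcal{F}^D_\Theta}[\mathcal{R}^D(g) - \mathcal{R}_n^D(g)]$ and $\sup_{g \in \mathcal{F}_\Theta}[\mathcal{R}(g) - \mathcal{R}_n(g)]$ by the standard symmetrization argument: taking expectation over the sample and applying Talagrand's contraction, using that $\ell$ is $L_\ell$-Lipschitz by Assumption~\ref{assumption:loss_assumption}, each term is controlled by twice the empirical Rademacher complexity of the corresponding predictor class, which is bounded in Proposition~\ref{prop:covering_nb_cont_ncde}. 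Summing the two contributions and crudely upper bounding by the larger of the two yields the worst-case term $4\max\{\textnormal{Rad}(\mathcal{F}_\Theta), \textnormal{Rad}(\mathcal{F}^D_\Theta)\}$.

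Next, for the discretization bias $\mathcal{R}_n^D(f_\theta) - \mathcal{R}_n(f_\theta)$, Lipschitzness of the loss reduces the problem to bounding $\abs{f_\theta(\mathbf{x}^D) - f_\theta(x)}$ for a single path, i.e. the gap between the fill-forward controlled ResNet iteration \eqref{eq:discret_NCDE} and the continuous CDE solution driven by the same parameters. Since $\abs{f_\theta(\mathbf{x}^D) - f_\theta(x)} \leq B_\Phi \norm{z^D_1 - z_1}$, I would track the latent-state discrepancy $\norm{z^D_t - z_t}$, writing both trajectories in integral form and splitting the error on each sampling sub-interval into a \emph{frozen vector field} contribution and a \emph{frozen state} contribution. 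Because $x$ is $L_x$-Lipschitz (Assumption~\ref{assumption:x_bounded_var}), each increment satisfies $\norm{x_{t_{k+1}} - x_{t_k}} \leq L_x \abs{D}$, so the local error per step is $O(\abs{D})$; a discrete Gronwall argument (Lemma~\ref{lemma:Gronwall}) then accumulates these into a global bound proportional to $\abs{D}$, with the product factors $(L_\sigma B_\mathbf{A})^q$ and $\kappa_\Theta(\mathbf{0})$ inherited from the Lipschitz and operator-norm bookkeeping already carried out in Lemma~\ref{lemma:bound_y_and_f}. Using $\min\{M^D_\Theta, M_\Theta\}/B_\Phi$ as the relevant bound on the latent state gives exactly the stated discretization-bias term.

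For the approximation bias $\mathcal{R}(f_\theta) - \mathcal{R}(\mathbf{f}^*)$, Lipschitzness of the loss again reduces matters to $\mathbb{E}\big[\abs{f_\theta(x) - \mathbf{f}^*(x)}\big]$, and the crucial ingredient is continuity of the CDE flow with respect to both the driving vector field and the initial condition. Here $f_\theta(x)$ and $\mathbf{f}^*(x)$ are linear readouts of two CDEs driven by the same path $x$ but with different vector fields $\mathbf{G}_\psi$ versus $\mathbf{G}^\star$ and different initial conditions $\textnormal{NN}_{\mathbf{U},v}(x_0)$ versus $\varphi^\star(x_0)$. I would bound $\norm{z_t - z^\star_t}$ by Gronwall, separating the initial-condition mismatch $\norm{\textnormal{NN}_{\mathbf{U},v}(x_0) - \varphi^\star(x_0)}$ from the vector-field mismatch, the latter entering through the uniform distance $\max_{u \in \Omega}\norm{\mathbf{G}_\psi(u) - \mathbf{G}^\star(u)}_{\textnormal{op}}$ integrated against $d\norm{x}_{\textnormal{1-var}} \leq L_x$, where $\Omega$ is the compact region containing the latent trajectories; the $L_{\mathbf{G}^\star}$-Lipschitzness of $\mathbf{G}^\star$ (Assumption~\ref{assumption:generative_model}) closes the Gronwall loop and produces the factor $\exp(L_{\mathbf{G}^\star}L_x)$. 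Since the decomposition holds for every $f_\theta$, I finally take the infimum over the hidden parameters $\psi$ and over the initialization weights $(\mathbf{U},v)$ separately, turning the two mismatch terms into the stated minima.

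The main obstacle is the pair of flow-stability estimates underlying the discretization and approximation biases. The discretization estimate is in essence a global error analysis for a forward-Euler-type scheme of a CDE with piecewise-constant control, where care is needed so that the accumulated local errors collapse to a clean $O(\abs{D})$ bound rather than degrading with the number of steps $K$; the approximation estimate is a two-parameter perturbation of the CDE flow that must simultaneously absorb the initial-condition and the vector-field perturbations while keeping both latent states inside a common compact set $\Omega$ on which the uniform vector-field distance is controlled. Both hinge on Gronwall's Lemma, but matching the exact constants in the statement, the $(L_\sigma B_\mathbf{A})^q$, $\kappa_\Theta(\mathbf{0})$ and $\exp(L_{\mathbf{G}^\star}L_x)$ factors, requires reusing the operator-norm estimates from Lemmas~\ref{lemma:bound_y_and_f} and~\ref{lemma:generative_model_bound}.
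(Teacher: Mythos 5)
Your decomposition and the treatment of the worst-case and approximation terms match the paper's proof: Lemma~\ref{lemma:risk_decomposition}, symmetrization into $4\max\{\textnormal{Rad}(\mathcal{F}_\Theta),\textnormal{Rad}(\mathcal{F}^D_\Theta)\}$, a Gronwall-type flow perturbation for the approximation bias, and minimization over the free predictor at the end. Where you genuinely diverge is the discretization bias. The paper does \emph{not} run a forward-Euler truncation-error analysis: it observes that $z^D$ is the exact solution of a CDE driven by the piecewise-constant fill-forward path $\Tilde{\mathbf{x}}$, so both biases reduce to the \emph{same} flow-continuity statement (Theorem~\ref{thm:flow_continuity}), instantiated once with perturbed driving paths (using $\norm{x-\Tilde{\mathbf{x}}}_{\infty}\leq L_x\abs{D}$, with the $\norm{\cdot}_\infty$ dependence extracted by integration by parts against the total variation of the latent state from Proposition~\ref{prop:total_variation_solution_cde}) and once with perturbed vector fields and initial conditions. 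Your numerical-analysis route is workable but buys you extra bookkeeping; the paper's route buys a single reusable perturbation lemma and constants that match the statement by construction.

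Two points need tightening in your sketch. First, as written your local error per step is $O(\abs{D})$, which after summing over $K$ steps gives $O(K\abs{D})$, not $O(\abs{D})$; the correct local estimate is $O(\Delta_k\abs{D})$ (the path increment $O(L_x\Delta_k)$ times the intra-interval state drift $O(L_x\abs{D})$), whose sum telescopes to $O(\abs{D})$ on an arbitrary grid. You flag this as ``care is needed'' but do not resolve it, and without it the discretization term does not close. Second, in the approximation bias you silently identify the readouts: $f_\theta$ uses $\Phi$ and $\mathbf{f}^\star$ uses $\Phi_\star$, so the comparison produces an extra term proportional to $\norm{\Phi-\Phi_\star}$; the paper kills it by noting that $\norm{\Phi_\star}\leq B_\Phi$ puts $\Phi_\star$ inside the admissible set, so $\min_\Phi\norm{\Phi-\Phi_\star}=0$. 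You should either choose $\Phi=\Phi_\star$ explicitly or include $\Phi$ among the parameters you minimize over.
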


Let us comment the three terms of the bound of Theorem \ref{thm:main_thm_2}. The first term is a common worst case bound when deriving generalization bounds and is bounded using Proposition \ref{prop:covering_nb_cont_ncde}. The second term corresponds to the discretization bias, and is directly proportional to $\abs{D}$. It therefore vanishes at the same speed as $\abs{D}$. For instance, if we consider the sequence $D_K$ of equidistant discretizations of $[0,1]$ with $K$ points, the discretization bias vanishes at linear speed. Finally, the approximation bias writes as the sum of approximation errors on the vector fields and the initial condition, rather than a general approximation error on the true predictor $\mathbf{f}^\star$. The diameter of $\Omega$ is upper bounded in Proposition \ref{prop:approximation_bias}.    

\begin{remark}
    This decomposition of the approximation bias allows to leverage approximation results for feed forward neural networks to obtain precise bounds \citep[see for instance][]{berner2021modern, shen2021neural}. This is left for future work. 
\end{remark}

\paragraph{Outline of the proof.} The proof schematically works in two times. We first bound the two sources of bias by leveraging the continuity of the flow of a CDE. Informally, this theorem states that the difference $\Delta(\text{terminal value})$ between the terminal value of two CDEs decomposes as
\[
\Delta(\text{vector fields}) + \Delta(\text{initial conditions}) + \Delta(\text{driving paths}).
\]
This neat decomposition allows us to bound the discretization bias since it depends on the terminal value of two CDEs whose driving paths differ. This property is illustrated in Figure \ref{fig:flow_continuity}. Combining these results with Theorem \ref{thm:generalization_bound} gives Theorem \ref{thm:main_thm_2}. 

\label{section:bounding_biases}

\section{Numerical Illustrations}

\label{section:numericals}

\begin{figure}
    \centering
    \includegraphics[width=0.329\textwidth]{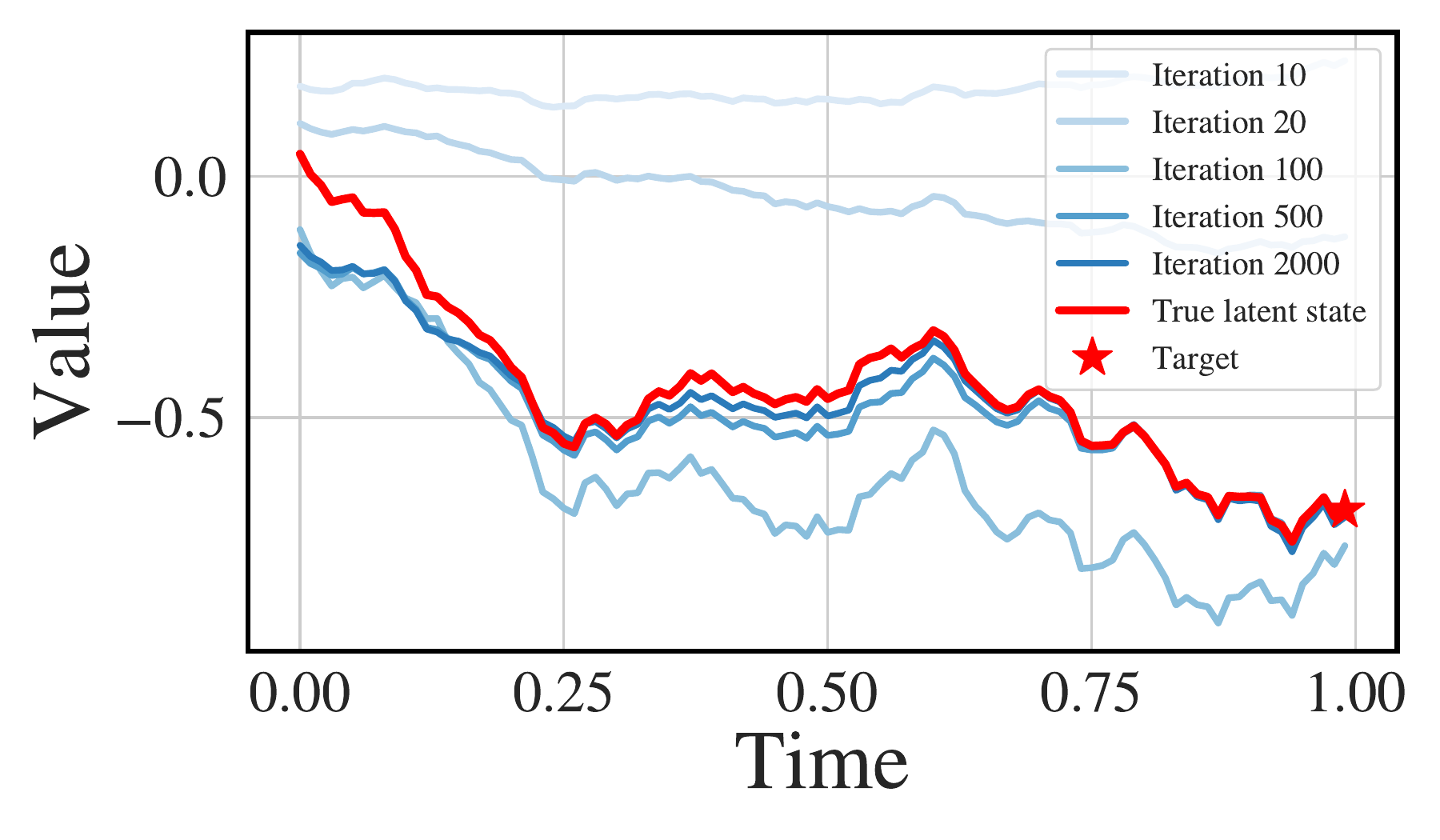}
    \includegraphics[width=0.329\textwidth]{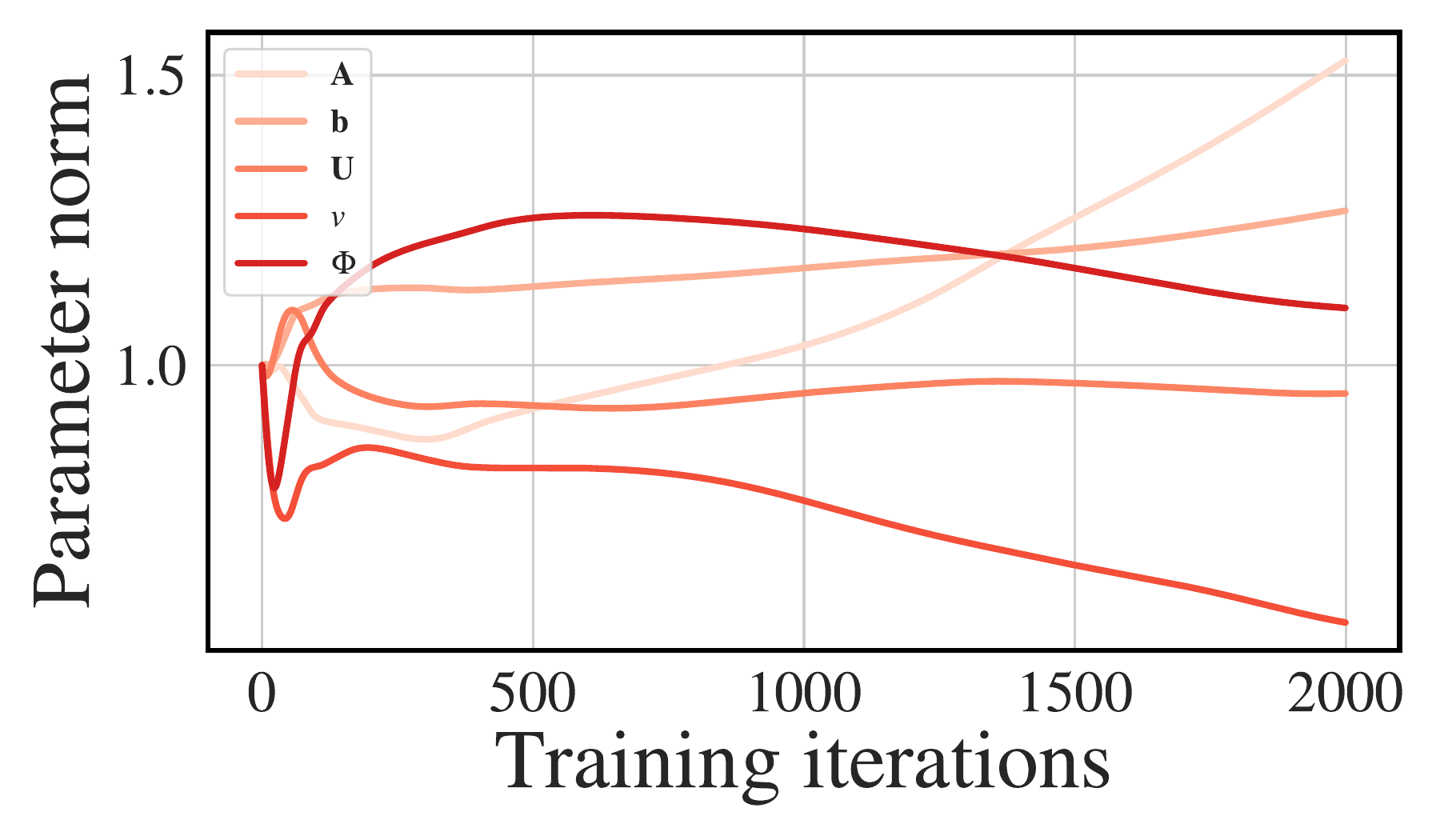}
    \includegraphics[width=0.329\textwidth]{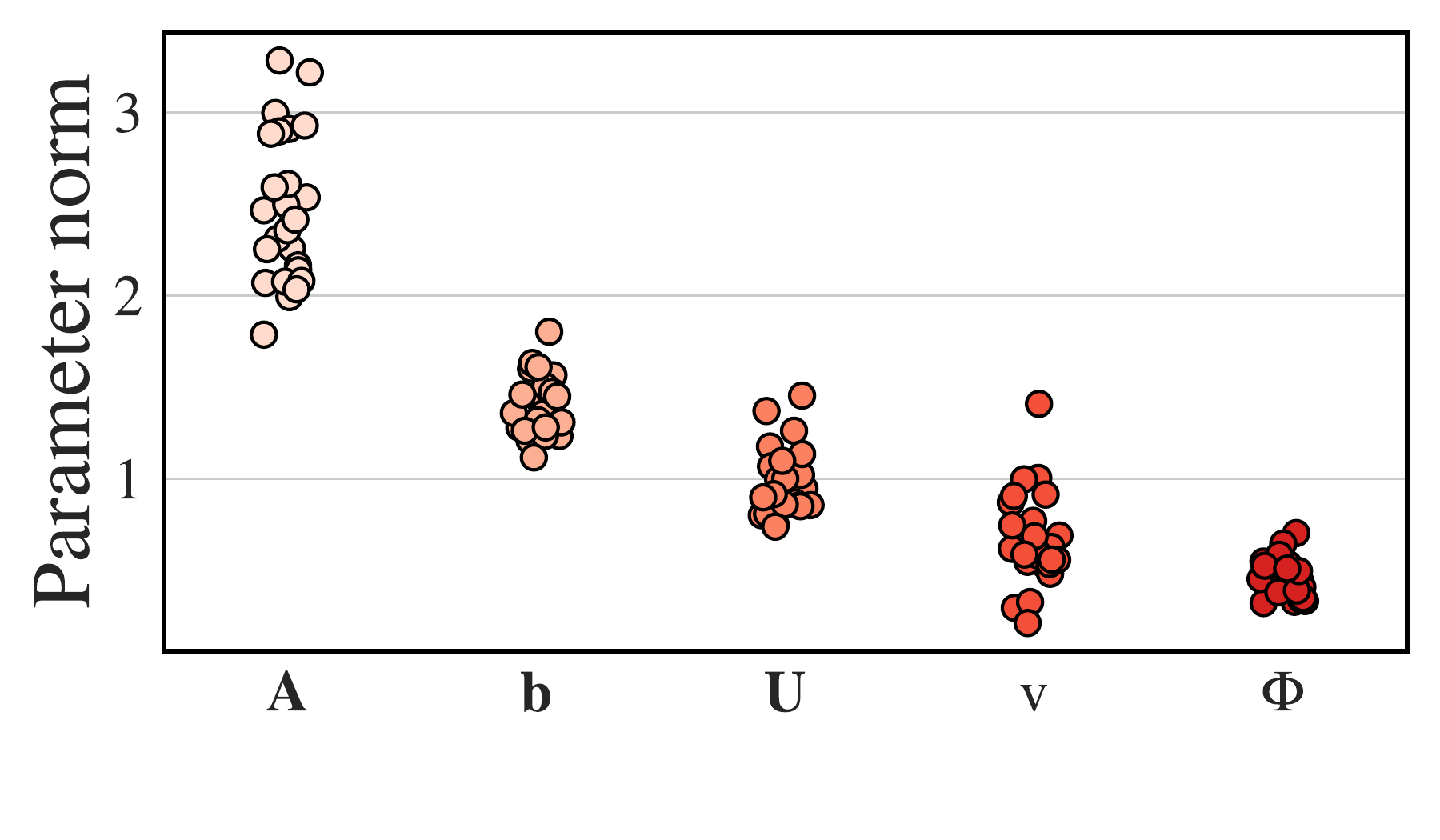}
     
    \caption{\textbf{Left}: Evolution of the prediction $(\Phi^\top z_t)_t$ of a NCDE with shallow neural vector field at different moments of the training process ; the true process $(\Phi_\star^\top z^\star_t)_t$ is shown in red. \textbf{ Center:} Evolution of the parameter's norms, normalized by their value at initialization, during training. \textbf{Right:} Parameters' norm at the end of training, \textit{without} normalization, over $25$ training instances of the same model. Training is performed with Adam \citep{kingma2014adam}.}
    \label{fig:single_training}
\end{figure}

\begin{figure}
    \centering
    
    \includegraphics[width=0.329\textwidth]{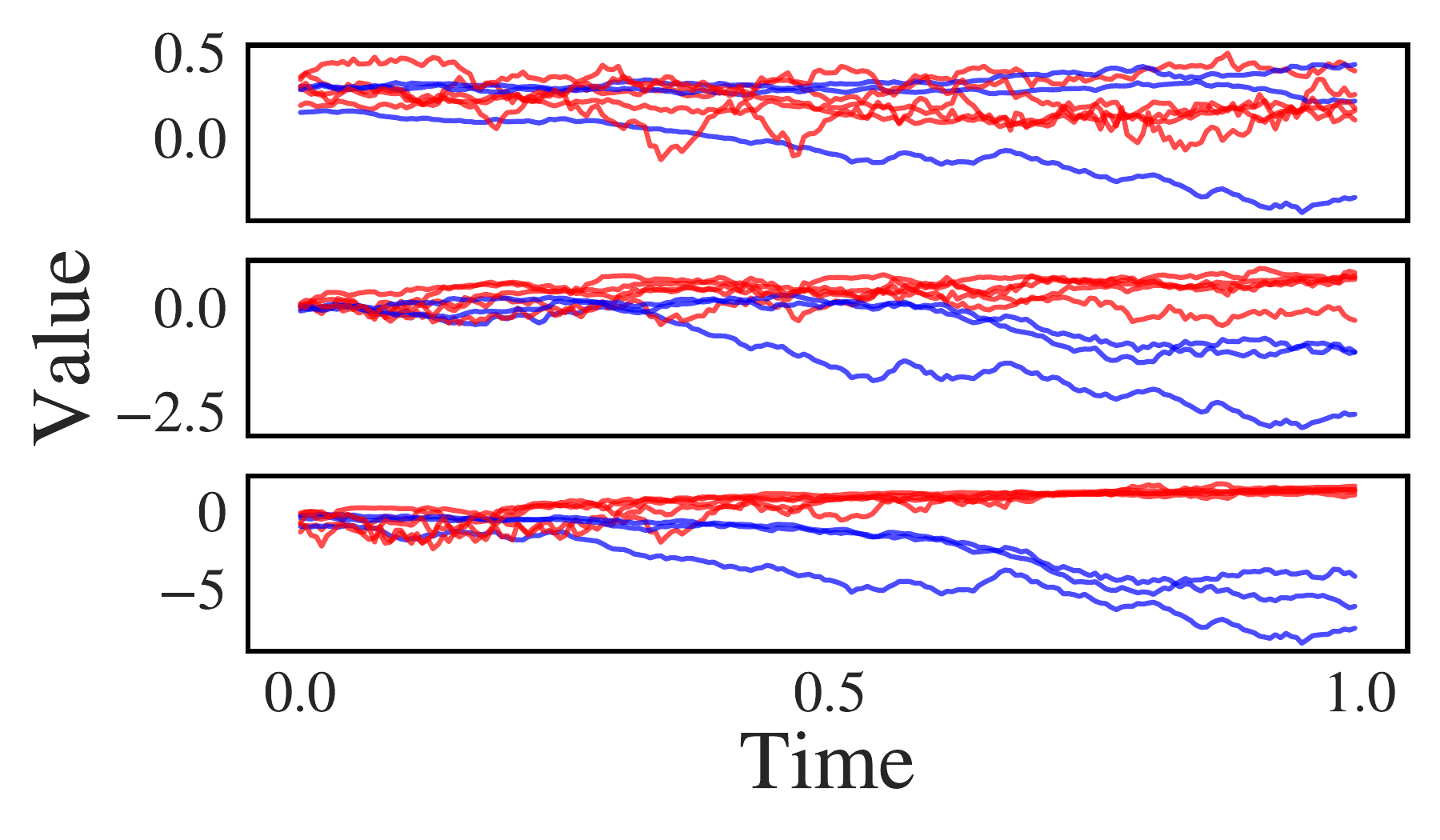}
    \includegraphics[width=0.329\textwidth]{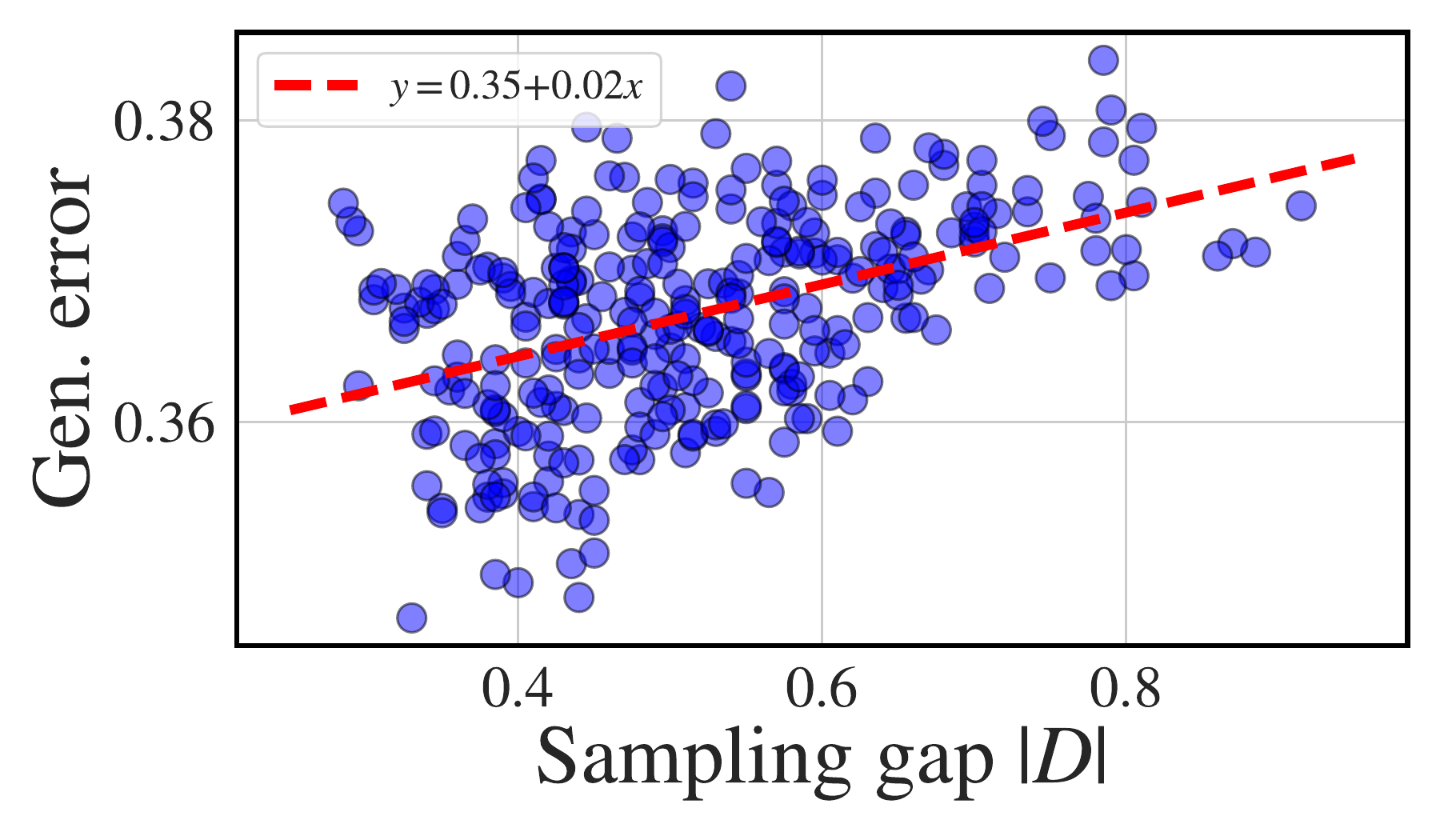}
    \includegraphics[width=0.329\textwidth]{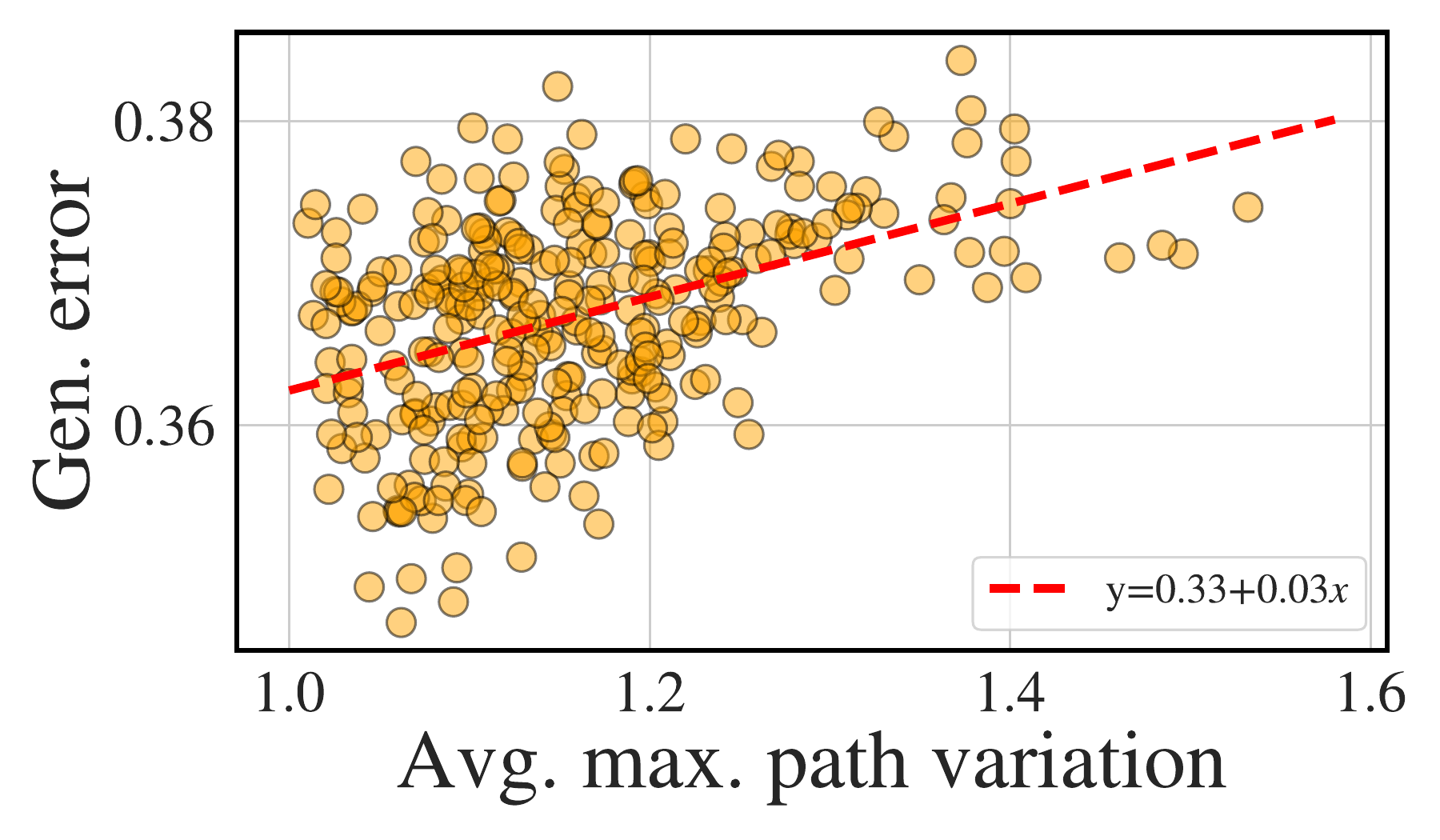}
    \caption{\textbf{Left:} latent state $(\Phi^\top z_t)_t$ at initialization (top) and after $10$ (middle) resp. $100$ (bottom) training steps as the NCDE learns to separate rough (in red) from smooth (in blue) paths on a fine grid. The figure shows examples from the test set. \textbf{Center}: Generalization error $\abs{n^{-1}\sum \ell(\mathbf{x}^{D,i},y^i) - \mathbb{E}_{\mathbf{x}^D,y}\, \ell(\mathbf{x}^D,y)}$ vs. sampling gap. \textbf{Right}: Generalization error vs. average maximal path variation. We train $300$ NCDEs, with $\Phi$ and the initialization layer left untrained to isolate the effect of the discretization, with time series downsampled on $K=5$ points randomly chosen in $[0,1]$ for each run. Training is done with Adam with default parameters \citep{kingma2014adam}.}
    \label{fig:sampling}
\end{figure}

\paragraph{Training Dynamics.} Our proofs rely on the restriction that all considered predictors, including $\Hat{f}^D$ obtained by ERM, have parameters lying in a bounded set $\Theta$. To check whether this hypothesis is reasonable, we examine the evolution of the parameters' norms during training. Figure \ref{fig:single_training} displays the evolution of the latent state of a shallow model ($q=1$) through training, along with the evolution of the normalized parameter norm, in a teacher-student i.e. well-specified setup. The trained NCDE achieves perfect interpolation of the outcome but also, more surprisingly, almost perfect interpolation of the unobserved latent state. We also observe little dispersion of the parameters' norm over multiple training instances. 

\paragraph{Effect of the Discretization.} Our bounds highlight the effect of the discretization gap $\abs{D}$ on the generalization capacities of NCDEs. In our second experiment, presented in Figure \ref{fig:sampling}, we consider a classification task in which a NCDE is trained to distinguish between rough ($H=0.4$) and smoother ($H=0.6$) discretized sample paths of a fractional Brownian Motion (fBM) with Hurst exponent $H$. We randomly downsample the paths, train the model and compute the generalization error in terms of binary cross entropy loss. We see a clear correlation between the generalization error and the sampling gap $\abs{D}$. Since our bound involving $\abs{D}$ relies on the inequality $$\max_{k=1,\dots, K} \norm{\mathbf{x}^{D,i}_{t_{k+1}}-\mathbf{x}^{D,i}_{t_{k}}} \leq \max_{k=1,\dots, K} L_x \abs{t_{k+1}-t_k} \leq L_x\abs{D},$$ we also check for correlation with the average maximal path variation $$n^{-1}\sum\limits_{i=1}^n \max_{k=0,\dots, K} \norm{\mathbf{x}^{D,i}_{t_{k+1}}-\mathbf{x}^{D,i}_{t_{k}}}.$$ As predicted, this quantity also correlates with the generalization error. This is consistent with the findings of \cite{marion2023generalization}, who observe that the generalization error of ResNets scales with the Lipschitz constant of the weights (see Appendix \ref{appendix:resnet_vs_ncde} for a detailed discussion). All experimental details are given in Appendix \ref{appendix:experimental_details}.

\section{Conclusion}

We have answered three open theoretical questions on NCDE. First, we have provided a generalization bound for this class of models. In a second time, we have shown that under the hypothesis of CDE-based well-specified model, the approximation bias of NCDE decomposes into a sum of tractable biases. Most importantly, all our results are discretization-dependent and allow to precisely quantify the theoretical impact of the time series sampling on the model's performance. Practical implications for the design of NCDEs and control theory are discussed in Appendix \ref{appendix:implications}.

\paragraph{Acknowledgements.} We thank Pierre Marion and Adeline Fermanian for proofreading and providing insightful comments and remarks, and Cristopher Salvi for discussions on Controlled ResNets. LB conducted parts of this research while visiting MBZUAI's Machine Learning Department (UAE).

\bibliography{refs}
\bibliographystyle{iclr2024_conference}

\newpage

\appendix

The appendix is structured as follows. Appendix \ref{appendixA} gives preliminary results on CDEs and covering numbers. Appendix \ref{appendix:generalization} collects the proofs of the generalization bound. Appendix \ref{appendix_biais} details the proof of the bounds of the approximation and discretization biases and the bound on the total risk. Proofs of Appendix \ref{appendixA} are given in Appendix \ref{appendix_supp} for the sake of clarity. Appendix \ref{appendix:experimental_details} gives all experimental details.

\section{Mathematical background}
\label{appendixA}

\subsection{Riemann-Stieltjes integral}

\label{appendix:rs_integral}

\begin{lemma}
    Let $f:[0,1] \to \mathbb{R}^{p \times d} $ be a continuous function, and $g : [0,1] \to \mathbb{R}^d$ be of finite total variation. Then Riemann-Stieljes integral
    \[
    \int_0^t f_s dg_s \in \mathbb{R}^p,
    \]
    where the product is understood in the sense of matrix-vector multiplication, is well-defined and finite for every $t \in [0,1]$.
\end{lemma}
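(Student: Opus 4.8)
The plan is to reduce the matrix–vector integral to the classical scalar Riemann–Stieltjes existence theorem and then invoke the standard Darboux-type argument. First I would observe that the matrix–vector product expands entry-wise: the $j$-th component of $\int_0^t f_s\,dg_s$ equals $\sum_{k=1}^d \int_0^t (f_s)_{jk}\,d(g_s)_k$, where each scalar integrand $(f_s)_{jk}:[0,1]\to\mathbb{R}$ is continuous and each scalar integrator $(g_s)_k:[0,1]\to\mathbb{R}$ inherits finite total variation from $g$. Hence it suffices to establish existence and finiteness of the scalar integral $\int_0^t \phi_s\,d\gamma_s$ for a continuous $\phi$ and a scalar $\gamma$ of finite total variation, after which the vector-valued claim follows by linearity, a finite sum of finite quantities being finite.

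For the scalar case, I would first apply the Jordan decomposition to write $\gamma = \gamma^+ - \gamma^-$ as a difference of two non-decreasing functions, which is possible precisely because $\gamma$ has finite total variation. By linearity of the Riemann–Stieltjes sums it then suffices to treat a non-decreasing integrator $\gamma$. For such $\gamma$, I would introduce the upper and lower Darboux–Stieltjes sums associated to a partition $P = \{0 = s_0 < \dots < s_N = t\}$,
\[
U(P) = \sum_{i=0}^{N-1} M_i\,(\gamma_{s_{i+1}} - \gamma_{s_i}), \qquad L(P) = \sum_{i=0}^{N-1} m_i\,(\gamma_{s_{i+1}} - \gamma_{s_i}),
\]
where $M_i$ and $m_i$ denote the supremum and infimum of $\phi$ on $[s_i,s_{i+1}]$. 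Every tagged Riemann–Stieltjes sum lies between $L(P)$ and $U(P)$, so it suffices to squeeze these two to a common value.

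The key step, and the main (if mild) obstacle, is to show that $U(P) - L(P)$ can be made arbitrarily small. Here I would use that $\phi$ is continuous on the compact interval $[0,1]$, hence uniformly continuous: given $\epsilon > 0$ there is $\delta > 0$ with $\abs{\phi_s - \phi_{s'}} < \epsilon$ whenever $\abs{s - s'} < \delta$. For any partition of mesh smaller than $\delta$ one then has $M_i - m_i \leq \epsilon$, whence
\[
U(P) - L(P) = \sum_{i=0}^{N-1}(M_i - m_i)(\gamma_{s_{i+1}} - \gamma_{s_i}) \leq \epsilon\,(\gamma_t - \gamma_0) \leq \epsilon\,V_{[0,t]}(\gamma),
\]
using that $\gamma$ is non-decreasing, where $V_{[0,t]}(\gamma)$ is the total variation. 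Since $\epsilon$ is arbitrary, the upper and lower integrals coincide and their common value defines $\int_0^t \phi_s\,d\gamma_s$. Finiteness is immediate from the uniform bound $\abs{\int_0^t \phi_s\,d\gamma_s} \leq \big(\sup_{s \in [0,1]}\abs{\phi_s}\big)\,V_{[0,t]}(\gamma) < \infty$, since $\phi$ is bounded as a continuous function on a compact set and $\gamma$ has finite total variation by hypothesis. Reassembling the $p$ components and the $d$ summands then yields the vector-valued statement.
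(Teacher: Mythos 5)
Your argument is correct and complete. It is worth noting that the paper does not actually prove this lemma: it simply refers the reader to Friz and Victoir's monograph, treating the statement as a classical textbook fact about the Riemann--Stieltjes integral. What you have written is the standard existence proof in full --- componentwise reduction of the matrix--vector integral to $pd$ scalar integrals, Jordan decomposition of the bounded-variation integrator into a difference of monotone functions, and the Darboux squeeze $U(P)-L(P)\le \epsilon\,(\gamma_t-\gamma_0)$ obtained from uniform continuity of the integrand on the compact interval, together with the bound $\abs{\int_0^t\phi_s\,d\gamma_s}\le \big(\sup_{s}\abs{\phi_s}\big)\,V_{[0,t]}(\gamma)$ for finiteness. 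The only step you leave implicit, and which you would want to state in a fully rigorous write-up, is the standard monotonicity of Darboux--Stieltjes sums under refinement (upper sums decrease, lower sums increase), which is what converts ``the gap $U(P)-L(P)$ can be made small'' into actual convergence of the net of tagged Riemann sums to a common limit; for a continuous integrand against a monotone integrator this is routine. In short, your self-contained elementary proof is a valid and more informative substitute for the paper's citation, at the cost of a page of classical analysis that the authors chose to outsource.
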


We refer to \citet{friz2010multidimensional} for a thorough introduction to the Riemann-Stieltjes integral. 

\subsection{Picard-Lindelöf Theorem}
\label{appendix:pl_theorem}

\begin{theorem}\label{thm:pl_theorem}
    Let $x$ be a continuous path of bounded variation, and assume that $\mathbf{G}:\mathbb{R}^p \to \mathbb{R}^{p \times d}$ is Lipschitz continuous. Then the CDE
    \[
    dz_t = \mathbf{G}(z_t)dx_t
    \]
    with initial condition $z_0 = \varphi^\star(x_0)$ has a unique solution. 
\end{theorem}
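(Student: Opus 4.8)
The plan is to recast the CDE as a fixed-point problem and invoke the Banach fixed-point theorem, which is the standard route for Picard--Lindel\"of type results. First I would rewrite the differential equation in integral form: a continuous path $z$ solves the CDE with initial condition $z_0 = \varphi^\star(x_0)$ if and only if
\[
z_t = z_0 + \int_0^t \mathbf{G}(z_s)\,dx_s \quad \text{for all } t \in [0,1],
\]
where the integral is understood in the Riemann--Stieltjes sense. Since $\mathbf{G}$ is Lipschitz (hence continuous) and $z$ is continuous, the integrand $s \mapsto \mathbf{G}(z_s)$ is continuous, so by the lemma of Appendix \ref{appendix:rs_integral} this integral is well defined and finite whenever $x$ has bounded variation. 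I then introduce the Picard operator $\mathcal{T}$ on the Banach space $C([0,1],\mathbb{R}^p)$ by $(\mathcal{T}z)_t := z_0 + \int_0^t \mathbf{G}(z_s)\,dx_s$, whose fixed points are exactly the solutions of the CDE.

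Next I would verify that $\mathcal{T}$ maps continuous paths to continuous paths: for $t' \leq t$ one has $\norm{(\mathcal{T}z)_t - (\mathcal{T}z)_{t'}} \leq \sup_s \norm{\mathbf{G}(z_s)}_{\textnormal{op}}\, \norm{x}_{\textnormal{1-var},[t',t]}$, and this tends to $0$ as $t' \to t$ because the variation function $t \mapsto \norm{x}_{\textnormal{1-var},[0,t]}$ is continuous for a continuous path of bounded variation, while $\sup_s \norm{\mathbf{G}(z_s)}_{\textnormal{op}}$ is finite since $\mathbf{G} \circ z$ is continuous on the compact set $[0,1]$. Thus $\mathcal{T}$ is a well-defined self-map of $C([0,1],\mathbb{R}^p)$.

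The core estimate is the contraction bound. For two continuous paths $z,w$, using that the Riemann--Stieltjes integral is controlled by the total variation of the driver and that $\mathbf{G}$ is $L_\mathbf{G}$-Lipschitz,
\[
\norm{(\mathcal{T}z)_t - (\mathcal{T}w)_t} \leq \int_0^t \norm{\mathbf{G}(z_s) - \mathbf{G}(w_s)}_{\textnormal{op}}\, d\norm{x}_{\textnormal{1-var},[0,s]} \leq L_\mathbf{G} \int_0^t \norm{z_s - w_s}\, d\norm{x}_{\textnormal{1-var},[0,s]},
\]
so $\mathcal{T}$ has contraction factor $L_\mathbf{G}\norm{x}_{\textnormal{1-var},[0,t]}$ over $[0,t]$. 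To promote this to a global contraction on $[0,1]$, I would equip $C([0,1],\mathbb{R}^p)$ with the weighted (Bielecki-type) norm $\norm{z}_\lambda := \sup_{t\in[0,1]} e^{-\lambda\norm{x}_{\textnormal{1-var},[0,t]}}\norm{z_t}$, which is equivalent to the sup norm; iterating the estimate above shows that choosing $\lambda > L_\mathbf{G}$ makes $\mathcal{T}$ a strict contraction in $\norm{\cdot}_\lambda$. The Banach fixed-point theorem then yields a unique fixed point, and hence a unique solution on all of $[0,1]$. An alternative to the weighted norm is to partition $[0,1]$ into finitely many subintervals on each of which $L_\mathbf{G}\norm{x}_{\textnormal{1-var}} < 1$ (possible since $x$ has finite total variation), solve successively, and glue, with uniqueness propagating across the gluing points.

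The main obstacle will be the careful handling of the Riemann--Stieltjes integral, specifically justifying the variation bound $\norm{\int_0^t f_s\, dx_s} \leq \int_0^t \norm{f_s}_{\textnormal{op}}\, d\norm{x}_{\textnormal{1-var},[0,s]}$ that underlies the contraction estimate, together with confirming that the weighted-norm argument (or the partition-and-glue alternative) correctly upgrades the local contraction to a genuinely global existence-and-uniqueness statement. Everything else is routine once this estimate is in hand.
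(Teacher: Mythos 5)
Your proposal is correct and follows the standard Picard/Banach fixed-point route (integral reformulation, contraction estimate against the variation measure of $x$, then either a Bielecki weighted norm or a partition-and-glue step to globalize), which is essentially the same argument as in the reference \citet{fermanian2021framing} that the paper cites in lieu of giving its own proof. The only cosmetic remark is that with the weighted norm $\norm{z}_\lambda$ no iteration is needed: a single application of the estimate together with $\int_0^t e^{\lambda \norm{x}_{\textnormal{1-var},[0,s]}}\,d\norm{x}_{\textnormal{1-var},[0,s]} \leq \lambda^{-1} e^{\lambda \norm{x}_{\textnormal{1-var},[0,t]}}$ already gives the contraction factor $L_\mathbf{G}/\lambda < 1$.
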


A full proof can be found in \citet{fermanian2021framing}. Remark that since in our setting, NCDEs are Lipschitz since the neural vector fields we consider are Lipschitz \citep{virmaux2018lipschitz}. This ensures that the solutions to NCDEs are well defined.

We also need the following variation, which ensures that the CDE driven by $\Tilde{\mathbf{x}}$ is well-defined.

\begin{lemma}
    Let $x$ be a piecewise constant and right-continuous path taking finite values in $\mathbb{R}^d$, with a finite number of discontinuities at $0<t_1,\dots, t_{K} = 1$, i.e.
    \[
    \lim\limits_{t \to t^+_i} x_t = x_{t_i}
    \]
    and 
    \[
    x_t = x_{t_i}
    \]
    for all $t \in [t_i,t_{i+1}[$, for all $i=1,\dots, K-1$. Assume that $\mathbf{G}:\mathbb{R}^p \to \mathbb{R}^{p \times d}$ is Lipschitz continuous. Then the CDE
    \[
    dz_t = \mathbf{G}(z_t)dx_t
    \]
    with initial condition $z_0 = \varphi^\star(x_0)$ has a unique solution.
\end{lemma}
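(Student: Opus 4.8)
The plan is to exploit the piecewise constancy of $x$ to reduce the CDE to a finite recursion that can be solved explicitly. Since $x$ is constant on each interval $[t_i, t_{i+1})$, its associated Stieltjes measure $dx$ is supported on the finite jump set $\{t_1, \dots, t_K\}$, carrying an atom $\Delta x_i := x_{t_i} - x_{t_{i-1}}$ at $t_i$ (writing $t_0 = 0$, $x_{t_0} = x_0$, and noting that the left limit of $x$ at $t_i$ equals $x_{t_{i-1}}$ since $x \equiv x_{t_{i-1}}$ on $[t_{i-1}, t_i)$). Consequently, for any candidate solution the integral $\int_0^t \mathbf{G}(z_s)\,dx_s$ can increase only at the jump times, forcing the solution itself to be piecewise constant on the same partition.

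First I would construct a solution explicitly. Set $z_t = z_0 = \varphi^\star(x_0)$ for $t \in [0, t_1)$ and define recursively $z_{t_i} = z_{t_{i-1}} + \mathbf{G}(z_{t_{i-1}})\Delta x_i$, extending $z$ to be constant equal to $z_{t_i}$ on $[t_i, t_{i+1})$. This is exactly the controlled-ResNet recursion \eqref{eq:discret_NCDE}. Each step is well-defined and single-valued because $\mathbf{G}$, being Lipschitz, is defined on all of $\mathbb{R}^p$; hence the construction terminates after $K$ steps and yields a genuine piecewise-constant path $(z_t)_{t \in [0,1]}$. I would then verify that this path satisfies the integral equation $z_t = z_0 + \int_0^t \mathbf{G}(z_s)\,dx_s$ by evaluating the Riemann-Stieltjes integral of the step integrand $\mathbf{G}(z_s)$ against the step integrator $x_s$ and checking that it reproduces the telescoping sum $\sum_{i : t_i \leq t} \mathbf{G}(z_{t_{i-1}})\Delta x_i$.

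The main obstacle is that $z$ (hence $\mathbf{G}(z)$) and $x$ share their discontinuities at the points $t_i$, so the Riemann-Stieltjes integral is not automatically well-defined: the classical existence criterion requires integrand and integrator to have no common point of discontinuity. I would resolve this by adopting the forward (left-point) convention, evaluating the integrand at $\mathbf{G}(z_{t_i^-}) = \mathbf{G}(z_{t_{i-1}})$. This is the causal choice consistent with a differential equation — the state at time $t$ should depend only on the past — and it is precisely the convention implicit in \eqref{eq:discret_NCDE} and throughout the paper. Under this convention the integral against each atom contributes $\mathbf{G}(z_{t_{i-1}})\Delta x_i$, matching the recursion, so existence follows.

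Finally, for uniqueness I would argue interval by interval. Any solution $w$ must be constant on each $[t_i, t_{i+1})$ because $dx$ vanishes there, and must jump by $\mathbf{G}(w_{t_{i-1}})\Delta x_i$ at $t_i$ under the same left-point convention. Starting from the shared initial condition $w_0 = z_0 = \varphi^\star(x_0)$, an induction on $i$ forces $w_{t_i} = z_{t_i}$ for every $i$, whence $w \equiv z$. This establishes both existence and uniqueness of the solution.
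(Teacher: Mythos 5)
Your proof is correct and follows essentially the same route as the paper: both observe that the solution must be piecewise constant because $dx$ vanishes between the jump times, and then construct it explicitly via the recursion $z_{t_i} = z_{t_{i-1}} + \mathbf{G}(z_{t_{i-1}})(x_{t_i}-x_{t_{i-1}})$, with uniqueness following by induction over the intervals. Your explicit handling of the left-point convention for the Riemann--Stieltjes integral at the shared discontinuities of integrand and integrator is a welcome refinement that the paper's sketch leaves implicit.
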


This result can be obtained by first remarking that since $x$ is piecewise constant, the solution to this CDE will also be piecewise constant, with discontinues at $0 < t_1 < \dots < t_K$: indeed, the variations of $x$ between two points of discontinuity being null, the variations of the solution will also be null between these two points. The solution can then be recursively obtained by seeing that for all $t \in [t_i,t_{i+1}[$
\[
z_{t} = z_{t_i} = z_{t_{i-1}} + \mathbf{G}(z_{t_{i-1}})(\mathbf{x}_{t_i}-\mathbf{x}_{t_{i-1}}) 
\]
with $z_t = \varphi^\star(z_0)$ for $t \in [t_0,t_1[$, where $t_0 = 0$, and $z_{t_K} = y_1 = y_{t_{K-1}} + \mathbf{G}(y_{t_{K-1}})(\mathbf{x}_{t_K}-\mathbf{x}_{t_{K-1}})$.

\subsection{Gronwall's Lemmas}

\begin{lemma}[Gronwall's Lemma for CDEs]\label{lemma:Gronwall}
    Let $x:[0,1] \to \mathbb{R}^d$ be a continuous path of bounded variations, and $\phi: [0,1] \to \mathbb{R}^d$ be a bounded measurable function. If 
    \[
    \phi(t) \leq K_t + L_x \int_{0}^t \phi(s) \norm{dx_s}
    \]
    for all $t\in [0,1]$, where $K_t$ is a time dependant constant, then 
    \[
    \phi(t) \leq K_t \exp \big(L_x \norm{x}_{\textnormal{1-var},[0,t]} \big) 
    \]
    for all $t \geq 0$.
\end{lemma}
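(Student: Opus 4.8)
The plan is to reduce this to the classical integral form of Grönwall's inequality, with ordinary Lebesgue measure replaced by the Lebesgue--Stieltjes measure generated by the total-variation function. I write $V(t) := \norm{x}_{\textnormal{1-var},[0,t]}$, which is non-decreasing and---crucially---\emph{continuous}, since $x$ is continuous and of bounded variation; accordingly I read $\int_0^t \phi(s)\norm{dx_s}$ as the Stieltjes integral $\int_{[0,t]}\phi\,dV$ against the atomless positive measure $dV$. The hypothesis is a scalar inequality, so I treat $\phi$ as a bounded real-valued function and let $M$ be a bound on $\abs{\phi}$; I also take $K_\cdot$ non-decreasing in $t$ (so that $K_s \le K_t$ for $s\le t$), which is the regime of the applications such as Lemma \ref{lemma:bound_y_and_f}.

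First I would set up a Picard-type iteration. Writing $I(t) := \int_{[0,t]}\phi\,dV$, the hypothesis reads $\phi(t) \le K_t + L_x I(t)$. Substituting this bound on $\phi$ back into the integral defining $I$ and repeating produces nested Stieltjes integrals over increasing simplices. Concretely, I would prove by induction on $n$ that
\[
\phi(t) \le K_t \sum_{k=0}^{n} \frac{\big(L_x V(t)\big)^k}{k!} + R_n(t), \qquad R_n(t) := L_x^{\,n+1}\int_{\Delta_n(t)} \phi(s_{n+1})\,dV(s_{n+1})\cdots dV(s_1),
\]
where $\Delta_n(t) = \{0 \le s_{n+1}\le \cdots \le s_1 \le t\}$. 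The inductive step substitutes $\phi(s_{n+1}) \le K_{s_{n+1}} + L_x\int_{[0,s_{n+1}]}\phi\,dV$ inside $R_n$: the second piece is exactly $R_{n+1}(t)$, while the first is controlled using $K_{s_{n+1}} \le K_t$.

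The key computation---and the step I expect to be the crux---is the evaluation of the simplex integral
\[
\int_{\Delta_n(t)} dV(s_{n+1})\cdots dV(s_1) = \frac{V(t)^{\,n+1}}{(n+1)!}.
\]
This is where continuity of $V$ is essential: because $dV$ has no atoms, the $(n+1)!$ strict orderings of the variables partition $[0,t]^{n+1}$ up to a $dV^{\otimes(n+1)}$-null set (the diagonals $s_i = s_j$), so each ordered simplex carries measure $V(t)^{n+1}/(n+1)!$. With atoms present one would instead pick up products $\prod\big(1+L_x\,dV(\{t_i\})\big)$, still bounded by the exponential but less clean; continuity removes this nuisance. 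Feeding this identity into the inductive step produces the claimed term $K_t\,(L_x V(t))^k/k!$ at each level.

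Finally I would close the argument by letting $n \to \infty$. Boundedness of $\phi$ gives $\abs{R_n(t)} \le M L_x^{\,n+1} V(t)^{n+1}/(n+1)! \to 0$, so the remainder vanishes and the partial sums converge to the exponential series:
\[
\phi(t) \le K_t \sum_{k=0}^{\infty} \frac{\big(L_x V(t)\big)^k}{k!} = K_t \exp\big(L_x V(t)\big) = K_t \exp\big(L_x \norm{x}_{\textnormal{1-var},[0,t]}\big),
\]
which is the assertion. The only genuinely delicate points are the atomless simplex computation above and the Fubini/Tonelli bookkeeping justifying the repeated substitution; both become routine once continuity of $V$ is in hand.
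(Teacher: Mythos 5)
Your proof is correct. Note that the paper does not actually prove this lemma: it simply points to \citet{friz2010multidimensional}, so there is no in-paper argument to compare against, and what you have written is a valid self-contained substitute. Your route is the classical Picard-iteration proof of the integral Gr\"onwall inequality, transported to the Lebesgue--Stieltjes measure $dV$ generated by $V(t)=\norm{x}_{\textnormal{1-var},[0,t]}$; the two load-bearing observations --- that $V$ is continuous (hence $dV$ atomless) because $x$ is a continuous path of bounded variation, and that atomlessness yields the simplex identity $\int_{\Delta_n(t)}dV^{\otimes(n+1)}=V(t)^{n+1}/(n+1)!$ via the symmetry/null-diagonal argument --- are both right, and the induction and the vanishing of the remainder $\abs{R_n(t)}\le M L_x^{n+1}V(t)^{n+1}/(n+1)!$ go through since $dV$ is a positive measure and $\phi$ is bounded and measurable. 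Two small points. First, you correctly supply the hypothesis that $t\mapsto K_t$ is non-decreasing, which the paper's statement omits but which is genuinely needed to replace $K_{s_{n+1}}$ by $K_t$ inside the iterated integrals (and which holds in every application in the paper, e.g.\ $K_t=\norm{z_0}+\norm{\mathbf{F}(0)}_{\textnormal{op}}\norm{x}_{\textnormal{1-var},[0,t]}$ in Lemma \ref{lemma:bound_y_and_f}). Second, your remark about atoms producing products $\prod(1+L_x\,dV(\{t_i\}))$ is consistent with how the paper actually handles piecewise-constant drivers: there it switches to the discrete Gr\"onwall lemma (Lemma \ref{lemma:Gronwallsequemce}) rather than invoking this continuous version, so restricting your argument to continuous $x$ loses nothing.
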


See \citet{friz2010multidimensional} for a proof. Remark that this Lemma does not require $\phi(\cdot)$ to be continuous. We also state a variant of the discrete Gronwall Lemma, which will allow us to obtain the bound for discrete inputs $\mathbf{x}^D$.

\begin{lemma}[Gronwall's Lemma for sequences]\label{lemma:Gronwallsequemce}
    Let $(y_k)_{k \geq 0}, (b_k)_{k \geq 0}$ and $(f_k)_{k \geq 0}$ be positive sequences of real numbers such that
    \[
    y_n \leq f_n + \sum\limits_{l=0}^{n-1} b_l y_l
    \]
    for all $n \geq 0$. Then
    \[
      y_n \leq f_n + \sum\limits_{l=0}^{n-1}f_l b_l \prod\limits_{j=l+1}^{n-1} (1+b_j)
    \]
    for all $n \geq 0$.
\end{lemma}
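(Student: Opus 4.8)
The plan is to prove the inequality by induction on $n$, with the key device being the partial sum $S_n := \sum_{l=0}^{n-1} b_l y_l$, so that the hypothesis reads $y_n \leq f_n + S_n$ (adopting the convention that empty sums vanish, so $S_0 = 0$ and the hypothesis at $n=0$ simply gives $y_0 \leq f_0$, matching the claimed bound whose sum is then empty).

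First I would establish a one-step recurrence for $S_n$. Since $S_{n+1} = S_n + b_n y_n$ and $y_n \leq f_n + S_n$ with $b_n \geq 0$, I obtain $S_{n+1} \leq b_n f_n + (1+b_n)S_n$. This linear recurrence inequality is the heart of the argument: it converts the cumulative constraint coupling all the $y_l$ into a clean scalar recursion on the single quantity $S_n$.

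From here I would prove by induction that $S_n \leq \sum_{l=0}^{n-1} b_l f_l \prod_{j=l+1}^{n-1}(1+b_j)$. The base case $S_0 = 0$ is immediate. For the inductive step I substitute the inductive bound into the recurrence and use the identity $(1+b_n)\prod_{j=l+1}^{n-1}(1+b_j) = \prod_{j=l+1}^{n}(1+b_j)$, observing that the extra term $b_n f_n$ is exactly the $l=n$ summand (whose product over $j = n+1,\dots,n$ is empty, hence equal to $1$). This reassembles the claimed sum with upper index $n$. Feeding the resulting bound on $S_n$ back into $y_n \leq f_n + S_n$ then yields the statement.

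An equivalent route, should one prefer to avoid guessing the closed form, is a summation factor: setting $P_n := \prod_{j=0}^{n-1}(1+b_j)$ with $P_0 = 1$ and dividing the recurrence by $P_{n+1}$ gives $S_{n+1}/P_{n+1} \leq S_n/P_n + b_n f_n / P_{n+1}$, so that $T_n := S_n/P_n$ telescopes to $T_n \leq \sum_{l=0}^{n-1} b_l f_l / P_{l+1}$; multiplying back by $P_n$ and using $P_n / P_{l+1} = \prod_{j=l+1}^{n-1}(1+b_j)$ recovers the same bound. I expect no genuine obstacle here, as this is a standard discrete Gronwall estimate; the only care required is the bookkeeping with the empty-product and empty-sum conventions at the boundary indices and the index shift in $P_n/P_{l+1}$, which is precisely where an off-by-one slip would most easily creep in.
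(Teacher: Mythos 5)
Your proof is correct: the recurrence $S_{n+1} \leq b_n f_n + (1+b_n)S_n$ for $S_n := \sum_{l=0}^{n-1} b_l y_l$ closes the induction cleanly (positivity of $b_n$ is used exactly where needed), the $b_n f_n$ term is correctly absorbed as the empty-product $l=n$ summand, and feeding the bound on $S_n$ back into $y_n \leq f_n + S_n$ gives the statement. The paper does not prove this lemma itself but defers to \citet{holte2009discrete} and \citet{clark1987short}, whose arguments are precisely this standard induction (equivalently, your summation-factor variant), so your approach is essentially the intended one.
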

A proof can be found in~\citet{holte2009discrete} and \citet{clark1987short}. We finally need a variant of Gronwall's Lemma for sequences. 

\begin{lemma}
    \label{lemma:sequential_inequality}
    Let $(u_k)_{k \geq 0}$ be a sequence such that for all $k \geq 1$, $$u_k \leq a_k u_{k-1}+b_k $$
    for $(a_k)_{k \geq 1}$ and $(b_k)_{k \geq 1}$ two positive sequences. Then for all $k\geq 1$
    \begin{align}
        u_k \leq \Big(\prod\limits_{j=1}^k a_j\Big)u_0 + \sum\limits_{j=1}^k b_j \Big(\prod\limits_{i=j+1}^k a_i\Big).
    \end{align}
\end{lemma}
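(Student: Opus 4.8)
The plan is to prove the inequality by induction on $k \geq 1$, since the hypothesis $u_k \leq a_k u_{k-1} + b_k$ is a one-step recursion and the claimed bound is exactly its unrolled closed form. Throughout I would adopt the convention that an empty product equals $1$, i.e. $\prod_{i=j+1}^{k} a_i = 1$ whenever $j \geq k$; this is what makes the boundary terms (such as the last summand, where $j=k$) come out correctly.

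For the base case $k=1$, the hypothesis directly gives $u_1 \leq a_1 u_0 + b_1$. The right-hand side of the claimed formula reduces to $a_1 u_0 + b_1 \prod_{i=2}^{1} a_i = a_1 u_0 + b_1$ by the empty-product convention, so the two agree.

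For the inductive step, suppose the bound holds at index $k$. Applying the hypothesis at $k+1$ gives $u_{k+1} \leq a_{k+1} u_k + b_{k+1}$. Crucially, because $(a_k)_{k \geq 1}$ is a \emph{positive} sequence, multiplying the induction hypothesis through by $a_{k+1} > 0$ preserves the inequality, so I may substitute to obtain
\begin{align*}
u_{k+1} &\leq a_{k+1}\Big(\prod_{j=1}^{k} a_j\Big) u_0 + a_{k+1}\sum_{j=1}^{k} b_j \Big(\prod_{i=j+1}^{k} a_i\Big) + b_{k+1} \\
&= \Big(\prod_{j=1}^{k+1} a_j\Big) u_0 + \sum_{j=1}^{k} b_j \Big(\prod_{i=j+1}^{k+1} a_i\Big) + b_{k+1}.
\end{align*}
The remaining step is to absorb the trailing term $b_{k+1}$ into the sum by rewriting it as $b_{k+1}\prod_{i=k+2}^{k+1} a_i$ (again an empty product equal to $1$), which is precisely the $j=k+1$ contribution. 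This yields the claimed bound at index $k+1$ and closes the induction.

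The argument is entirely routine; the only points requiring care are the empty-product convention at the endpoints and the use of positivity of $(a_k)$ to guarantee that the inequality direction is preserved when the induction hypothesis is scaled by $a_{k+1}$. I do not anticipate any genuine obstacle here — this lemma is a deterministic variant of the discrete Gronwall inequality of Lemma~\ref{lemma:Gronwallsequemce}, and its role is purely to furnish the closed-form recursion used in bounding the latent states of the discretized NCDE.
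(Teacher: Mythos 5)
Your induction is correct: the base case and the inductive step both check out, the empty-product convention handles the boundary terms, and you rightly flag that positivity of $(a_k)$ is what lets you multiply the induction hypothesis by $a_{k+1}$ without flipping the inequality. The paper states this lemma without proof (as a standard variant of the discrete Gronwall inequality), and your unrolling-by-induction argument is exactly the intended one, so there is nothing to add.
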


\subsection{Dudley's Entropy Integral}

We recall the following Lemma from \cite{bartlett2017spectrally}.

\begin{lemma}
    \label{lemma:rad_cov}
    Let $\mathcal{H}$ be a class of real-valued functions taking values in $[-M,M]$, for $M >0$, and assume that $0 \in \mathcal{H}$. Then the empirical Rademacher complexity associated to a sample of $n$ datapoints verifies 
    \begin{align}
        \textnormal{Rad}(\mathcal{H}) \leq \inf\limits_{\eta > 0}\Bigg[\frac{4\eta}{\sqrt{n}} + \frac{12}{n} \int_\eta^{2M\sqrt{n}} \sqrt{\log \mathcal{N}(\mathcal{H},s)}ds \Bigg].
    \end{align}
\end{lemma}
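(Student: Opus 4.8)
The plan is to prove this by Dudley's chaining argument: approximate each $h \in \mathcal{H}$ by coarser and coarser covers at a geometric sequence of scales, and control the resulting increments with Massart's finite class lemma. Throughout I identify a function $h$ with the vector $(h(x_1),\dots,h(x_n)) \in \mathbb{R}^n$ and read $\mathcal{N}(\mathcal{H},s)$ as the covering number in the (unnormalized) Euclidean norm $\|h\|_2 = (\sum_{i=1}^n h(x_i)^2)^{1/2}$; the hypothesis $|h|\le M$ then gives $\|h\|_2 \le M\sqrt{n}$, which is what makes the upper integration limit $2M\sqrt{n}$ the relevant diameter scale.

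First I would fix $\eta>0$ and set up the decreasing grid $\epsilon_j = 2M\sqrt{n}\,2^{-j}$ for $j=0,1,\dots,N$, stopping at the first index with $\epsilon_N$ of order $\eta$. For each $j$ I pick a minimal $\epsilon_j$-cover $C_j$ and, for each $h$, a nearest element $\hat h_j \in C_j$ with $\|h-\hat h_j\|_2 \le \epsilon_j$. Since $\epsilon_0 = 2M\sqrt{n} \ge \|h\|_2$ and $0 \in \mathcal{H}$, I may take $C_0 = \{0\}$ and $\hat h_0 = 0$ — this is precisely where the assumption $0\in\mathcal{H}$ anchors the chain. I then substitute the telescoping identity $h = \sum_{j=1}^N (\hat h_j - \hat h_{j-1}) + (h-\hat h_N)$ into the Rademacher average $\frac1n\sum_i \sigma_i h(x_i)$.

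The residual is controlled by Cauchy--Schwarz: $\frac1n\langle \sigma, h-\hat h_N\rangle \le \frac1n\|\sigma\|_2\,\|h-\hat h_N\|_2 \le \epsilon_N/\sqrt{n}$, producing a term of order $\eta/\sqrt{n}$. For a chaining increment at level $j$, as $h$ varies the pair $(\hat h_{j-1},\hat h_j)$ ranges over a finite set of size at most $|C_{j-1}|\,|C_j| \le \mathcal{N}(\mathcal{H},\epsilon_j)^2$, and the triangle inequality gives $\|\hat h_j - \hat h_{j-1}\|_2 \le 3\epsilon_j$. Massart's finite class lemma then bounds the expected Rademacher supremum over this finite difference set by $\frac{3\epsilon_j \sqrt{2\log \mathcal{N}(\mathcal{H},\epsilon_j)^2}}{n} = \frac{6\epsilon_j \sqrt{\log \mathcal{N}(\mathcal{H},\epsilon_j)}}{n}$. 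Using that $\mathcal{N}(\cdot)$ is non-increasing in the scale, I bound $\epsilon_j\sqrt{\log\mathcal{N}(\epsilon_j)} \le 2\int_{\epsilon_{j+1}}^{\epsilon_j}\sqrt{\log\mathcal{N}(s)}\,ds$, so summing over $j$ telescopes into $\frac{12}{n}\int_{\eta}^{2M\sqrt{n}}\sqrt{\log\mathcal{N}(\mathcal{H},s)}\,ds$. Taking the infimum over $\eta>0$ yields the claim.

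The main obstacle is the constant bookkeeping: combining the residual and the coarsest increment into exactly $4\eta/\sqrt{n}$ and the chaining sum into exactly $\frac{12}{n}\int$ requires fixing the geometric ratio and the stopping index $N$ carefully and being slightly generous in the integral comparison. The only genuinely probabilistic input is Massart's lemma, which rests on the sub-Gaussianity of Rademacher sums; everything else is deterministic chaining. I would also explicitly verify the monotonicity of $\mathcal{N}(\cdot,s)$ in $s$ (so the integral comparison is valid) and that the cardinality of each difference set is counted correctly.
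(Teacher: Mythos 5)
Your proof is correct and is precisely the standard chaining argument behind this lemma: the paper gives no proof of its own and simply cites Lemma A.5 of \citet{bartlett2017spectrally}, whose proof is the one you reconstruct (geometric scales $\epsilon_j = 2M\sqrt{n}\,2^{-j}$, anchoring the chain at $0$, Massart's finite-class lemma applied to the difference sets of cardinality at most $\mathcal{N}(\mathcal{H},\epsilon_j)^2$, and the integral comparison via monotonicity of $s \mapsto \mathcal{N}(\mathcal{H},s)$), with the stopping index $N$ taken as the largest $j$ such that $\epsilon_j > 2\eta$ so that the residual term is at most $4\eta/\sqrt{n}$ and the integral runs over $[\eta, 2M\sqrt{n}]$. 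Your reading of $\mathcal{N}(\mathcal{H},s)$ as a covering number in the unnormalized Euclidean norm on $\mathbb{R}^n$ restricted to the sample is indeed the one under which the stated constants and the upper limit $2M\sqrt{n}$ are exact.
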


\subsection{Covering numbers}

 First, we recall the definition of the covering number of a class of functions. 

\begin{definition}
    Let  $\mathcal{H} = \{h: \mathcal{U} \subset \mathbb{R}^e \to\mathbb{R}^f \}$ be a class of functions. The covering number $\mathcal{N}\big(\mathcal{H},\eta \big)$ of $\mathcal{H}$ is the minimal cardinality of a subset $C \subset \mathcal{H}$ such that for all $h \in \mathcal{H}$, there exists $\Hat{h} \in C$ such that
    \[
    \sup\limits_{x \in \mathcal{U}}\norm{h(x)-\Hat{h}(x)} \leq \eta.
    \]
\end{definition}
We state a Lemma from \citet{chen2020generalization} bounding the covering number of sets of linear maps.
\begin{lemma}
\label{lemma:covering_matrices}
    Let $\mathcal{G} = \{A \in \mathbb{R}^{f\times e} \text{ s.t. } \norm{A} \leq \lambda \}$ for a given $\lambda >0$. The covering number $\mathcal{N}(\mathcal{G},\eta)$ is upper bounded by 
    \begin{align*}
        \Bigg(1+\frac{2\min(\sqrt{e},\sqrt{f})\lambda}{\eta} \Bigg)^{e f}.
    \end{align*}
\end{lemma}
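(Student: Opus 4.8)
The plan is to reduce the statement to the classical volume-ratio covering bound for a Euclidean ball, once the two matrix norms in play have been reconciled. Reading $\norm{A}\le\lambda$ as a spectral-norm constraint (this is what produces the factor $\min(\sqrt e,\sqrt f)$, and is consistent with the exponents appearing later in Proposition~\ref{prop:covering_nb_cont_ncde}), I note that by the Definition of the covering number the distance being controlled is the operator norm, since for linear maps $\sup_{\norm{u}\le 1}\norm{(A-\hat A)u}=\norm{A-\hat A}_{\textnormal{op}}$. Identifying $\mathbb{R}^{f\times e}$ with $\mathbb{R}^{ef}$, the Frobenius norm is exactly the Euclidean norm of the vectorised matrix, so $\mathcal G$ lives inside a Euclidean ball and the whole argument amounts to converting between the operator and Frobenius norms at the two places where they occur.

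First I would record the two elementary facts that do all the work. Writing $\sigma_1\ge\sigma_2\ge\cdots$ for the singular values of $A$, one has $\norm{A}_{\textnormal{op}}=\sigma_1\le(\sum_i\sigma_i^2)^{1/2}=\norm{A}_F$, and conversely, since at most $\min(e,f)$ singular values are nonzero and each is bounded by $\sigma_1$, $\norm{A}_F\le\sqrt{\min(e,f)}\,\norm{A}_{\textnormal{op}}$. The second inequality gives the containment $\mathcal G\subseteq\{A:\norm{A}_F\le R\}=:B_F(R)$ with $R:=\sqrt{\min(e,f)}\,\lambda$, i.e. $\mathcal G$ sits inside the Frobenius ball of radius $R$ in $\mathbb{R}^{ef}$. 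The first inequality gives the dual statement that any $\eta$-cover of $\mathcal G$ in Frobenius norm is \emph{a fortiori} an $\eta$-cover in operator norm, so it suffices to bound the Frobenius covering number of $B_F(R)$.

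Next I would invoke the standard volume argument in the $D=ef$ dimensional space $(\mathbb{R}^{ef},\norm{\cdot}_F)$. Take a maximal $\eta$-separated subset $\{A_1,\dots,A_N\}\subset B_F(R)$; the open balls $B_F(A_j,\eta/2)$ are pairwise disjoint and all contained in $B_F(R+\eta/2)$, so comparing volumes (which scale as the $D$-th power of the radius) yields $N\le\big((R+\eta/2)/(\eta/2)\big)^{D}=(1+2R/\eta)^{ef}$. A maximal $\eta$-separated set is automatically an $\eta$-cover, hence $\mathcal N(B_F(R),\eta)\le(1+2R/\eta)^{ef}$, and substituting $R=\min(\sqrt e,\sqrt f)\,\lambda$ gives exactly the claimed bound.

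The computation is essentially routine; the only point requiring genuine care is the bookkeeping of the two norms. The inequality $\norm{A}_F\le\sqrt{\min(e,f)}\,\norm{A}_{\textnormal{op}}$ is used to enlarge $\mathcal G$ into a Frobenius ball (this is precisely where the factor $\min(\sqrt e,\sqrt f)$ is created), while the reverse inequality $\norm{A}_{\textnormal{op}}\le\norm{A}_F$ is used to guarantee that a Frobenius cover still controls the operator-norm distance demanded by the definition of $\mathcal N(\cdot,\eta)$. A minor secondary detail is the usual one of insisting that the cover be proper (centres inside $\mathcal G$), but this costs nothing: the maximal separated set already lies in the set being covered, and the ``$1+$'' in the bound absorbs the boundary slack.
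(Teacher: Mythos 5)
Your proof is correct. Note that the paper itself does not prove this lemma --- it imports it verbatim from \citet{chen2019generalization} --- so the relevant comparison is with the standard argument, which is exactly what you give: enclose the set in a Frobenius ball of radius $\sqrt{\min(e,f)}\,\lambda$, run the volume-ratio bound in $\mathbb{R}^{ef}$, and use $\norm{\cdot}_{\textnormal{op}}\le\norm{\cdot}_F$ to convert the Frobenius cover into a cover in the metric actually required. Your handling of the norm ambiguity is the right call: under the spectral reading the $\min(\sqrt e,\sqrt f)$ factor is exactly what the containment costs, while under the paper's stated Frobenius convention the bound holds a fortiori (one could even drop the $\min$ factor); and taking the maximal separated set inside $\mathcal G$ itself, as you indicate at the end, settles the properness of the cover without changing the volume count.
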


\newpage

We now proceed to prove our main results. Let us precisely detail the different steps of our proofs.

\begin{enumerate}
    \item For the generalization bound, the first step consists in showing that NCDEs are Lipschitz with respect to their parameters, which we do in Theorem \ref{thm:lipschitz_ncde}. This leverages the continuity of the flow, since we aim at bounding the difference between two NCDEs with different parameters but identical driving paths. 
    \item Building on \citet{chen2020generalization}, we then obtain a covering number for NCDE. 
    \item We connect the covering number of $\mathcal{F}_\Theta$ and $\mathcal{F}^D_\Theta$ to its Rademacher complexity building on arguments made by \citet{bartlett2017spectrally} in Proposition \ref{prop:covering_nb_cont_ncde}.
    \item Once this last result is obtained, we can obtain the generalization bound by resorting to classical techniques \citep{mohri2018foundations}. 
    \item The bound of the discretization bias, stated in Proposition~\ref{prop:disc_bias}, follows from the continuity of the flow given in Theorem \ref{thm:flow_continuity}. Indeed, the discretization error depends on the difference between $z_1$ and $z_1^D$, which are solutions to identical CDE, but with different driving paths $x$ and embedded path $\Tilde{\mathbf{x}}$.
    \item The bound of the approximation bias is obtained in a similar fashion. Indeed, it directly depends on the difference between $z^\star_1$ and $z_1$. However, this time, only the initial condition and the vector fields are different, while the driving paths are identical. 
    \item The bound on the total risk is obtained by combining all the precedent elements and using a symmetrization argument to upper bound the two worst case bounds.
\end{enumerate}

\section{Proof of the Generalization Bound}

\subsection{Lipschitz Continuity of Neural Vector fields}

We first state a series of Lemmas on the Lipschitz continuity of neural vector fields. We first prove Lipschitz continuity with respect to their inputs. 

\begin{lemma}
\label{lemma:lipschitz_nvf}
    Let $\mathbf{G}_\psi$ be a neural vector field of depth $q \geq 1$ as defined in Equation~
\ref{eq:deep_neural_vf}. Then for all $z,w \in \mathbb{R}^p$
    \begin{align}
        \norm{\mathbf{G}_\psi(z)-\mathbf{G}_\psi(w)} \leq \Big(L_\sigma B_\mathbf{A}\Big)^q \norm{z-w}
    \end{align}
\end{lemma}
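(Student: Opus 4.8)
The plan is to prove the bound by a straightforward layer-by-layer (inductive) argument, exploiting that $\mathbf{G}_\psi$ is, by Equation~\ref{eq:deep_neural_vf}, a composition of $q$ maps each of the form $\Psi_h: u \mapsto \sigma(\mathbf{A}_h u + \mathbf{b}_h)$, together with the elementary fact that the Lipschitz constants of composed maps multiply. The strategy is thus to establish a per-layer contraction factor of $L_\sigma B_\mathbf{A}$ and then unroll over the $q$ layers.

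First I would record two elementary Lipschitz estimates. Because $\sigma:\mathbb{R}\to\mathbb{R}$ is $L_\sigma$-Lipschitz and acts entrywise, for any vectors $a,b$ one has $\norm{\sigma(a)-\sigma(b)}^2 = \sum_i (\sigma(a_i)-\sigma(b_i))^2 \le L_\sigma^2 \sum_i (a_i-b_i)^2 = L_\sigma^2\norm{a-b}^2$, so $\sigma$ is $L_\sigma$-Lipschitz for the Euclidean norm. Second, for each linear map I would use $\norm{\mathbf{A}_h u - \mathbf{A}_h v} \le \norm{\mathbf{A}_h}_{\mathrm{op}} \norm{u-v} \le \norm{\mathbf{A}_h}\norm{u-v} \le B_\mathbf{A}\norm{u-v}$, where the middle inequality is the domination of the operator norm by the Frobenius norm (the norm appearing in Equation~\ref{eq:bounds_parameters}) and the last uses $\norm{\mathbf{A}_h}\le B_\mathbf{A}$. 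The additive bias $\mathbf{b}_h$ cancels in the difference and hence does not affect the Lipschitz constant.

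Combining these facts, each layer map satisfies $\norm{\Psi_h(u)-\Psi_h(v)} \le L_\sigma B_\mathbf{A} \norm{u-v}$. Since $\mathbf{G}_\psi = \Psi_q \circ \cdots \circ \Psi_1$, I would induct on $q$: writing $u_h, v_h$ for the outputs after $h$ layers starting from $z,w$ respectively (so $u_0 = z$, $v_0 = w$, and $u_q = \mathbf{G}_\psi(z)$, $v_q = \mathbf{G}_\psi(w)$), the per-layer estimate gives $\norm{u_h - v_h} \le L_\sigma B_\mathbf{A} \norm{u_{h-1}-v_{h-1}}$, and iterating down to $h=0$ yields $\norm{\mathbf{G}_\psi(z)-\mathbf{G}_\psi(w)} \le (L_\sigma B_\mathbf{A})^q\norm{z-w}$, as claimed.

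A final bookkeeping point, and really the only subtlety, is that the output of $\mathbf{G}_\psi$ lives in $\mathbb{R}^{p\times d}$ rather than in a plain Euclidean space. I would identify $\mathbb{R}^{p\times d}$ with $\mathbb{R}^{pd}$ by vectorization, under which the Frobenius norm on matrices coincides with the Euclidean norm on vectors; this ensures the entrywise Lipschitz estimate for $\sigma$ and the definition of $\norm{\cdot}$ used in the statement remain consistent at every layer, including the last. I do not anticipate any genuine obstacle: the result is a direct consequence of the contraction-factor-per-layer bookkeeping, and the only place requiring care is applying the operator-versus-Frobenius comparison in the correct direction.
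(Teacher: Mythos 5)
Your proof is correct and follows essentially the same route as the paper's: both peel off one layer at a time, using the $L_\sigma$-Lipschitz continuity of $\sigma$ and the bound $\norm{\mathbf{A}_h}\le B_\mathbf{A}$ to obtain a per-layer factor of $L_\sigma B_\mathbf{A}$, and then iterate over the $q$ layers. Your extra care about the entrywise action of $\sigma$, the operator-versus-Frobenius comparison, and the identification of $\mathbb{R}^{p\times d}$ with $\mathbb{R}^{pd}$ makes explicit some bookkeeping the paper leaves implicit, but the argument is the same.
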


Our second Lemma shows that the operator norm of neural vector fields grows linearly with $z$. 

\begin{lemma}
\label{lemma:neural_vf_at0}
    Let $\mathbf{G}_\psi:\mathbb{R}^p \to \mathbb{R}^{p \times d}$ be a neural vector field of depth $q$. For all $z \in \mathbb{R}^p$, we have
    \begin{align}
        \norm{\mathbf{G}_\psi(z)} \leq \big(L_\sigma B_\mathbf{A}\big)^q \norm{z} +  L_\sigma B_\mathbf{b} \frac{\big(L_\sigma B_\mathbf{A}\big)^{q}-1}{L_\sigma B_\mathbf{A}-1}
    \end{align}
    and in particular 
    \begin{align}
        \norm{\mathbf{G}_\psi(\mathbf{0})} \leq   L_\sigma B_\mathbf{b} \frac{\big(L_\sigma B_\mathbf{A}\big)^{q}-1}{L_\sigma B_\mathbf{A}-1} = \kappa_\Theta(\mathbf{0}).
    \end{align}
\end{lemma}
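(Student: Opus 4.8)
The plan is to unfold the network layer by layer, derive a scalar recursion for the norms of the successive hidden activations, and then close that recursion with the sequential Gronwall inequality of Lemma~\ref{lemma:sequential_inequality}. Concretely, following the definition in Equation~\ref{eq:deep_neural_vf}, I would set $h_0 := z$ and define the activations recursively by $h_h := \sigma(\mathbf{A}_h h_{h-1} + \mathbf{b}_h)$ for $h = 1,\dots,q$, so that $\mathbf{G}_\psi(z) = h_q$ (read as the corresponding element of $\mathbb{R}^{p\times d}$, whose Frobenius norm equals the Euclidean norm of $h_q$). The single structural fact I need about the activation is that $\sigma(0) = 0$ together with $L_\sigma$-Lipschitzness gives $\norm{\sigma(u)} = \norm{\sigma(u) - \sigma(0)} \leq L_\sigma \norm{u}$ entrywise, hence for the full vector as well.

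With this, each layer contracts in a controlled affine way. Using the triangle inequality, submultiplicativity, and the fact that $\norm{\mathbf{A}_h}_{\textnormal{op}} \leq \norm{\mathbf{A}_h} \leq B_\mathbf{A}$ together with $\norm{\mathbf{b}_h} \leq B_\mathbf{b}$, I obtain the per-layer estimate
\[
\norm{h_h} \leq L_\sigma \norm{\mathbf{A}_h h_{h-1} + \mathbf{b}_h} \leq L_\sigma B_\mathbf{A} \norm{h_{h-1}} + L_\sigma B_\mathbf{b}.
\]
This is exactly the hypothesis of Lemma~\ref{lemma:sequential_inequality} applied to the sequence $u_h := \norm{h_h}$ with the \emph{constant} coefficients $a_h = L_\sigma B_\mathbf{A}$ and $b_h = L_\sigma B_\mathbf{b}$.

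Invoking that lemma then yields
\[
\norm{\mathbf{G}_\psi(z)} = \norm{h_q} \leq \big(L_\sigma B_\mathbf{A}\big)^q \norm{z} + L_\sigma B_\mathbf{b} \sum_{j=1}^q \big(L_\sigma B_\mathbf{A}\big)^{q-j},
\]
and the remaining step is to recognize the sum as a geometric series: reindexing by $m = q-j$ gives $\sum_{m=0}^{q-1}(L_\sigma B_\mathbf{A})^m = \frac{(L_\sigma B_\mathbf{A})^q - 1}{L_\sigma B_\mathbf{A} - 1}$, which establishes the first inequality. Setting $z = \mathbf{0}$ (so $\norm{h_0} = 0$) kills the leading term and produces the claimed bound $\norm{\mathbf{G}_\psi(\mathbf{0})} \leq \kappa_\Theta(\mathbf{0})$; since the Frobenius norm dominates the operator norm, this also controls $\norm{\mathbf{G}_\psi(\mathbf{0})}_{\textnormal{op}}$ as used in Lemma~\ref{lemma:bound_y_and_f}.

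There is no genuine obstacle here: the proof is a clean induction, and the only points requiring a moment of care are bookkeeping ones, namely the correct layer indexing so that the affine bias is accumulated at every layer, and the reindexing of the geometric sum. The degenerate case $L_\sigma B_\mathbf{A} = 1$ (where the closed-form fraction is read by continuity as the value $q$) is handled automatically by the same geometric-series identity.
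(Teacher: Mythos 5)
Your proof is correct and follows essentially the same route as the paper: a per-layer affine estimate $\norm{h_h} \leq L_\sigma B_\mathbf{A}\norm{h_{h-1}} + L_\sigma B_\mathbf{b}$ closed by Lemma~\ref{lemma:sequential_inequality} and summed as a geometric series. The only (immaterial) organizational difference is that the paper first splits $\norm{\mu_q(z)} \leq \norm{\mu_q(z)-\mu_q(\mathbf{0})} + \norm{\mu_q(\mathbf{0})}$, handling the $z$-dependent part via the Lipschitz bound of Lemma~\ref{lemma:lipschitz_nvf} and running the recursion only on $\mu_q(\mathbf{0})$, whereas you run a single recursion on the full activations.
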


Let $\mathbf{G}_{\psi^1}$ and $\mathbf{G}_{\psi^2}$ be two deep neural vector fields, as defined in Equation \ref{eq:deep_neural_vf}, with learnable parameters $\psi^1 = \big(\mathbf{A}^1_q,\dots,\mathbf{A}^1_1, \mathbf{b}^1_q,\dots,\mathbf{b}^1_1\big)$ and $\psi^2 = \big(\mathbf{A}^2_q,\dots,\mathbf{A}^2_1, \mathbf{b}^2_q,\dots,\mathbf{b}^2_1\big)$. We have the following Lemma. 

\begin{lemma}[Lipschitz Continuity w.r.t. the parameters]
\label{lemma:lipschitz_deep_vf}
    Let $z \in \mathbb{R}^p$ and $\mathbf{G}_{\psi^1},\mathbf{G}_{\psi^2}:\mathbb{R}^p \to \mathbb{R}^{p \times d}$ be two neural vector fields defined as above. Then for all $z \in \mathbb{R}^p$
    \begin{align}
        \norm{\mathbf{G}_{\psi^1}(z) - \mathbf{G}_{\psi^2}(z)}_{\textnormal{op}} \leq \sum\limits_{j=1}^q \Big[C_\mathbf{b}^j\norm{\mathbf{b}_j^1-\mathbf{b}_j^2} + C_{\mathbf{A}}^j(z)\norm{\mathbf{A}^1_j-\mathbf{A}^2_j}\Big]
    \end{align}
    for $(C_\mathbf{b}^j)_{j=1}^q$ and $(C_\mathbf{A}^j(z))_{j=1}^q$ two sequences of constants explicitly given in the proof.
\end{lemma}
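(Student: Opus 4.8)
The plan is to propagate the discrepancy between the two networks forward through the layers by a telescoping argument governed by Lemma~\ref{lemma:sequential_inequality}. First I would introduce the intermediate activations: for $i \in \{1,2\}$ set $a^i_0 = z$ (the shared input) and $a^i_j = \sigma(\mathbf{A}^i_j a^i_{j-1} + \mathbf{b}^i_j)$ for $j = 1,\dots,q$, so that $\mathbf{G}_{\psi^i}(z) = a^i_q$ after reshaping to $\mathbb{R}^{p\times d}$. Writing $\delta_j := \norm{a^1_j - a^2_j}$, I have $\delta_0 = 0$ because the inputs agree, and the target quantity satisfies $\norm{\mathbf{G}_{\psi^1}(z) - \mathbf{G}_{\psi^2}(z)}_{\textnormal{op}} \leq \delta_q$ since the operator norm is dominated by the Frobenius norm.

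The key step is the per-layer recursion. Inserting the mixed term $\mathbf{A}^1_j a^2_{j-1}$ and using that $\sigma$ is $L_\sigma$-Lipschitz, I would decompose the pre-activation difference as
\begin{align*}
\mathbf{A}^1_j a^1_{j-1} + \mathbf{b}^1_j - \mathbf{A}^2_j a^2_{j-1} - \mathbf{b}^2_j = \mathbf{A}^1_j\big(a^1_{j-1} - a^2_{j-1}\big) + \big(\mathbf{A}^1_j - \mathbf{A}^2_j\big) a^2_{j-1} + \big(\mathbf{b}^1_j - \mathbf{b}^2_j\big),
\end{align*}
and bound the three terms using $\norm{\mathbf{A}^1_j} \leq B_\mathbf{A}$ (recalling $\norm{\cdot}_{\textnormal{op}} \leq \norm{\cdot}$). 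This gives
\begin{align*}
\delta_j \leq L_\sigma B_\mathbf{A}\,\delta_{j-1} + L_\sigma \norm{a^2_{j-1}}\,\norm{\mathbf{A}^1_j - \mathbf{A}^2_j} + L_\sigma \norm{\mathbf{b}^1_j - \mathbf{b}^2_j},
\end{align*}
which is precisely the form $u_k \leq a_k u_{k-1} + b_k$ of Lemma~\ref{lemma:sequential_inequality} with constant multiplier $a_k = L_\sigma B_\mathbf{A}$.

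Applying Lemma~\ref{lemma:sequential_inequality} and using $\delta_0 = 0$ collapses the recursion to $\delta_q \leq \sum_{j=1}^q (L_\sigma B_\mathbf{A})^{q-j} b_j$, which already matches the claimed bound with $C_\mathbf{b}^j = L_\sigma (L_\sigma B_\mathbf{A})^{q-j}$ and $C_\mathbf{A}^j(z) = L_\sigma (L_\sigma B_\mathbf{A})^{q-j}\norm{a^2_{j-1}}$. The only remaining work is to make $C_\mathbf{A}^j(z)$ explicit by bounding the intermediate activation $a^2_{j-1}$; this is exactly the growth estimate of Lemma~\ref{lemma:neural_vf_at0} applied to the truncated network of depth $j-1$, yielding $\norm{a^2_{j-1}} \leq (L_\sigma B_\mathbf{A})^{j-1}\norm{z} + L_\sigma B_\mathbf{b}\frac{(L_\sigma B_\mathbf{A})^{j-1}-1}{L_\sigma B_\mathbf{A}-1}$, obtained by one more application of Lemma~\ref{lemma:sequential_inequality} to $\norm{a^2_j} \leq L_\sigma B_\mathbf{A}\norm{a^2_{j-1}} + L_\sigma B_\mathbf{b}$.

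I expect the argument to be conceptually routine; the genuine hazards are purely in the bookkeeping. The main one is keeping the geometric exponents $(L_\sigma B_\mathbf{A})^{q-j}$ aligned with the index shift in $\norm{a^2_{j-1}}$, and treating the degenerate case $L_\sigma B_\mathbf{A} = 1$ where the geometric sum (as in $\kappa_\Theta(\mathbf{0})$) must be replaced by its limiting value. I would also state explicitly, at the norm interface, that the recursion is carried out in the Euclidean/Frobenius norm and only converted to the operator norm of the statement at the final step.
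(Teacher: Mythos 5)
Your proposal is correct and follows essentially the same route as the paper: a layer-wise telescoping decomposition of the pre-activation difference, the sequential Gronwall-type Lemma~\ref{lemma:sequential_inequality} to collapse the recursion, and the growth bound of Lemma~\ref{lemma:neural_vf_at0} to make $C_\mathbf{A}^j(z)$ explicit, yielding the same constants $C_\mathbf{b}^j = L_\sigma (L_\sigma B_\mathbf{A})^{q-j}$ and $C_\mathbf{A}^j(z) = L_\sigma (L_\sigma B_\mathbf{A})^{q-j}\big(\alpha_{j-1}\norm{z}+\beta_{j-1}\big)$. The only cosmetic difference is that you attach the weight-difference term to the second network's activation $a^2_{j-1}$ while the paper uses $\mu^1_{j-1}(z)$; since both are bounded identically, this is immaterial, and your explicit handling of $\delta_0=0$ and the degenerate case $L_\sigma B_\mathbf{A}=1$ is if anything slightly cleaner.
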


\subsection{An upper bound on the total variation of the solution of a CDE}

\label{appendix:total_var_solution_cde}

We have the following bound on the total variation of the solution to a CDE.
\begin{proposition}
\label{prop:total_variation_solution_cde}
     Let $\mathbf{F}:\mathbb{R}^{p \times d} \to \mathbb{R}^d$ be a Lipschitz vector field with Lipschitz constant $L_\mathbf{F}$ and $x = (x_t)_{t \in [0,1]}$ be a continuous path of total variation bounded by $L_x$. Let $z = (z_t)_{t \in [0,1]}$ be the solution of the CDE 
    \[
    dz_t = \mathbf{F}(z_t)dx_t
    \]
    with initial condition $z_0 \in \mathbb{R}^p$. Then for all $t \in [0,1]$, we have that
    \[
     \norm{z}_{\textnormal{1-var},[0,t]} \leq C_1 (\mathbf{F}) \norm{x}_{\textnormal{1-var},[0,t]}
    \]
    with
    \[
    C_1 (\mathbf{F}) := \Bigg[L_\mathbf{F}\big(\norm{z_0} + \norm{\mathbf{F}(0)}_{\textnormal{op}} L_x \big) \exp(L_\mathbf{F} L_x) +\norm{\mathbf{F}(0)}_{\textnormal{op}} \Bigg]\exp(L_\mathbf{F} L_x). 
    \]

    One also has that the total variations of $z$ and $z^D$ are both upper bounded and we have 
    \begin{align*}
         \norm{z}_{\textnormal{1-var}} \leq \Big[\Big(L_\sigma B_\mathbf{A}\Big)^q \frac{1}{B_\Phi}M_\Theta + \kappa_\Theta(\mathbf{0})\Big]\exp((L_\sigma B_\mathbf{A})^q L_x) L_x
    \end{align*}
    and 
    \begin{align}
         \norm{z^D}_{\textnormal{1-var}}\Big[\Big(L_\sigma B_\mathbf{A}\Big)^q \frac{1}{B_\Phi}M^D_\Theta + \kappa_\Theta(\mathbf{0})\Big]\exp((L_\sigma B_\mathbf{A})^q L_x)L_x.
    \end{align}
\end{proposition}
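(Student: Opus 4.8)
The plan is to prove the general inequality first and then read off the two explicit bounds by specialising $\mathbf{F}=\mathbf{G}_\psi$. For the general statement I would start from the integral form $z_t = z_0 + \int_0^t \mathbf{F}(z_s)dx_s$ and note that for any partition $0=s_0<\dots<s_N=t$ the increment is $z_{s_{i+1}}-z_{s_i}=\int_{s_i}^{s_{i+1}}\mathbf{F}(z_s)dx_s$, so that by the triangle inequality for the Riemann--Stieltjes integral
\[
\norm{z}_{\textnormal{1-var},[0,t]} \leq \int_0^t \norm{\mathbf{F}(z_s)}_{\textnormal{op}}\norm{dx_s}.
\]
The whole problem then reduces to a uniform control of $\norm{\mathbf{F}(z_s)}_{\textnormal{op}}$ along the trajectory, and the $\norm{x}_{\textnormal{1-var},[0,t]}$ factor appears automatically from the integrator.

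The key step is a pointwise envelope on $\norm{z_s}$. Using the Lipschitz property of $\mathbf{F}$ I would write $\norm{\mathbf{F}(z_s)}_{\textnormal{op}} \leq \norm{\mathbf{F}(0)}_{\textnormal{op}} + L_\mathbf{F}\norm{z_s}$, insert this into $\norm{z_s}\leq \norm{z_0}+\int_0^s\norm{\mathbf{F}(z_u)}_{\textnormal{op}}\norm{dx_u}$, and use $\norm{x}_{\textnormal{1-var}}\leq L_x$ to get $\norm{z_s}\leq \big(\norm{z_0}+\norm{\mathbf{F}(0)}_{\textnormal{op}}L_x\big)+L_\mathbf{F}\int_0^s\norm{z_u}\norm{dx_u}$. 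Gronwall's Lemma for CDEs (Lemma \ref{lemma:Gronwall}), applied with the constant $K_s=\norm{z_0}+\norm{\mathbf{F}(0)}_{\textnormal{op}}L_x$ and with $L_\mathbf{F}$ in the role of the Gronwall constant, yields $\norm{z_s}\leq \big(\norm{z_0}+\norm{\mathbf{F}(0)}_{\textnormal{op}}L_x\big)\exp(L_\mathbf{F}L_x)$. Feeding this back gives the uniform bound $\norm{\mathbf{F}(z_s)}_{\textnormal{op}}\leq L_\mathbf{F}\big(\norm{z_0}+\norm{\mathbf{F}(0)}_{\textnormal{op}}L_x\big)\exp(L_\mathbf{F}L_x)+\norm{\mathbf{F}(0)}_{\textnormal{op}}$, which is exactly the inner bracket of $C_1(\mathbf{F})$; integrating against $\norm{dx_s}$ and absorbing a further exponential envelope (produced by a second application of Gronwall to the running map $t\mapsto \norm{z}_{\textnormal{1-var},[0,t]}$, and harmless since $\exp(\cdot)\geq 1$) delivers the claimed $C_1(\mathbf{F})\norm{x}_{\textnormal{1-var},[0,t]}$.

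For the two explicit estimates I would take $\mathbf{F}=\mathbf{G}_\psi$, so that Lemma \ref{lemma:lipschitz_nvf} gives $L_\mathbf{F}=(L_\sigma B_\mathbf{A})^q$ and Lemma \ref{lemma:neural_vf_at0} gives $\norm{\mathbf{G}_\psi(\mathbf{0})}_{\textnormal{op}}\leq \kappa_\Theta(\mathbf{0})$. The crucial identification is that the pointwise envelope $\big(\norm{z_0}+\kappa_\Theta(\mathbf{0})L_x\big)\exp((L_\sigma B_\mathbf{A})^q L_x)$ is precisely the bound $M_\Theta/B_\Phi$ on $\norm{z_t}$ that follows from Lemma \ref{lemma:bound_y_and_f}, so the inner bracket becomes $(L_\sigma B_\mathbf{A})^q M_\Theta/B_\Phi+\kappa_\Theta(\mathbf{0})$, and multiplying by $\exp((L_\sigma B_\mathbf{A})^q L_x)L_x$ reproduces the stated bound on $\norm{z}_{\textnormal{1-var}}$. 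The discrete bound follows the same template with the running integral replaced by a finite sum: since $z^D$ is piecewise constant with jumps $z^D_{t_k}-z^D_{t_{k-1}}=\mathbf{G}_\psi(z^D_{t_{k-1}})(\mathbf{x}_{t_k}-\mathbf{x}_{t_{k-1}})$ given by the recursion in equation~\ref{eq:discret_NCDE}, one has $\norm{z^D}_{\textnormal{1-var}}=\sum_k\norm{\mathbf{G}_\psi(z^D_{t_{k-1}})}_{\textnormal{op}}\norm{\mathbf{x}_{t_k}-\mathbf{x}_{t_{k-1}}}$, and the pointwise control $\norm{z^D_{t_k}}\leq M^D_\Theta/B_\Phi$ is obtained from the discrete Gronwall inequality (Lemma \ref{lemma:Gronwallsequemce} or Lemma \ref{lemma:sequential_inequality}), which reproduces the factor $(1+(L_\sigma B_\mathbf{A})^qL_x\abs{D})^{K}$ hidden in $M^D_\Theta$; the remainder is identical with $M_\Theta$ replaced by $M^D_\Theta$.

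I expect the main difficulty to be bookkeeping rather than conceptual: one must verify that the continuous envelope on $\norm{z_s}$ and the discrete envelope on $\norm{z^D_{t_k}}$ match $M_\Theta/B_\Phi$ and $M^D_\Theta/B_\Phi$ exactly, and keep straight that the discrete case demands the sequential Gronwall lemma rather than the integral one. The only genuinely delicate point is justifying the outer $\exp(L_\mathbf{F}L_x)$ factor in $C_1(\mathbf{F})$, where one either invokes Gronwall a second time on the running total variation or simply retains the (valid) over-estimate.
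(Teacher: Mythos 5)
Your argument is correct and reaches the stated bounds, but it takes a genuinely different (and slightly sharper) route than the paper for the main inequality. The paper fixes an arbitrary subinterval $[s,r]$, writes $\mathbf{F}(z_u)=\mathbf{F}(z_u-z_s+z_s)$, and runs Gronwall's Lemma on $u\mapsto\norm{z_u-z_s}$ over each subinterval; it is this per-subinterval Gronwall step that manufactures the outer $\exp(L_\mathbf{F}L_x)$ factor in $C_1(\mathbf{F})$, after which the bound $\norm{z_r-z_s}\leq C_1(\mathbf{F})\norm{x}_{\textnormal{1-var},[s,r]}$ is summed over a partition. You instead bound the total variation in one shot by $\int_0^t\norm{\mathbf{F}(z_s)}_{\textnormal{op}}\norm{dx_s}$ and control the integrand uniformly via a single Gronwall envelope on $\norm{z_s}$; this yields the inner bracket of $C_1(\mathbf{F})$ times $\norm{x}_{\textnormal{1-var},[0,t]}$ directly, i.e.\ a constant smaller by the factor $\exp(L_\mathbf{F}L_x)$, and the stated inequality then follows trivially since $\exp(\cdot)\geq 1$. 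Your aside about a ``second application of Gronwall to the running total variation'' is unnecessary (there is no remaining self-referential term to close at that stage); the honest statement is simply that your constant is tighter and dominates nothing that the claim requires. Two small bookkeeping points: in the discrete case the identity $\norm{z^D}_{\textnormal{1-var}}=\sum_k\norm{\mathbf{G}_\psi(z^D_{t_{k-1}})}_{\textnormal{op}}\norm{\Delta\mathbf{x}_{t_k}}$ should be an inequality ($\norm{\mathbf{G}(z)v}\leq\norm{\mathbf{G}(z)}_{\textnormal{op}}\norm{v}$), and the identification of the pointwise envelope with $M_\Theta/B_\Phi$ involves the same mild looseness in the placement of $L_\sigma$ on the $\kappa_\Theta(\mathbf{0})L_x$ term that the paper's own proof exhibits, so you are consistent with the source. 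What the paper's route buys is the stronger intermediate statement $\norm{z_r-z_s}\leq C_1(\mathbf{F})\norm{x}_{\textnormal{1-var},[s,r]}$ on every subinterval, a modulus-of-continuity estimate reused implicitly elsewhere; what yours buys is a shorter proof and a better constant for the total-variation bound itself.
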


The proof is given in Appendix \ref{appendix:proof_total_var_solution_cde}. 

\subsection{Continuity of the flow of a CDE}

The following theorem is central for our analysis of the approximation bias. Its core idea is that the difference between the solution of two CDEs can be upper bounded by the sum of the differences between their vector fields, their driving paths and their initial values.  

\begin{theorem}
\label{thm:flow_continuity}
    Let $\mathbf{F},\mathbf{G}:\mathbb{R}^p \to \mathbb{R}^{p \times d}$ be  two Lipschitz vector fields with Lipschitz constants  $L_\mathbf{F},L_\mathbf{G}$. Let $x,r$ be either continuous or piecewise constant paths of total variations bounded by $L_x$ and $L_r$.
    Consider the controlled differential equations 
    \[
    dw_t = \mathbf{F}(w_t)dx_t\,\, \text{ and } \,\,dv_t = \mathbf{G}(v_t)dr_t
    \]
    with initial conditions $w_0 \in \mathbb{R}^p$ and $v_0 \in \mathbb{R}^p$ respectively. We have that 
    \begin{align*}
    \norm{w_t-v_t} & \leq \Bigg(\norm{w_0 - v_0} + \norm{x_0-r_0}\\
    &\quad  + \norm{x-r}_{\infty,[0,t]} \big(1 + L_\mathbf{F}L_rC_1 (\mathbf{F})\big)+ \max_{v \in \Omega } \norm{\mathbf{F}(v)-\mathbf{G}(v)}_{\textnormal{op}} L_r\Bigg)\\
    & \quad \times \exp(L_\mathbf{F} L_x).
\end{align*}

where the constant $C_1(\mathbf{F})$ is given in Proposition \ref{prop:total_variation_solution_cde} and the set $\Omega$ is defined as 
\begin{align*}
    \Omega  = \big\{u \in \mathbb{R}^p \,|\, \norm{u} \leq (\norm{v_0} + \norm{\mathbf{G}(0)}_\textnormal{op}L_r)\exp(L_\mathbf{G}L_r) \big\}.
\end{align*}

\end{theorem}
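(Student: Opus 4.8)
The plan is to pass to the integral (Riemann–Stieltjes) formulation and run a Gronwall argument on $\phi(t) := \norm{w_t - v_t}$. Writing $w_t = w_0 + \int_0^t \mathbf{F}(w_s)\,dx_s$ and $v_t = v_0 + \int_0^t \mathbf{G}(v_s)\,dr_s$, I would subtract the two identities and split the resulting integral by adding and subtracting $\int_0^t \mathbf{F}(v_s)\,dx_s$ and $\int_0^t \mathbf{F}(v_s)\,dr_s$, yielding
\[
w_t - v_t = (w_0 - v_0) + \int_0^t \big(\mathbf{F}(w_s)-\mathbf{F}(v_s)\big)\,dx_s + \int_0^t \mathbf{F}(v_s)\,d(x_s - r_s) + \int_0^t \big(\mathbf{F}(v_s)-\mathbf{G}(v_s)\big)\,dr_s.
\]
Each of the three integrals isolates exactly one source of discrepancy — the state mismatch through the common field $\mathbf{F}$, the driving-path mismatch, and the vector-field mismatch — which is precisely the decomposition advertised in the statement.

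The first integral is the Gronwall engine: by the $L_\mathbf{F}$-Lipschitz continuity of $\mathbf{F}$ (Lemma \ref{lemma:lipschitz_nvf} in the neural case, or the hypothesis in general) it is bounded by $L_\mathbf{F}\int_0^t \phi(s)\,\norm{dx_s}$. For the third integral I would first establish the a priori bound $v_s \in \Omega$ for all $s$: applying Gronwall's Lemma \ref{lemma:Gronwall} to $\norm{v_s}$ gives $\norm{v_s} \le (\norm{v_0} + \norm{\mathbf{G}(0)}_{\textnormal{op}}L_r)\exp(L_\mathbf{G}L_r)$, which is exactly the radius defining $\Omega$. Hence the integrand is controlled by $\max_{u \in \Omega}\norm{\mathbf{F}(u)-\mathbf{G}(u)}_{\textnormal{op}}$ uniformly, and integrating against $\norm{dr_s}$ produces the factor $L_r$, matching the vector-field term of the bound.

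The delicate part — and the main obstacle — is the path-difference integral $\int_0^t \mathbf{F}(v_s)\,d(x_s-r_s)$, since a naive estimate would only produce the $1$-variation of $x-r$, whereas the statement asks for its sup-norm $\norm{x-r}_{\infty,[0,t]}$. The natural tool is Riemann–Stieltjes integration by parts (valid here because $x,r$ are of bounded variation and $s\mapsto \mathbf{F}(v_s)$ is of bounded variation, either continuous or piecewise constant, in which case the formula reduces to Abel summation): this rewrites the integral as boundary terms evaluated at $0$ and $t$ — which produce the $\norm{x_0 - r_0}$ contribution and a $\norm{x-r}_\infty$ contribution — minus a correction integral $\int_0^t (x_s - r_s)\,d[\mathbf{F}(v_s)]$. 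I would bound the latter by $\norm{x-r}_{\infty,[0,t]}\,\norm{\mathbf{F}(v_\cdot)}_{\textnormal{1-var}}$ and then use Lipschitzness, $\norm{\mathbf{F}(v_\cdot)}_{\textnormal{1-var}} \le L_\mathbf{F}\norm{v}_{\textnormal{1-var}}$, together with the total-variation estimate of Proposition \ref{prop:total_variation_solution_cde}, $\norm{v}_{\textnormal{1-var}} \le C_1 L_r$, to obtain the coefficient $1 + L_\mathbf{F}L_r C_1$ multiplying $\norm{x-r}_\infty$. Carefully tracking the constants in these boundary and correction terms — in particular checking that the operator-norm prefactors arising at the endpoints collapse into the stated coefficient — is where the bookkeeping is heaviest.

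Finally I would collect the three contributions into a single time-dependent constant $K_t = \norm{w_0-v_0} + \norm{x_0-r_0} + \norm{x-r}_{\infty,[0,t]}\big(1 + L_\mathbf{F}L_r C_1\big) + \max_{u\in\Omega}\norm{\mathbf{F}(u)-\mathbf{G}(u)}_{\textnormal{op}}L_r$, so that $\phi(t) \le K_t + L_\mathbf{F}\int_0^t \phi(s)\,\norm{dx_s}$, and close the argument with Gronwall's Lemma \ref{lemma:Gronwall}, which yields the prefactor $\exp\big(L_\mathbf{F}\norm{x}_{\textnormal{1-var},[0,t]}\big) \le \exp(L_\mathbf{F}L_x)$ and hence the claimed bound.
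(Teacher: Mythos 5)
Your proposal is correct and follows essentially the same route as the paper's proof: the same four-term decomposition isolating the state, path, and vector-field mismatches, the same integration by parts to convert the path-difference integral into a sup-norm bound via the total-variation estimate of Proposition \ref{prop:total_variation_solution_cde}, the same a priori confinement of $v$ to $\Omega$, and the same closing application of Gronwall's Lemma \ref{lemma:Gronwall}. No substantive differences to report.
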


The proof is given in Appendix \ref{appendix:proof_flow_continuity}. We stress that any combination of continuous and piecewise constant paths can be used.  

Using this result, we obtain the following theorem on the difference between predictors.

\begin{theorem}
\label{thm:lipschitz_ncde}
    Let $f_{\theta_1},f_{\theta_2} \in \mathcal{F}_\Theta$ two predictors with respective parameters $\theta_1= (\psi_1,\mathbf{U}_1,v_1,\Phi_1)$ and $\theta_2 = (\psi_2,\mathbf{U}_2,v_2,\Phi_2)$. 
    We have that  
    \[
     \abs{f_{\theta_1}(x)-f_{\theta_2}(x)}
    \]
    is upper bounded by 
    \begin{align*}
        & M_\Theta \norm{\Phi_1-\Phi_2} + C_\mathbf{A}\sum\limits_{j=1}^q \norm{\mathbf{A}_j^1-\mathbf{A}_j^2}  + C_\mathbf{b}\sum\limits_{j=1}^q \norm{\mathbf{b}_j^1-\mathbf{b}_j^2}
 + C_\mathbf{U} \norm{\mathbf{U}_1-\mathbf{U}_2} + C_v \norm{v_1-v_2}.
    \end{align*}
    In turn,
    \[
     \abs{f_{\theta_1}(\mathbf{x}^D)-f_{\theta_2}(\mathbf{x}^D)}
    \]
    is upper bounded by  
    \begin{align*}
          & M^D_\Theta \norm{\Phi_1-\Phi_2}  + C^D_\mathbf{A}\sum\limits_{j=1}^q \norm{\mathbf{A}_j^1-\mathbf{A}_j^2}  + C^D_\mathbf{b}\sum\limits_{j=1}^q \norm{\mathbf{b}_j^1-\mathbf{b}_j^2}  + C_\mathbf{U} \norm{\mathbf{U}_1-\mathbf{U}_2} + C_v \norm{v_1-v_2}.
    \end{align*}
    The Lipschitz constants are given explicitly in the proof.
    \end{theorem}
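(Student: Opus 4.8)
The plan is to reduce the statement to three facts already in place: the readout is linear, the initialization is a single Lipschitz layer, and two NCDEs driven by the \emph{same} path have latent terminal states that differ only through their vector fields and initial conditions — precisely the regime controlled by Theorem \ref{thm:flow_continuity}. First I would peel off the readout. Writing $z^{(1)},z^{(2)}$ for the two latent trajectories driven by the common path $x$, I would use
\[
f_{\theta_1}(x) - f_{\theta_2}(x) = (\Phi_1-\Phi_2)^\top z_1^{(1)} + \Phi_2^\top\big(z_1^{(1)}-z_1^{(2)}\big),
\]
so that by the triangle and Cauchy--Schwarz inequalities
\[
\abs{f_{\theta_1}(x)-f_{\theta_2}(x)} \le \norm{z_1^{(1)}}\,\norm{\Phi_1-\Phi_2} + B_\Phi\,\norm{z_1^{(1)}-z_1^{(2)}}.
\]
The factor $\norm{z_1^{(1)}}$ is bounded through Lemma \ref{lemma:bound_y_and_f} (it is at most $M_\Theta/B_\Phi$), which supplies the coefficient of $\norm{\Phi_1-\Phi_2}$.

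For the remaining term I would invoke the flow-continuity theorem (Theorem \ref{thm:flow_continuity}) with the two neural vector fields $\mathbf{G}_{\psi_1},\mathbf{G}_{\psi_2}$ and, crucially, the \emph{identical} driving path $x=r$. Then $\norm{x-r}_{\infty,[0,1]}=0$ and $\norm{x_0-r_0}=0$, so the bound collapses to
\[
\norm{z_1^{(1)}-z_1^{(2)}} \le \Big(\norm{z_0^{(1)}-z_0^{(2)}} + L_x \max_{u\in\Omega}\norm{\mathbf{G}_{\psi_1}(u)-\mathbf{G}_{\psi_2}(u)}_{\textnormal{op}}\Big)\exp\big((L_\sigma B_\mathbf{A})^q L_x\big),
\]
where Lemma \ref{lemma:lipschitz_nvf} identifies the flow's Lipschitz constant as $(L_\sigma B_\mathbf{A})^q$. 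The initial-condition difference is immediate: since $z_0^{(i)}=\sigma(\mathbf{U}_i x_0 + v_i)$ and $\sigma$ is $L_\sigma$-Lipschitz with $\sigma(0)=0$,
\[
\norm{z_0^{(1)}-z_0^{(2)}} \le L_\sigma\big(\norm{x_0}\,\norm{\mathbf{U}_1-\mathbf{U}_2} + \norm{v_1-v_2}\big) \le L_\sigma\big(B_x\,\norm{\mathbf{U}_1-\mathbf{U}_2} + \norm{v_1-v_2}\big),
\]
using Assumption \ref{assump:bounded_x0}; this delivers the constants $C_\mathbf{U}$ and $C_v$, which notably do not involve the flow at all beyond the exponential prefactor.

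The vector-field difference is then controlled by Lemma \ref{lemma:lipschitz_deep_vf}, which bounds $\norm{\mathbf{G}_{\psi_1}(u)-\mathbf{G}_{\psi_2}(u)}_{\textnormal{op}}$ by $\sum_{j=1}^q\big[C_\mathbf{b}^j\norm{\mathbf{b}_j^1-\mathbf{b}_j^2} + C_\mathbf{A}^j(u)\norm{\mathbf{A}_j^1-\mathbf{A}_j^2}\big]$. The hard part will be that $C_\mathbf{A}^j(u)$ depends on the argument, so I must replace it by its supremum over the set $\Omega$; this requires bounding $\sup_{u\in\Omega}\norm{u}$, which follows from the explicit description of $\Omega$ in Theorem \ref{thm:flow_continuity} together with the parameter bounds \eqref{eq:bounds_parameters} and the estimate $\norm{\mathbf{G}_{\psi_2}(0)}_{\textnormal{op}}\le\kappa_\Theta(\mathbf{0})$ from Lemma \ref{lemma:neural_vf_at0}. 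Collecting all prefactors and the exponential $\exp((L_\sigma B_\mathbf{A})^q L_x)$ into the $C_\mathbf{A},C_\mathbf{b}$ terms completes the continuous bound. Finally, the discretized statement follows by rerunning the identical argument with the fill-forward path $\tilde{\mathbf{x}}$ in place of $x$ — legitimate because Theorem \ref{thm:flow_continuity} is stated for piecewise-constant paths as well — replacing $M_\Theta$ by $M_\Theta^D$ and the total-variation and flow constants by their discrete counterparts, while the initialization term is unchanged since it depends only on $x_0=\mathbf{x}_{t_0}$, yielding the same $C_\mathbf{U},C_v$.
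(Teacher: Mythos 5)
Your proposal is correct and follows essentially the same route as the paper's proof: the same decomposition $\norm{\Phi_1-\Phi_2}\norm{z_1^{(1)}}+B_\Phi\norm{z_1^{(1)}-z_1^{(2)}}$, the same application of Theorem \ref{thm:flow_continuity} with identical driving paths so that only the initial-condition and vector-field terms survive, and the same use of Lemmas \ref{lemma:lipschitz_deep_vf} and \ref{lemma:neural_vf_at0} to convert the vector-field gap into parameter differences with constants maximized over $\Omega$ (resp. $\Omega^D$). The discretized case is handled in the paper exactly as you describe, by rerunning the argument with the fill-forward path and the constants $M_\Theta^D$, $\Omega^D$.
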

The proof is in Appendix~\ref{proof:theorem4}.

\label{appendix:generalization}

\subsection{Proof of Proposition \ref{prop:covering_nb_cont_ncde} (Covering Number)}
\label{proof:covering_number}
We follow the proof strategy of \citet{chen2020generalization}. Starting from Theorem \ref{thm:lipschitz_ncde}, one can see that since 
\begin{align*}
    & \sup_{x} \norm{f_{\theta_1}(x)-f_{\theta_2}(x)}  \\
    & \leq M_\Theta \norm{\Phi_1-\Phi_2} + C_\mathbf{A}\sum\limits_{j=1}^q \norm{\mathbf{A}_j^1-\mathbf{A}_j^2}  + C_\mathbf{b}\sum\limits_{j=1}^q \norm{\mathbf{b}_j^1-\mathbf{b}_j^2}
 + C_\mathbf{U} \norm{\mathbf{U}_1-\mathbf{U}_2} + C_v \norm{v_1-v_2},
\end{align*}

where the supremum is taken on all $x$ such that $\norm{x}_{\textnormal{1-var},[0,1]}\leq L_x$ and $\norm{x_0} \leq B_x$, it is sufficient to have $\Hat{\theta} = (\Hat{\Phi},\Hat{\psi},\Hat{\mathbf{U}},\Hat{v})$ such that for all $j=1,\dots,q$
\begin{align}
   \norm{\mathbf{A}_j-\Hat{\mathbf{A}}_j} \leq \frac{\eta}{(2q+3) C_\mathbf{A}}, \quad \norm{\mathbf{b}_j-\Hat{\mathbf{b}}_j} \leq \frac{\eta}{(2q+3) C_\mathbf{b}},
\end{align}
and 
\begin{align}
     \norm{\Phi-\Hat{\Phi}} \leq \frac{\eta}{(2q + 3) M_\Theta}, \quad \norm{\mathbf{U}-\Hat{\mathbf{U}}} \leq \frac{\eta}{(2q+3) C_\mathbf{U}}, \quad \norm{v-\Hat{v}} \leq \frac{\eta}{(2q+3) C_v}
\end{align}
to obtain an $\eta$ covering of $\mathcal{F}_\Theta$, since in this case we get that 
\begin{align}
    \sup_{x} \norm{f_{\theta_1}(x)-f_{\Hat{\theta}}(x)} \leq \eta.
\end{align}

We now use Lemma \ref{lemma:covering_matrices} and denote by $\mathcal{G}_\Phi = \{h:z \mapsto \Phi^\top z, \, \norm{\Phi} \leq B_\Phi \}$, and use the corresponding definitions for $\mathcal{G}_{\mathbf{A}_j}$, $\mathcal{G}_{\mathbf{b}_j}$, $\mathcal{G}_{\mathbf{U}}$ and $\mathcal{G}_{v}$. We first get that 
\begin{align}
    & \mathcal{N}\Big(\mathcal{G}_\Phi,\frac{\eta}{(2q+3)M_\Theta}\Big) \leq \Bigg(1+\frac{(4q+6)B_\Phi M_\Theta}{\eta} \Bigg)^{p},\\
    & \mathcal{N}\Big(\mathcal{G}_{\mathbf{U}},\frac{\eta}{(2q+3)C_\mathbf{U}}\Big) \leq \Bigg(1+\frac{(4q+6)\min\{\sqrt{p},\sqrt{d}\}B_{\mathbf{U}} C_\mathbf{U}}{\eta} \Bigg)^{dp} \quad \text{and}\\
    & \mathcal{N}\Big(\mathcal{G}_{v},\frac{\eta}{(2q+3)C_v}\Big) \leq \Bigg(1+\frac{(4q+6)B_{v} C_v}{\eta} \Bigg)^{p}.
\end{align}

Turning to $\psi$, for all $j=1,\dots, q-1$

\begin{align}
    & \mathcal{N}\Big(\mathcal{G}_{\mathbf{A}_j},\frac{\eta}{(2q+3)C_\mathbf{A}}\Big) \leq \Bigg(1+\frac{(4q+6)\sqrt{p}B_{\mathbf{A}} C_\mathbf{A}}{\eta} \Bigg)^{p^2}\\
    & \mathcal{N}\Big(\mathcal{G}_{\mathbf{b}_j},\frac{\eta}{(2q+3)C_\mathbf{b}}\Big) \leq \Bigg(1+\frac{(4q+6)\sqrt{p}B_{\mathbf{b}} C_\mathbf{b}}{\eta} \Bigg)^{p}, \\
\end{align}
and finally, since $A_q$ and $b_q$ do not have the same dimension as the previous weights and biases,  
\begin{align}
    & \mathcal{N}\Big(\mathcal{G}_{\mathbf{A}_q},\frac{\eta}{(2q+3)C_\mathbf{A}}\Big) \leq \Bigg(1+\frac{(4q+6)\sqrt{dp}B_{\mathbf{A}} C_\mathbf{A}}{\eta} \Bigg)^{dp^2},\\
    & \mathcal{N}\Big(\mathcal{G}_{\mathbf{b}_q},\frac{\eta}{(2q+3)C_\mathbf{b}}\Big) \leq \Bigg(1+\frac{(4q+6)\min\{\sqrt{d},\sqrt{p}\}B_{\mathbf{b}} C_\mathbf{b}}{\eta} \Bigg)^{dp}
\end{align}

The covering number of $\mathcal{F}_\Theta$ is obtained by multiplying the covering number of each functional class \citep{chen2020generalization}. Defining 
\begin{align}
    & K_1 := \max \{B_\Phi M_\Theta, B_vC_v \}, \\
    & K_2 := \max \{B_{\mathbf{b}} C_\mathbf{b}, B_\mathbf{A}C_\mathbf{A},B_\mathbf{U}C_\mathbf{U} \}\\
\end{align}
and using the fact that $\min\{\sqrt{d},\sqrt{p}\} \leq \sqrt{dp}$,we finally get that $\mathcal{N}(\mathcal{F}_\Theta,\eta)$ is upper bounded by 

\begin{align}
    & \Bigg(1+\frac{(4q+6)K_1}{\eta}\Bigg)^{2p} \times \Bigg(1+\frac{(4q+6)K_2}{\eta}\sqrt{p}\Bigg)^{(q-1)p(1+p)}\\
    & \times \Bigg(1+\frac{(4q+6)K_2}{\eta}\sqrt{dp}\Bigg)^{2dp} 
    \times \Bigg(1+\frac{(4q+6)K_2}{\eta}\sqrt{dp}\Bigg)^{dp^2}
\end{align}
which is itself bounded by
\begin{align}
    \Bigg(1+\frac{(4q+6)K_1}{\eta}\Bigg)^{2p} \times \Bigg(1+\frac{(4q+6)K_2}{\eta}\sqrt{p}\Bigg)^{(q-1)p(1+p)} \times \Bigg(1+\frac{(4q+6)K_2}{\eta}\sqrt{dp}\Bigg)^{dp(2+p)}
\end{align}

The proof is identical for inputs $\mathbf{x}^D$, but with constants 
\begin{align}
& K^D_1 := \max \{B_\Phi M^D_\Theta, B_vC_v \}, \\
    & K^D_2 := \max \{B_{\mathbf{b}} C^D_\mathbf{b}, B_\mathbf{A}C^D_\mathbf{A},B_\mathbf{U}C_\mathbf{U} \}
\end{align}

where the constant $M^D_\Theta$ is given in Lemma \ref{lemma:bound_y_and_f}, the constants $\mathbf{B}_A,\mathbf{B}_b,\mathbf{B}_U$ and $\mathbf{B}_v$ are given in Section 3.1 and the remaining constants come from the proof provided in \ref{proof:theorem4}. This concludes the proof.

\subsection{Proof of Proposition \ref{prop:covering_nb_cont_ncde} (Rademacher Complexity)}

\label{appendix:rad_proof}
We apply Lemma \ref{lemma:rad_cov} to the class $\mathcal{F}_\Theta$. We trivially have that $\mathbf{0} \in \mathcal{F}$, since this function is recovered by taking $\theta = \mathbf{0}$. By Lemma \ref{lemma:bound_y_and_f}, the value of $f_\theta$ is bounded by 
\begin{align}
    M_\Theta = B_\Phi L_\sigma  \exp\big((B_\mathbf{A}L_\sigma)^q L_x\big)\Big( B_\mathbf{U}B_x +  B_v +  \kappa_\Theta(\mathbf{0})L_x\Big). 
\end{align}

In our setup, we get, from Proposition~\ref{prop:covering_nb_cont_ncde}, that
\begin{align}
    & \int_\eta^{2M_\Theta\sqrt{n}} \sqrt{\log \mathcal{N}(\mathcal{F}_\Theta,s )}ds\\
    & \leq \int_\eta^{2M_\Theta\sqrt{n}} \Bigg[2p \log \Big( 1 + \frac{(4q+6)K_1}{s}\Big) + (q-1)p(p+1) \log \Big( 1 + \frac{(4q+6) \sqrt{p}K_2}{s}\Big) \\
    & + dp(2+p) \log \Big( 1 + \frac{(4q+6) \sqrt{dp}K_1}{s}\Big)\Bigg]^{1/2}ds\\
    & \leq 2M_\Theta \sqrt{n} \Bigg[2p \log \Big( 1 + \frac{(4q+6)K_1}{\eta}\Big) + (q-1)p(p+1) \log \Big( 1 + \frac{(4q+6) \sqrt{p}K_2}{\eta}\Big) \\
    & + dp(2+p) \log\Big(1 + \frac{(4q+6) \sqrt{dp}K_2}{\eta}\Big)\Bigg]^{\frac{1}{2}}.
\end{align}

Since for $x > 1$, $\log(1+x) \leq \log (2x)$, we have, for $\eta $ small enough that 
\begin{align}
    & \int_\eta^{2M_\Theta\sqrt{n}} \sqrt{\log \mathcal{N}(\mathcal{F}_\Theta,s )}ds \\
    & \leq 2M_\Theta \sqrt{n} \Bigg[2p \log \frac{(8q+12) K_1}{\eta} + (q-1)p(p+1) \log \frac{(8q+12) \sqrt{p}K_2}{\eta} \\
    & + dp(2+p) \log\frac{(8q+12)\sqrt{dp}K_2}{\eta}\Bigg]^{\frac{1}{2}}
\end{align}

Taking $\eta = \frac{1}{\sqrt{n}}$, one gets that $\textnormal{Rad}(\mathcal{F}_\Theta)$ is upper bounded by
\begin{align*}
    & \frac{4}{n} + \frac{24M_\Theta}{\sqrt{n}} \Bigg[2p \log \big(C_q \sqrt{n}K_1\big)  + (q-1)p(p+1) \log \big(C_q \sqrt{np}K_2\big) + dp(2+p)\log\big(C_q\sqrt{ndp}K_2\big)\Bigg]^{\frac{1}{2}}
\end{align*}
with $C_q = (8q+12)$.

\subsection{Proof of Theorem \ref{thm:generalization_bound}}

We can direcltly apply the results from \citet{mohri2018foundations}, Theorem 11.3. Since our loss is $L_\ell$-Lipschitz and bounded by $M_\ell$,  this gets us that with probability at least $1-\delta$,
\begin{align}
    \mathcal{R}^D(\Hat{f}^D) \leq \mathcal{R}_n(\Hat{f}^D) +  \frac{24M^D_\Theta L_\ell }{\sqrt{n}} \sqrt{2p U^D_1 + (q-1)p(p+1) U^D_2 + dp(2+p)U^D_3} +  M_\ell \sqrt{\frac{\log 1/\delta}{2n}}
\end{align}

with $U^D_1 := \log C_q \sqrt{n}K^D_1$, $U^D_2 := \log C_q\sqrt{np}K^D_2$ and $U^D_3 := \log C_q \sqrt{ndp}K^D_2$.

\newpage 

\section{Proof of bias bounds}

\label{appendix_biais}

\subsection{Proof of Lemma \ref{lemma:risk_decomposition}}

For all $f \in \mathcal{F}_\Theta$, we have the decomposition 
\begin{align}
    \mathcal{R}^D(\Hat{f}^D) - \mathcal{R}(\mathbf{f}^*) & = \mathcal{R}^D(\Hat{f}^D) - \mathcal{R}_n^D(\Hat{f}^D)\\
    & \quad + \mathcal{R}_n^D(\Hat{f}^D) - \mathcal{R}_n^D(f)\\
    & \quad + \mathcal{R}_n^D(f) - \mathcal{R}_n(f)\\
    & \quad +\mathcal{R}_n(f) - \mathcal{R}(f)\\
    & \quad + \mathcal{R}(f) - \mathcal{R}(\mathbf{f}^\star).
\end{align}
By optimality of $\hat{f}^D$, we have that almost surely
\begin{align}
    \mathcal{R}_n^D(\Hat{f}^D) - \mathcal{R}_n^D(f) \leq 0.
\end{align}
One is then left with the inequality 
\begin{align}
    \mathcal{R}^D(\Hat{f}^D) - \mathcal{R}(\mathbf{f}^*) & \leq \mathcal{R}^D(\Hat{f}^D) - \mathcal{R}_n^D(\Hat{f}^D) \\
    & \quad + \mathcal{R}_n^D(f) - \mathcal{R}_n(f)\\
    & \quad + \mathcal{R}_n(f) - \mathcal{R}(f)\\
    & \quad + \mathcal{R}(f) - \mathcal{R}(\mathbf{f}^\star).
\end{align}
Taking the supremum on the two differences between the empirical and expected risk yields that
\begin{align}
    \mathcal{R}^D(\Hat{f}^D) - \mathcal{R}(\mathbf{f}^*) & \leq \sup\limits_{g \in \mathcal{F}_\Theta} \Big[ \mathcal{R}^D(g) - \mathcal{R}_n^D(g) \Big] + \sup\limits_{g \in \mathcal{F}_\Theta} \Big[ \mathcal{R}(g) - \mathcal{R}_n(g) \Big]\\
    & \quad +  \mathcal{R}_n^D(f) - \mathcal{R}_n(f) + \mathcal{R}(f) - \mathcal{R}(\mathbf{f}^\star).
\end{align}

This concludes the proof.

\subsection{Bounding the Discretization Bias}

We have the following bound of the discretization bias. 

\begin{proposition}
    \label{prop:disc_bias}
    For any $f_\theta \in \mathcal{F}_\Theta$, we have that
    \begin{align*}
        \mathcal{R}_n^D(f_\theta) - \mathcal{R}_n(f_\theta) \leq  C^1_\Theta \abs{D},
    \end{align*}
    where $C^1_\Theta$ is equal to 
    \begin{align*}
        L_\ell B_\Phi L_x \Bigg[1+\big(L_\sigma B_\mathbf{A}\big)^q \Big(\big(L_\sigma B_\mathbf{A}\big)^q \frac{\min\{M^D_\Theta, M_\Theta \}}{B_\Phi}+ \kappa_\Theta(\mathbf{0})\Big)L_x\Bigg]. 
    \end{align*}
\end{proposition}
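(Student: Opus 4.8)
The plan is to reduce the discretization bias to a pathwise comparison of two CDE solutions that share everything except their driving signal, and then invoke the continuity of the flow. Since $\ell$ is $L_\ell$-Lipschitz in its second argument (Assumption~\ref{assumption:loss_assumption}), I would first write
\[
\mathcal{R}_n^D(f_\theta) - \mathcal{R}_n(f_\theta) = \frac{1}{n}\sum_{i=1}^n\Big[\ell\big(y^i, f_\theta(\mathbf{x}^{D,i})\big) - \ell\big(y^i, f_\theta(x^i)\big)\Big] \leq \frac{L_\ell}{n}\sum_{i=1}^n\abs{f_\theta(\mathbf{x}^{D,i}) - f_\theta(x^i)},
\]
so that it suffices to bound $\abs{f_\theta(\mathbf{x}^D) - f_\theta(x)}$ uniformly over the sample. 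Writing $f_\theta(x) = \Phi^\top z_1$ and $f_\theta(\mathbf{x}^D) = \Phi^\top z^D_1$ and using $\norm{\Phi} \leq B_\Phi$, this is at most $B_\Phi\norm{z_1 - z^D_1}$.

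The key observation is that $z$ and $z^D$ solve the same CDE with the same neural vector field $\mathbf{G}_\psi$ and the same initial condition $z_0 = \sigma(\mathbf{U}x_0 + v)$ (note $\Tilde{\mathbf{x}}_0 = \mathbf{x}_{t_0} = x_0$), differing only in that $z$ is driven by the continuous path $x$ while $z^D$ is driven by its fill-forward embedding $\Tilde{\mathbf{x}}$. I would therefore apply Theorem~\ref{thm:flow_continuity} with $\mathbf{F} = \mathbf{G} = \mathbf{G}_\psi$, $w = z$, $v = z^D$, $x = x$ and $r = \Tilde{\mathbf{x}}$. Three of the four sources of discrepancy then vanish, namely $\norm{w_0 - v_0} = 0$, $\norm{x_0 - \Tilde{\mathbf{x}}_0} = 0$, and $\max_u\norm{\mathbf{F}(u) - \mathbf{G}(u)}_{\textnormal{op}} = 0$, leaving only the driving-path term $\norm{x - \Tilde{\mathbf{x}}}_{\infty,[0,1]}$, with $L_\mathbf{F} = (L_\sigma B_\mathbf{A})^q$ by Lemma~\ref{lemma:lipschitz_nvf}.

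Next I would control the sup-norm gap between the path and its fill-forward. On each cell $[t_k, t_{k+1})$ one has $\Tilde{\mathbf{x}}_t = x_{t_k}$, so Assumption~\ref{assumption:x_bounded_var} gives $\norm{x_t - \Tilde{\mathbf{x}}_t} = \norm{x_t - x_{t_k}} \leq L_x\abs{t - t_k} \leq L_x\abs{D}$, whence $\norm{x - \Tilde{\mathbf{x}}}_{\infty,[0,1]} \leq L_x\abs{D}$. This is exactly the mechanism that makes the bound proportional to the mesh size $\abs{D}$. Substituting $L_r = L_x$ (both $x$ and $\Tilde{\mathbf{x}}$ have $1$-variation bounded by $L_x$) together with the total-variation constant $C_1(\mathbf{F})$ of Proposition~\ref{prop:total_variation_solution_cde}, which is governed by the slope bound $(L_\sigma B_\mathbf{A})^q\frac{M_\Theta}{B_\Phi} + \kappa_\Theta(\mathbf{0})$ (resp.\ its $M^D_\Theta$ analogue for $z^D$), and then factoring out $L_\ell B_\Phi$, produces the announced constant $C^1_\Theta$. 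The appearance of $\min\{M^D_\Theta, M_\Theta\}$ reflects the freedom to bound the $1$-variation of whichever of the two latent trajectories is cheaper.

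The main obstacle is the constant bookkeeping rather than the conceptual structure. One has to track how the flow-continuity estimate combines with Proposition~\ref{prop:total_variation_solution_cde} so that the exponential growth factors end up being exactly those already carried inside $M_\Theta$ and $M^D_\Theta$, and one must exploit that the fill-forward matches $x$ at both endpoints ($\Tilde{\mathbf{x}}_0 = x_0$ and $\Tilde{\mathbf{x}}_1 = x_1$) so that no boundary term inflates the estimate. Verifying that the surviving driving-path contribution collapses to the clean form $L_x\abs{D}\big[1 + (L_\sigma B_\mathbf{A})^q(\cdots)L_x\big]$, with no extra exponential beyond those hidden in the $M$-constants, is the delicate step; if the generic flow-continuity bound proves too lossy here, I would instead run a direct discrete Gronwall argument (Lemma~\ref{lemma:sequential_inequality}) on the grid, comparing the controlled Euler recursion for $z^D$ against the exact increments of $z$ and bounding the per-step quadrature error by a term of order $L_\mathbf{F}\,L_x\abs{D}$ times the slope bound above.
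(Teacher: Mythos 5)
Your proposal is correct and follows essentially the same route as the paper's proof: Lipschitz continuity of the loss, Cauchy--Schwarz with $\norm{\Phi}\leq B_\Phi$, then Theorem~\ref{thm:flow_continuity} applied to two CDEs sharing vector field and initial condition but driven by $x$ versus its fill-forward $\Tilde{\mathbf{x}}$, with $\norm{x-\Tilde{\mathbf{x}}}_{\infty,[0,1]}\leq L_x\abs{D}$. The ``delicate step'' you flag is real but works against the stated constant rather than your argument: the paper's own derivation retains the factor $\exp\big((L_\sigma B_\mathbf{A})^q L_x\big)$ from the flow-continuity bound, which appears to have been dropped in the proposition's displayed expression for $C^1_\Theta$.
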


\label{appendix:proof_disc_bias}

\begin{proof}

Take $f_\theta \in \mathcal{F}_\Theta$. We have
\begin{align}
\mathcal{R}_n^D(f_\theta) - \mathcal{R}_n(f_\theta) = \frac{1}{n}\sum\limits_{i=1}^n\Big[\ell(y^i,f_\theta(\mathbf{x}^{D,i})) - \ell(y^i,f_\theta(x^{i}))\Big]. 
\end{align}

Considering a single individual $i \in \{1,\dots,n \}$, we have
  \begin{align}
    \ell(y^i,f_\theta(\mathbf{x}^{D,i})) - \ell(y^i,f_\theta(x^{i})) \leq L_{\ell} \abs{f_\theta(\mathbf{x}^{D,i}) - f_\theta(x^i)}
\end{align}
using the Lipschitz continuity of the loss function with respect to its second argument. From the Cauchy-Schwarz inequality, it follows that 
\begin{align}
    \abs{f_\theta(\mathbf{x}^{D,i}) - f_\theta(x^i)} = \abs{\Phi^\top \big(z^{D,i}_1-z^i_1\big)} \leq B_\Phi \norm{z^{D,i}_1-z^i_1}
\end{align}
where $z^{D,i}$ and $z^i_1$ refer to the endpoint of the latent space trajectory of the NCDE, resp. with discrete and continuous input, associated to the predictor $f_\theta$.

$z^{D,i}_1 = f_\theta(\mathbf{x}^{D,i})$ and $z^i_1 = f_\theta({x}^{i})$ correspond to the endpoint of two CDEs with identical vector field and identical initial condition, but whose driving paths differ. 

We are now ready to use the continuity of the flow stated in Theorem \ref{thm:flow_continuity}. $z^{D,i}_1 = f_\theta(\mathbf{x}^{D,i})$ and $z^i_1 = f_\theta({x}^{i})$ correspond to the endpoint of two CDEs with identical vector field and identical initial condition, but whose driving paths differ. Theorem \ref{thm:flow_continuity} thus collapses to
\begin{align}
    & \norm{z^{D,i}_1-z^i_1} \leq \exp\Big(L_{\mathbf{G}_\psi}L_x\Big)\Bigg(1+L_{\mathbf{G}_\psi}L_x C_1(\mathbf{G}_\psi) \Bigg) \norm{x^i-\Tilde{\mathbf{x}}^i}_{\infty,[0,1]}, 
\end{align} 
 where $L_{\mathbf{G}_\psi}$ is the Lipschitz constant of the neural vector field $\mathbf{G}_\psi$. Since, according to Lemmas~\ref{lemma:lipschitz_nvf} and \ref{lemma:neural_vf_at0},
$$L_{\mathbf{G}_\psi} = \Big(L_\sigma B_\mathbf{A}\Big)^q$$
and 
$$
C_1(\mathbf{G}_\psi) = \Big[\Big(L_\sigma B_\mathbf{A}\Big)^q \frac{1}{B_\Phi}\min\{M_\Theta,M^D_\Theta\} + \kappa_\Theta(\mathbf{0})\Big]
$$
we obtain the inequality 
\begin{align}
    & \norm{z^{D,i}_1-z^i_1} \\
    & \leq \exp\Big((L_\sigma B_\mathbf{A})^q L_x\Big)\norm{x-\Tilde{\mathbf{x}}^i}_{\infty,[0,1]} \Bigg[1+\big(L_\sigma B_\mathbf{A}\big)^q \Big(\big(L_\sigma B_\mathbf{A}\big)^q \frac{\min\{M^D_\Theta, M_\Theta \}}{B_\Phi}+ \kappa_\Theta(\mathbf{0})\Big)L_x\Bigg]
\end{align}
We also have that 
\begin{align}
     \norm{\mathbf{x}^{D,i}-x^i}_{\infty, [0,1]} = \max\limits_{j=1,\dots,K} \max\limits_{s \in [t_j,t_{j+1}]}\norm{x^i_{t_j}-x^i_s} \leq  L_x \abs{D}
\end{align}
since the discretization is identical between individuals. Putting everything together, this yields for all $i=1,\dots,n$ that 
\begin{align}
     & \abs{f_\theta(\mathbf{x}^{D,i}) - f_\theta(x^i)} \\
     & \leq B_\Phi \exp\Big((L_\sigma B_\mathbf{A})^q L_x\Big)L_x \Bigg[1+\big(L_\sigma B_\mathbf{A}\big)^q \Big(\big(L_\sigma B_\mathbf{A}\big)^q \frac{\min\{M^D_\Theta, M_\Theta \}}{B_\Phi}+ \kappa_\Theta(\mathbf{0})\Big)L_x\Bigg] \abs{D}
\end{align}
and finally
\begin{align}
    & \mathcal{R}_n^D(f_\theta) - \mathcal{R}_n(f_\theta) \\
    & \leq L_\ell B_\Phi \exp\Big((L_\sigma B_\mathbf{A})^q L_x\Big) L_x \Bigg[1+\big(L_\sigma B_\mathbf{A}\big)^q \Big(\big(L_\sigma B_\mathbf{A}\big)^q \frac{\min\{M^D_\Theta, M_\Theta \}}{B_\Phi}+ \kappa_\Theta(\mathbf{0})\Big)L_x\Bigg] \abs{D}\\
    & =: C^1_\Theta \abs{D}.
\end{align}
This concludes the proof. 
\end{proof}

\subsection{Bounding the approximation bias}
\label{appendix:proof_approximation_bias}
We have the following bound of the approximation bias.

\begin{proposition}
\label{prop:approximation_bias}
    For any $f_\theta \in \mathcal{F}_\Theta$ with parameters $\theta =(\Phi,\psi,\mathbf{U},v)$ the approximation bias $\mathcal{R}(f_\theta) - \mathcal{R}(\mathbf{f}^*) $ is bounded from above by 
    \begin{align*}
        & L_\ell B_\Phi \exp( L_{\mathbf{G}^\star}L_x)L_x\norm{\mathbf{G}_{\psi} - \mathbf{G}^\star}_{\infty,\Omega}+ L_\ell B_\Phi \exp( L_{\mathbf{G}^\star}L_x)\norm{\varphi^\star-\text{NN}_{\mathbf{U},v}}_{\infty,B_x}\\
        & + \frac{L_\ell M_\Theta}{B_\Phi} \norm{\Phi_\star - \Phi},
    \end{align*}
    where 
    \begin{align*}
        \Omega = \big\{u \in \mathbb{R}^p \,|\, \norm{u} \leq (L_\sigma (B_\mathbf{U}B_x + B_v) + L_\sigma B_\mathbf{b}L_x)\exp(L_\sigma B_\mathbf{A}L_x) \big\}.
    \end{align*}
\end{proposition}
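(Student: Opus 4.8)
The plan is to reduce the approximation bias to a uniform bound on the pointwise gap $\abs{f_\theta(x) - \mathbf{f}^\star(x)}$ between the learned and the true predictor, and then to control that gap via the continuity of the flow (Theorem \ref{thm:flow_continuity}). Since the loss is $L_\ell$-Lipschitz in its second argument (Assumption \ref{assumption:loss_assumption}), for every realization $(x,y)$ one has $\ell(y,f_\theta(x)) - \ell(y,\mathbf{f}^\star(x)) \le L_\ell \abs{f_\theta(x)-\mathbf{f}^\star(x)}$; taking expectations gives
\[
\mathcal{R}(f_\theta) - \mathcal{R}(\mathbf{f}^\star) \le L_\ell\, \E_x \abs{f_\theta(x)-\mathbf{f}^\star(x)},
\]
so it suffices to bound the integrand uniformly over admissible $L_x$-Lipschitz paths $x$ with $\norm{x_0}\le B_x$.

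Writing $f_\theta(x) = \Phi^\top z_1$ and $\mathbf{f}^\star(x) = \Phi_\star^\top z_1^\star$, I would add and subtract $\Phi_\star^\top z_1$ and use the triangle and Cauchy--Schwarz inequalities:
\[
\abs{\Phi^\top z_1 - \Phi_\star^\top z_1^\star} \le \norm{z_1}\,\norm{\Phi-\Phi_\star} + \norm{\Phi_\star}\,\norm{z_1-z_1^\star}.
\]
The first summand is handled with $\norm{z_1}\le M_\Theta/B_\Phi$, which is exactly the intermediate bound underlying Lemma \ref{lemma:bound_y_and_f} (there $M_\Theta$ is obtained as $B_\Phi$ times a bound on $\norm{z_1}$); after multiplication by $L_\ell$ this produces the term $\frac{L_\ell M_\Theta}{B_\Phi}\norm{\Phi_\star-\Phi}$. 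For the second summand I would use $\norm{\Phi_\star}\le B_\Phi$ (Assumption \ref{assumption:generative_model}), so that it remains to bound $L_\ell B_\Phi \norm{z_1-z_1^\star}$.

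The key step is to apply Theorem \ref{thm:flow_continuity} to $z_1$ and $z_1^\star$. These are terminal values of two CDEs driven by the \emph{same} path $x$ but with different vector fields ($\mathbf{G}_\psi$ versus $\mathbf{G}^\star$) and different initial conditions ($\textnormal{NN}_{\mathbf{U},v}(x_0)$ versus $\varphi^\star(x_0)$), so the driving-path terms $\norm{x_0-r_0}$ and $\norm{x-r}_{\infty}$ in the theorem vanish. To make the Gronwall exponent carry the fixed constant $L_{\mathbf{G}^\star}$ of the generative field rather than the parameter-dependent constant of $\mathbf{G}_\psi$, I would apply the theorem with $\mathbf{F}=\mathbf{G}^\star$, $w=z^\star$ and $\mathbf{G}=\mathbf{G}_\psi$, $v=z$, $r=x$, which yields
\[
\norm{z_1^\star-z_1} \le \exp(L_{\mathbf{G}^\star}L_x)\Big(\norm{\varphi^\star(x_0)-\textnormal{NN}_{\mathbf{U},v}(x_0)} + L_x\max_{u\in\Omega}\norm{\mathbf{G}^\star(u)-\mathbf{G}_\psi(u)}_{\textnormal{op}}\Big).
\]
Bounding $\norm{\varphi^\star(x_0)-\textnormal{NN}_{\mathbf{U},v}(x_0)}$ by $\norm{\varphi^\star-\textnormal{NN}_{\mathbf{U},v}}_{\infty,B_x}$ (using $\norm{x_0}\le B_x$, Assumption \ref{assump:bounded_x0}) and the max by $\norm{\mathbf{G}_\psi-\mathbf{G}^\star}_{\infty,\Omega}$, then plugging into the previous displays, gives the three stated terms. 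The set $\Omega$ is the one produced by Theorem \ref{thm:flow_continuity} for the NCDE's own latent trajectory; its radius is made explicit through $\norm{z_0}\le L_\sigma(B_\mathbf{U}B_x+B_v)$ (since $\sigma(0)=0$ and $\sigma$ is $L_\sigma$-Lipschitz) and $\norm{\mathbf{G}_\psi(\mathbf{0})}_{\textnormal{op}}\le\kappa_\Theta(\mathbf{0})$ from Lemma \ref{lemma:neural_vf_at0}, which specializes to the displayed $\Omega$ in the shallow case.

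I expect the main obstacle to be the bookkeeping in this last application of the flow-continuity theorem: one must assign $\mathbf{G}^\star$ (and not $\mathbf{G}_\psi$) the role of the reference field $\mathbf{F}$ so that the exponential factor involves the known constant $L_{\mathbf{G}^\star}$, and one must check that the vector-field discrepancy $\norm{\mathbf{G}^\star-\mathbf{G}_\psi}_{\textnormal{op}}$ is measured exactly over the compact set $\Omega$ that is guaranteed to contain the latent trajectory. Once the roles are fixed correctly, the remaining steps are the routine triangle-inequality split and the scalar bounds already available from Lemmas \ref{lemma:bound_y_and_f} and \ref{lemma:neural_vf_at0}.
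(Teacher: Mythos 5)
Your proposal is correct and follows essentially the same route as the paper's proof: an $L_\ell$-Lipschitz reduction of the risk gap to $\abs{f_\theta(x)-\mathbf{f}^\star(x)}$, the triangle/Cauchy--Schwarz split $\norm{z_1}\norm{\Phi-\Phi_\star}+B_\Phi\norm{z_1-z_1^\star}$ with $\norm{z_1}\le M_\Theta/B_\Phi$, and an application of Theorem~\ref{thm:flow_continuity} to the two CDEs sharing the driving path $x$. Your explicit care in assigning $\mathbf{G}^\star$ the role of $\mathbf{F}$ so that the Gronwall exponent is $L_{\mathbf{G}^\star}L_x$ while $\Omega$ is the ball containing the NCDE's own latent trajectory is exactly the bookkeeping the paper relies on (and states somewhat more tersely).
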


\begin{proof}

For any $f_\theta \in \mathcal{F}_\Theta$ with continuous input, the approximation bias writes
\begin{align}
    \mathcal{R}(f_\theta) - \mathcal{R}(\mathbf{f}^\star) = \mathbb{E}\Big[ \ell(y,f_\theta(x))-\ell(y,\mathbf{f}^\star(x))\Big].     
\end{align}
Using the Lipschitz continuity of $\ell$ and the Cauchy-Schwarz inequality, one gets
\begin{align}
    \ell(y,f_\theta(x))-\ell(y,\mathbf{f}^\star(x)) \leq L_\ell \norm{\Phi_\star - \Phi} \norm{z_1} + L_\ell B_\Phi \norm{z_1^\star - z_1}.
\end{align}

Since $z_1^\star$ and $z_1$ are the solution to two CDEs with different vector fields and different initial conditions, the continuity of the flow stated in Theorem~\ref{thm:flow_continuity} yields
\begin{align}
    \norm{z_1^\star - z_1} \leq \exp( L_{\mathbf{G}^\star}L_x)\Big[L\max\limits_{u \in \Omega}\norm{\mathbf{G}_\psi(u) - \mathbf{G}^\star(u)} + \max\limits_{ \norm{u} \leq B_x}\norm{\varphi^\star(u)-\text{NN}_{\mathbf{U},v}(u)}\Big]
\end{align}
where
\begin{align*}
    \Omega & \subset \big\{u \in \mathbb{R}^p \,|\, \norm{u} \leq \frac{1}{B_\Phi}M_\Theta\big\}
\end{align*}

Putting everything together and taking expectations yields
\begin{align}
    \mathcal{R}(f_\theta) - \mathcal{R}(\mathbf{f}^\star) & \leq L_\ell B_\Phi \exp( L_{\mathbf{G}^\star}L_x)\Bigg[L_x\max\limits_{u \in \Omega}\norm{\mathbf{G}_{\psi}(u) - \mathbf{G}^\star(u)} \\
    & \quad + \max\limits_{ \norm{u} \leq B_x}\norm{\varphi^\star(u)-\text{NN}_{\mathbf{U},v}(u)}\Bigg]\\
    & \quad + \frac{L_\ell M_\Theta}{B_\Phi} \norm{\Phi_\star - \Phi}.
\end{align}

This concludes the proof.

\end{proof}

\begin{figure}
    \centering
    \includegraphics[width=0.5\textwidth]{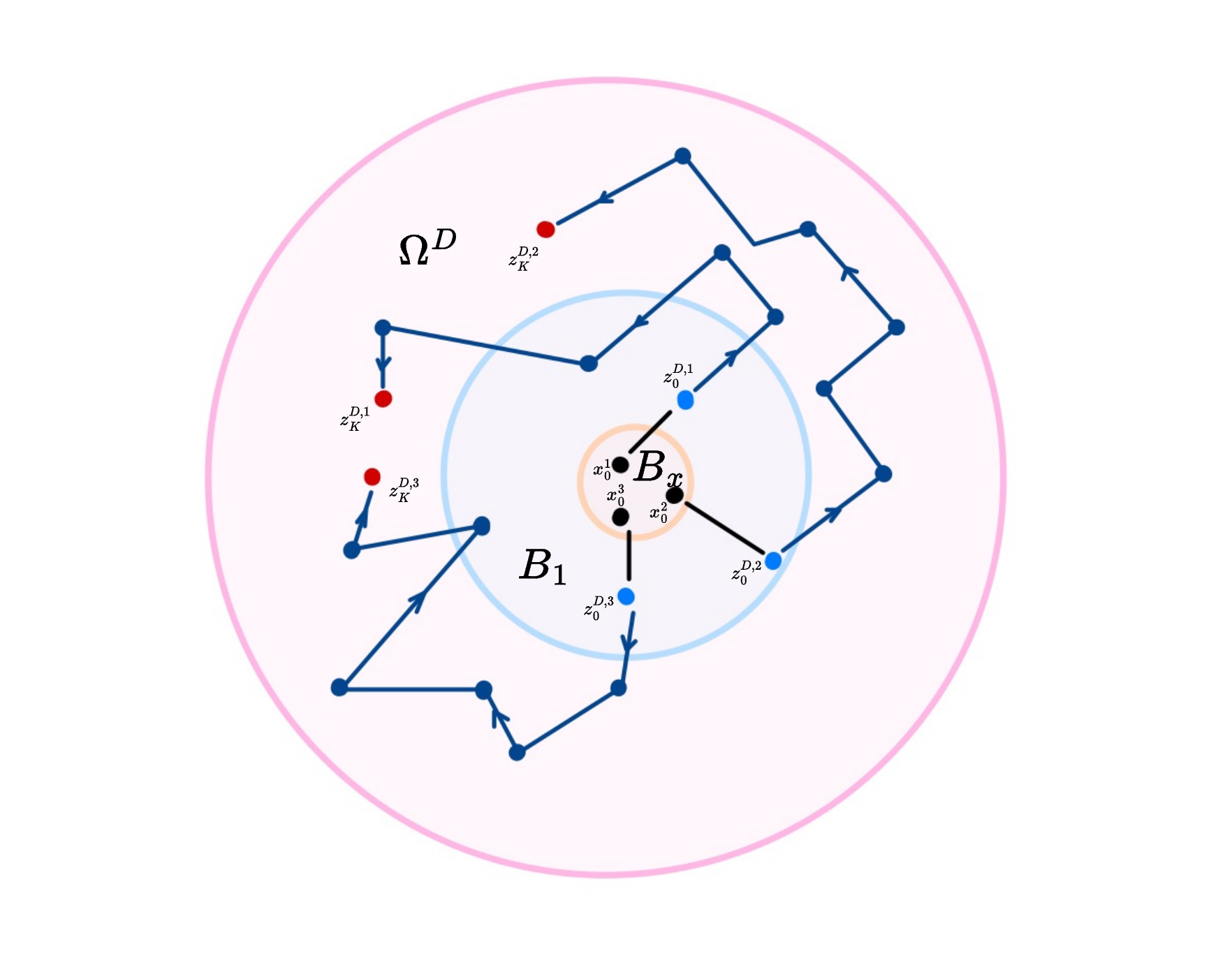}
    \caption{A schematic illustration of the main bounded sets used in the proof. The initial values of the time series used for initializing the NCDE lie in the orange ball of diameter $B_x$. The value $z_0$ then lies in the blue ball $B_1$, whose diameter is upper bounded by a function of $B_\mathbf{U},B_v,B_x$ and $L_\sigma$. Finally, the trajectories evolves during time but stay within the ball $\Omega^D$.  }
    \label{fig:enter-label}
\end{figure}

\subsection{Proof of Theorem \ref{thm:main_thm_2}}
We gave bounds for the discretization and approximation biases in Propositions~\ref{prop:disc_bias} and \ref{prop:approximation_bias}. We now have to bound $\sup\limits_{g \in \mathcal{F}_\Theta} \Big[\mathcal{R}^D(g) - \mathcal{R}_n^D(g) \Big]$ and $ \sup\limits_{g \in \mathcal{F}_\Theta}\Big[\mathcal{R}(g) - \mathcal{R}_n(g)\Big] $ to obtain a bound for the total risk via the total risk decomposition given in Lemma \ref{lemma:risk_decomposition}. 
It is clear that for any predictor $f_\theta$ parameterized by $\theta=(\Phi,\psi,\mathbf{U},v)$ we have
\begin{align}
         & \mathcal{R}^D(\Hat{f}^D) - \mathcal{R}(\mathbf{f}^*) \\
         & \leq \sup\limits_{g \in \mathcal{F}_\Theta} \Big[\mathcal{R}^D(g) - \mathcal{R}_n^D(g) \Big] +  \sup\limits_{g \in \mathcal{F}_\Theta}\Big[\mathcal{R}(g) - \mathcal{R}_n(g)\Big] \\
         & \quad +  L_\ell C^1_\Theta\abs{D}\\
         & \quad + L_\ell B_\Phi \exp( L_{\mathbf{G}^\star}L_x)\Big[L_x\norm{\mathbf{G}_{\psi} - \mathbf{G}^\star}_{\infty, \Omega} + \norm{\varphi^\star-\text{NN}_{\mathbf{U},v}}_{\infty,B_x}\Big]\\
         & \quad + \frac{L_\ell M_\Theta}{B_\Phi} \norm{\Phi_\star - \Phi}. 
\end{align}

Now, by a classical symmetrization argument - see for instance \citet{bach2021learning}, Proposition 4.2 - the obtained bounds on the Rademacher complexity imply that
\begin{align}
    \sup\limits_{g \in \mathcal{F}_\Theta}\Big[\mathcal{R}(g) - \mathcal{R}_n(g)\Big] \leq 2  \textnormal{Rad}(\mathcal{F}_\Theta)
\end{align}
and 
\begin{align}
    \sup\limits_{g \in \mathcal{F}^D_\Theta}\Big[\mathcal{R}^D(g) - \mathcal{R}^D_n(g)\Big] \leq 2  \textnormal{Rad}(\mathcal{F}^D_\Theta).
\end{align}

This gives 
\begin{align}
    \sup\limits_{g \in \mathcal{F}_\Theta} \Big[\mathcal{R}^D(g) - \mathcal{R}_n^D(g) \Big] +  \sup\limits_{g \in \mathcal{F}_\Theta}\Big[\mathcal{R}(g) - \mathcal{R}_n(g)\Big] \leq 4\max\{\textnormal{Rad}(\mathcal{F}^D_\Theta),\textnormal{Rad}(\mathcal{F}_\Theta)\}
\end{align}

To finish the proof, notice that since the risk decomposition 
\begin{align*}
    \mathcal{R}^D(\Hat{f}^D) - \mathcal{R}(\mathbf{f}^*) & \leq \sup\limits_{g \in \mathcal{F}_\Theta} \Big[ \mathcal{R}^D(g) - \mathcal{R}_n^D(g) \Big] + \sup\limits_{g \in \mathcal{F}_\Theta} \Big[ \mathcal{R}(g) - \mathcal{R}_n(g) \Big]\\
    & \quad +  \mathcal{R}_n^D(f) - \mathcal{R}_n(f) + \mathcal{R}(f) - \mathcal{R}(\mathbf{f}^\star)\\
    & \leq 4\max\{\textnormal{Rad}(\mathcal{F}^D_\Theta),\textnormal{Rad}(\mathcal{F}_\Theta)\} + C_\Theta^1 \abs{D}\\
    & + L_\ell B_\Phi \exp( L_{\mathbf{G}^\star}L_x)\Big[L_x\norm{\mathbf{G}_{\psi} - \mathbf{G}^\star}_{\infty, \Omega} + \norm{\varphi^\star-\text{NN}_{\mathbf{U},v}}_{\infty,B_x}\Big]\\
    &  + \frac{L_\ell M_\Theta}{B_\Phi} \norm{\Phi_\star - \Phi}
\end{align*}
holds for every $f_\theta \in \mathcal{F}_\Theta$ parameterized by $(\Phi,\psi,\mathbf{U},v)$, one may take the infimum over all possible predictors in $\mathcal{F}_\Theta$. Since this parameter space is compact by definition, the infimum is a minimum and 

\begin{align*}
    & \mathcal{R}^D(\Hat{f}^D) - \mathcal{R}(\mathbf{f}^*)\\
    & \leq 4\max\{\textnormal{Rad}(\mathcal{F}^D_\Theta),\textnormal{Rad}(\mathcal{F}_\Theta)\} + C_\Theta^1 \abs{D}\\
    & + L_\ell B_\Phi \exp( L_{\mathbf{G}^\star}L_x)\Big[L_x\min_\psi \norm{\mathbf{G}_{\psi} - \mathbf{G}^\star}_{\infty, \Omega} + \min_{\mathbf{U},v}\norm{\varphi^\star-\text{NN}_{\mathbf{U},v}}_{\infty,B_x}\Big]\\
    &  + \frac{L_\ell M_\Theta}{B_\Phi} \underbrace{\min_\Phi\norm{\Phi_\star - \Phi}}_{=0}
\end{align*}

which concludes the proof.

\newpage

\section{Supplementary proofs}
\label{appendix_supp}

\subsection{Proof of Lemma \ref{lemma:lipschitz_nvf}}

Take $\mathbf{G}_\psi:\mathbb{R}^p\to\mathbb{R}^{p \times d}$ a neural vector field and $z,w \in \mathbb{R}^p$. Define $$\mu_h(z) := \sigma \Big( \mathbf{A}_h\sigma\Big(\mathbf{A}_{h-1}\sigma \Big(\dots \sigma\Big(\mathbf{A}_1z + \mathbf{b}_1\Big)\Big)+\mathbf{b}_{h-1}\Big) + \mathbf{b}_h\Big) $$
for all $h=2,\dots,q$. One directly has
\begin{align}
    \norm{\mathbf{G}_\psi(z)-\mathbf{G}_\psi(w)} & = \norm{\mu_q(z)-\mu_q(w)} \leq L_\sigma \norm{\mathbf{A}_q\big(\mu_{q-1}(z)-\mu_{q-1}(w)\big)}\\
    & \leq L_\sigma B_\mathbf{A}\norm{\mu_{q-1}(z)-\mu_{q-1}(w)}\\
    & \leq \dots \\
    & \leq \Big(L_\sigma B_\mathbf{A}\Big)^q \norm{z-w}.
\end{align}
This proves our claim. 

\subsection{Proof of Lemma \ref{lemma:neural_vf_at0}}

Let $z \in \mathbb{R}^p$ and define $$\mu_h(z) := \sigma \Big( \mathbf{A}^1_h\sigma\Big(\mathbf{A}^1_{h-1}\sigma \Big(\dots \sigma\Big(\mathbf{A}^1_1z + \mathbf{b}^1_1\Big)\Big)+\mathbf{b}^1_{h-1}\Big) + \mathbf{b}^1_h\Big) $$
for all $h=2,\dots,q$.
We have\begin{align}
    \norm{\mathbf{G}_\psi(z)}_\textnormal{op} & = \norm{\mu_q(z)}_\textnormal{op}\\
    & \leq \norm{\mu_q(z)-\mu_q(\mathbf{0})+\mu_q(\mathbf{0})}\\
    & \leq \norm{\mu_q(z)-\mu_q(\mathbf{0})} + \norm{\mu_q(\mathbf{0})}. 
\end{align}

Using Lemma \ref{lemma:lipschitz_nvf}, we have that 
\begin{align}
    \norm{\mu_q(z)-\mu_q(\mathbf{0})} \leq \Big(L_\sigma B_\mathbf{A}\Big)^q  \norm{z}.
\end{align}

To finish the proof, we have to bound $\norm{\mu_q(\mathbf{0})}$. Since
\begin{align}
    \norm{\mu_q(\mathbf{0})} = \norm{\mu_q(\mathbf{0})-\sigma(\mathbf{0})} & \leq L_\sigma \norm{\mathbf{A}_{q}\mu_{q-1}(\mathbf{0})} + L_\sigma B_\mathbf{b}\\
    & \leq L_\sigma B_\mathbf{A}\norm{\mu_{q-1}(\mathbf{0})} + L_\sigma B_\mathbf{b},
\end{align}
using Lemma \ref{lemma:sequential_inequality} yields
\begin{align}
    \norm{\mu_q(\mathbf{0})} & \leq \big(L_\sigma B_\mathbf{A}\big)^{q-1} \norm{\sigma(\mathbf{b}_1)} + L_\sigma B_\mathbf{b} \sum\limits_{j=0}^{q-2} \big(L_\sigma B_\mathbf{A}\big)^{j}\\
    & \leq L_\sigma B_\mathbf{b} \sum\limits_{j=0}^{q-1} \big(L_\sigma B_\mathbf{A}\big)^{j}\\
    & =  L_\sigma B_\mathbf{b} \frac{\big(L_\sigma B_\mathbf{A}\big)^{q}-1}{L_\sigma B_\mathbf{A}-1}.
\end{align}

Combining everything yields 

\begin{align}
     \norm{\mathbf{G}_\psi(z)}_\textnormal{op} \leq \Big(L_\sigma B_\mathbf{A}\Big)^q  \norm{z} + L_\sigma B_\mathbf{b} \frac{\big(L_\sigma B_\mathbf{A}\big)^{q}-1}{L_\sigma B_\mathbf{A}-1}. 
\end{align}

\subsection{Proof of Lemma \ref{lemma:lipschitz_deep_vf}}
\label{appendix:lipschitz_proof}

Define $$\mu^1_h(z) := \sigma \Big( \mathbf{A}^1_h\sigma\Big(\mathbf{A}^1_{h-1}\sigma \Big(\dots \sigma\Big(\mathbf{A}^1_1z + \mathbf{b}^1_1\Big)\Big)+\mathbf{b}^1_{h-1}\Big) + \mathbf{b}^1_h\Big) $$ and $$\mu^2_h(z) := \sigma \Big( \mathbf{A}^2_h\sigma\Big(\mathbf{A}^2_{h-1}\sigma \Big(\dots \sigma\Big(\mathbf{A}^2_1z + \mathbf{b}^2_1\Big)\Big)+\mathbf{b}^2_{h-1}\Big) + \mathbf{b}^2_h\Big) $$ for all $h=2 \dots, q$. The indice $h$ therefore refers to the \textit{depth} of the neural vector field. 

For the first hidden layer, we have

\begin{align}
    \norm{\sigma\Big(\mathbf{A}^1_1z+\mathbf{b}^1_1\Big) - \sigma\Big(\mathbf{A}^2_1z+\mathbf{b}^2_1\Big)} & \leq L_\sigma \norm{\mathbf{b}^1_1 - \mathbf{b}^2_1} + L_\sigma \norm{\mathbf{A}^1_1z - \mathbf{A}^2_1z}\\
    & \leq L_\sigma \norm{\mathbf{b}^1_1 - \mathbf{b}^2_1} + L_\sigma \norm{\mathbf{A}^1_1 - \mathbf{A}^2_1} \norm{z}. 
\end{align}

Similarly, for any $h=2,\dots,q$, 
\begin{align}
    \norm{\mu^1_h(z) - \mu^2_h(z)} & = \norm{\sigma\big( \mathbf{A}^1_h \mu^1_{h-1}(z) + \mathbf{b}^1_h \big) - \sigma\big( \mathbf{A}^2_h \mu^2_{h-1}(z) + \mathbf{b}^2_h \big)} \\&\leq L_\sigma \norm{\mathbf{A}^1_h \mu^1_{h-1}(z) - \mathbf{A}^2_h \mu^2_{h-1}(z)} +  L_\sigma \norm{\mathbf{b}^1_h - \mathbf{b}^2_h}\\
    & \leq L_\sigma \norm{\Big(\mathbf{A}^1_h - \mathbf{A}^2_h + \mathbf{A}^2_h\Big) \mu^1_{h-1}(z) - \mathbf{A}^2_h \mu^2_{h-1}(z)} +  L_\sigma \norm{\mathbf{b}^1_h - \mathbf{b}^2_h}\\
    & \leq L_\sigma \norm{\mathbf{A}^1_h-\mathbf{A}_h^2} \norm{\mu^1_{h-1}(z)} + L_\sigma \norm{\mathbf{A}_{h}^2} \norm{\mu^1_{h-1}(z) - \mu^2_{h-1}(z)} \\
    & + L_\sigma \norm{\mathbf{b}^1_h - \mathbf{b}^2_h}. 
\end{align}

Since we have assumed that $\norm{\mathbf{A}^2_q} \leq B_\mathbf{A}$, this yields 
\begin{align}
    \norm{\mu^1_h(z) - \mu^2_h(z)} & \leq L_\sigma \norm{\mathbf{A}^1_h-\mathbf{A}_h^2} \norm{\mu^1_{h-1}(z)} + L_\sigma B_\mathbf{A}\norm{\mu^1_{h-1}(z) - \mu^2_{h-1}(z)}\\
    & + L_\sigma \norm{\mathbf{b}^1_h-\mathbf{b}_h^2}.
\end{align}

We may now use Lemma \ref{lemma:sequential_inequality} to get that 
\begin{align}
    \norm{\mu^1_q(z) - \mu^2_q(z)} & \leq L_\sigma \sum\limits_{j=2}^q \Big[ \norm{\mathbf{A}_j^1-\mathbf{A}_j^2} \norm{\mu^1_{j-1}(z)} + \norm{\mathbf{b}_j^1-\mathbf{b}_j^2}\Big] \big(L_\sigma B_\mathbf{A}\big)^{q-j} \\
    & +  \big(L_\sigma B_\mathbf{A}\big)^{q-1}\norm{\sigma\Big(\mathbf{A}^1_1z+\mathbf{b}^1_1\Big) - \sigma\Big(\mathbf{A}^2_1z+\mathbf{b}^2_1\Big)} \\
    & \leq L_\sigma \sum\limits_{j=2}^q \Big[ \norm{\mathbf{A}_j^1-\mathbf{A}_j^2} \norm{\mu^1_{j-1}(z)} + \norm{\mathbf{b}_j^1-\mathbf{b}_j^2}\Big] \big(L_\sigma B_\mathbf{A}\big)^{q-j} \\
    & + L_\sigma^q B_\mathbf{A}^{q-1} \norm{\mathbf{b}^1_1 - \mathbf{b}^2_1} + L_\sigma^q B_\mathbf{A}^{q-1} \norm{\mathbf{A}^1_1 - \mathbf{A}^2_1} \norm{z}.
\end{align}

We may now use Lemma \ref{lemma:neural_vf_at0}. For all $h=1,\dots,q$, we have
\begin{align}
    \norm{\mu^1_{h}(z)} \leq \alpha_h \norm{z} + \beta_h 
\end{align}
with 
\begin{align}\label{eqn:alpha_beta_h}
    \alpha_h := \big(L_\sigma B_\mathbf{A}\big)^h \text{ and }
    \beta_h := L_\sigma B_\mathbf{b} \frac{\big(L_\sigma B_\mathbf{A}\big)^{h}-1}{L_\sigma B_\mathbf{A}-1}.
\end{align}
We furthermore let $\alpha_0 := 1$ and $\beta_0 = 0$. One then has 
\begin{align}
    & \norm{\mu^1_q(z) - \mu^2_q(z)} \\
    & \leq L_\sigma \sum\limits_{j=1}^q \Big[ \norm{\mathbf{A}_j^1-\mathbf{A}_j^2} \big(\alpha_{j-1}\norm{z} + \beta_{j-1}\big) + \norm{\mathbf{b}_j^1-\mathbf{b}_j^2}\Big] \big(L_\sigma B_\mathbf{A}\big)^{q-j}
\end{align}
and thus 
\begin{align}
        \norm{\mathbf{G}_{\psi^1}(z) - \mathbf{G}_{\psi^2}(z)}_{\textnormal{op}} \leq \sum\limits_{j=1}^q \Big[C_\mathbf{b}^j\norm{\mathbf{b}_j^1-\mathbf{b}_j^2} + C_{\mathbf{A}}^j(z)\norm{\mathbf{A}^1_j-\mathbf{A}^2_j}\Big]
    \end{align}
with $C_\mathbf{A}^j(z):= \big(L_\sigma B_\mathbf{A}\big)^{q-j} L_\sigma \Big(\alpha_{j-1}\norm{z}+\beta_{j-1}\Big)$ and $C_\mathbf{b}^j = \big(L_\sigma B_\mathbf{A}\big)^{q-j} L_\sigma$. This proves our claim.

\subsection{Proof of Lemma \ref{lemma:bound_y_and_f} and Lemma \ref{lemma:generative_model_bound}}

\label{appendix:bound_y_and_f_proof}

We recall that $\norm{W}_{\textnormal{op}}:= \max\limits_{\norm{x}=1}\norm{Wx}$, and that $\norm{W}_{\textnormal{op}}\leq \norm{W}$, which means that $\norm{Wx} \leq \norm{W}\norm{x}$ for all $x$. We first prove this result for a general CDE
\[
dz_t = \mathbf{F}(z_t)dx_t
\]
with initial condition $z_0 \in \mathbb{R}^p$, where the driving path $x$ is supposed to be continuous of bounded variation (or piecewise constant with a finite number of discontinuities.)

By definition, 
    \begin{align}
    z_t = z_0 + \int_0^t \mathbf{F}(z_t)dx_t.
    \end{align}
    Taking norms, this yields 
    \begin{align}
    \norm{z_t} \leq \norm{z_0} + \int_0^t \norm{\mathbf{F}(z_t)}_{\textnormal{op}} \norm{dx_t}. 
    \end{align}
Notice that since we have assumed $\mathbf{F}$ to be Lipschitz, we have that for all $z \in \mathbb{R}^p$

\begin{align}
    \norm{\mathbf{F}(z)}_{\textnormal{op}} 
    & \leq \norm{\mathbf{F}(z) - \mathbf{F}(0)}_{\textnormal{op}}+ \norm{\mathbf{F}(0)}_{\textnormal{op}}\\
     & \leq \norm{\mathbf{F}(z) - \mathbf{F}(0)}+ \norm{\mathbf{F}(0)}_{\textnormal{op}}\\
    & \leq L_\mathbf{F} \norm{z} + \norm{\mathbf{F}(0)}_{\textnormal{op}},
\end{align}
where the last inequality follows from the fact that $\mathbf{F}$ is Lipschitz.
It follows that 
    \begin{align}
        \norm{z_t} \leq \norm{z_0} + \int_0^t (L_\mathbf{F} \norm{z_t}+\norm{\mathbf{F}(0)}_{\textnormal{op}})\norm{dx_t}. 
    \end{align}
    Using the fact that $\int_0^t \norm{dx_s} = \norm{x}_{\textnormal{1-var},[0,t]}$, one gets
    \begin{align}
    \norm{z_t} \leq \norm{z_0} + \norm{\mathbf{F}(0)}_{\textnormal{op}} \norm{x}_{\textnormal{1-var},[0,t]} + L_\mathbf{F} \int_0^t \norm{z_s}\norm{dx_s}.
    \end{align}
    Applying Gronwall's Lemma for CDEs yields
    \begin{align}
    \norm{z_t} \leq \Big( \norm{z_0} + \norm{\mathbf{F}(0)}_{\textnormal{op}} \norm{x}_{\textnormal{1-var},[0,t]}\Big) \exp\big(L_\mathbf{F} \norm{x}_{\textnormal{1-var},[0,t]} \big).
    \end{align}

Now, turning to the generative CDE \eqref{eq:model_cde}, we obtain as a consequence that 
\begin{align}
     \abs{y} \leq B_\Phi\Big( B_\varphi^\star + \norm{\mathbf{G}^\star(0)}_{\textnormal{op}} L_x\Big) \exp\big(L_{\mathbf{G}^\star}L_x \big) + M_\varepsilon,
\end{align}

since $y$ is equal to the sum of a linear transformation of the endpoint of a CDE an a noise term $\varepsilon$ bounded by $M_\varepsilon$. This proves Lemma \ref{lemma:generative_model_bound}.

Turning now to $f_\theta \in \mathcal{F}_\Theta$, we have that  
\begin{align}
    \abs{f_\theta(x)} = \abs{\Phi^\top z_1} \leq \norm{\Phi} \norm{z_1} \leq B_\Phi \norm{z_1}
\end{align}
and since $z_1$ is the solution of a NCDE, we can directly leverage the previous result to bound $\norm{z_1}$, which yields 
\begin{align}
    \norm{z_1} \leq L_\sigma  \exp\big((B_\mathbf{A}L_\sigma)^q L_x\big)\Big( B_\mathbf{U}B_x +  B_v +  \kappa_\Theta(\mathbf{0})L_x\Big).
\end{align}

As a direct consequence, 
\begin{align}
     \abs{f_\theta(x)} \leq B_\Phi L_\sigma  \exp\big((B_\mathbf{A}L_\sigma)^q L_x\big)\Big( B_\mathbf{U}B_x +  B_v +  \kappa_\Theta(\mathbf{0})L_x\Big) < \infty.
\end{align}

We now turn to $f_\theta(\mathbf{x}^D)$. Bounding the value of $z^D_1$ can be done by applying this lemma to the values of $z^D$ at points in $D$, since it is constant between those points. The proof leverages the discrete version of Gronwall's Lemma, stated in Lemma \ref{lemma:Gronwallsequemce}. We have that 
\begin{align}
    \norm{z_{t_k}} \leq \norm{z_0} + \sum\limits_{l=0}^{k-1}\norm{\mathbf{G}_\psi(z_{t_{l}})\Delta x_{t_{l+1}}} \leq \sum\limits_{l=0}^{k-1}\norm{\mathbf{G}_\psi(z_{t_{l}})} \norm{\Delta x_{t_{l+1}}},
\end{align}
for all $k=1,\dots, K$. Now, since
\begin{align}
    \norm{\mathbf{G}_\psi(z_{t_l})} \leq  \big(L_\sigma B_\mathbf{A}\big)^q \norm{z_{t_l}}+ L_\sigma B_\mathbf{b} \frac{\big(L_\sigma B_\mathbf{A}\big)^{q}-1}{L_\sigma B_\mathbf{A}-1}
\end{align}
we have
\begin{align}
    \norm{z_{t_k}} & \leq \norm{z_0} + L_\sigma B_\mathbf{b} \frac{\big(L_\sigma B_\mathbf{A}\big)^q - 1}{L_\sigma B_\mathbf{A}}\sum\limits_{l=0}^{k-1} \norm{\Delta x_{t_{l+1}}} + \big(L_\sigma B_\mathbf{A}\big)^q\sum\limits_{l=0}^{k-1}  \norm{z_{t_l}} \norm{\Delta x_{l+1}}\\
    & \leq \norm{z_0} + L_\sigma B_\mathbf{b} \frac{\big(L_\sigma B_\mathbf{A}\big)^q - 1}{L_\sigma B_\mathbf{A}} L_x + \big(L_\sigma B_\mathbf{A}\big)^q\sum\limits_{l=0}^{k-1}  \norm{z_{t_l}} \norm{\Delta x_{l+1}}
\end{align}
and one gets by the discrete Gronwall Lemma that 
\begin{align}
    \norm{z^D_{t_k}} & \leq \Big(\norm{z_0} + L_\sigma B_\mathbf{b} \frac{\big(L_\sigma B_\mathbf{A}\big)^q - 1}{L_\sigma B_\mathbf{A}} L_x \Big)\prod\limits_{l=0}^{k-1} \Big(1+\big(L_\sigma B_\mathbf{A}\big)^q\norm{\Delta x_{t_{l+1}}}\Big) \\
    & \leq \Big(\norm{z_0} + L_\sigma B_\mathbf{b} \frac{\big(L_\sigma B_\mathbf{A}\big)^q - 1}{L_\sigma B_\mathbf{A}} L_x \Big)\prod\limits_{l=0}^{k-1} \Big(1+\big(L_\sigma B_\mathbf{A}\big)^qL_x \abs{D}\Big)
\end{align}
since $\norm{\Delta x_{t_{l+1}}} \leq L_x \abs{t_{l+1}-t_l} \leq L_x \abs{D}$, and finally 
\begin{align}
    \norm{z^D_{t_k}} & \leq \Big(L_\sigma B_\mathbf{U}B_x + L_\sigma B_v  + L_\sigma B_\mathbf{b} \frac{\big(L_\sigma B_\mathbf{A}\big)^q - 1}{L_\sigma B_\mathbf{A}} L_x \Big)\prod\limits_{l=0}^{k-1} \Big(1+\big(L_\sigma B_\mathbf{A}\big)^qL_x \abs{D}\Big)
\end{align}

In the end, this gets us that 
\begin{align}
    \abs{f_\theta(\mathbf{x}^D)} & \leq B_\Phi L_\sigma  \Big(L_\sigma B_\mathbf{U}B_x + B_v  + L_\sigma B_\mathbf{b} \frac{\big(L_\sigma B_\mathbf{A}\big)^q - 1}{L_\sigma B_\mathbf{A}} L_x \Big)\prod\limits_{l=0}^{K-1} \Big(1+\big(L_\sigma B_\mathbf{A}\big)^qL_x \abs{D}\Big)\\
    & = B_\Phi L_\sigma \Big(L_\sigma B_\mathbf{U}B_x + B_v  + L_\sigma B_\mathbf{b} \frac{\big(L_\sigma B_\mathbf{A}\big)^q - 1}{L_\sigma B_\mathbf{A}} L_x \Big)\Big(1+\big(L_\sigma B_\mathbf{A}\big)^qL_x \abs{D}\Big)^{K}. 
\end{align}
Note that by Gronwall's Lemma, one also has the less tighter bound 
\begin{align}
     \abs{f_\theta(\mathbf{x}^D)} \leq B_\Phi L_\sigma  \exp\big((B_\mathbf{A}L_\sigma)^q L_x\big)\Big( B_\mathbf{U}B_x +  B_v +  \kappa_\Theta(\mathbf{0})L_x\Big).
\end{align}
This concludes the proof.

\subsection{Proof of Proposition \ref{prop:total_variation_solution_cde}}

\label{appendix:proof_total_var_solution_cde}

We first consider a general CDE 
\begin{align}
dz_t = \mathbf{F}(z_t)dx_t
\end{align}
with initial condition $z_0 \in \mathbb{R}^p$. Take $s,r \in [0,t]$. By definition,
    \begin{align}
        \norm{z_r-z_s} &= \norm{\int_s^r \mathbf{F}(z_u)dx_u}\\
        & = \norm{\int_s^r \mathbf{F}\big(z_u-z_s+z_s \big)dx_u}\\
        & \leq \int_s^r \big(L_\mathbf{F}\norm{z_u-z_s+z_s}+\norm{\mathbf{F}(0)}_{\textnormal{op}} \big) \norm{dx_u} \\
        & \leq \int_s^r \big(L_\mathbf{F} \norm{z_u-z_s} + L_\mathbf{F}\norm{z_s}+\norm{\mathbf{F}(0)}_{\textnormal{op}} \big) \norm{dx_u}\\
        & = \big(L_\mathbf{F}\norm{z_s}+\norm{\mathbf{F}(0)}_{\textnormal{op}}\big)\norm{x}_{\textnormal{1-var},[s,r]} +  L_\mathbf{F}\int_s^r \norm{z_u-z_s}\norm{dx_u}.
    \end{align}

    Now since $z_s$ is the solution of a CDE evaluated at $s$, it can be bounded by Lemma \ref{lemma:bound_y_and_f} by
    \begin{align}
        \norm{z_s} \leq \big(\norm{z_0} + \norm{\mathbf{F}(0)}_{\textnormal{op}} \norm{x}_{\textnormal{1-var},[0,s]} \big) \exp(L_\mathbf{F}\norm{x}_{\textnormal{1-var},[0,s]}).
    \end{align}
    This means that 
    \begin{align}
        & \norm{z_r-z_s} \\
        & \leq \Big[L_\mathbf{F}\big(\norm{z_0} + \norm{\mathbf{F}(0)}_{\textnormal{op}} \norm{x}_{\textnormal{1-var},[0,s]} \big) \exp(L_\mathbf{F}\norm{x}_{\textnormal{1-var},[0,s]})+\norm{\mathbf{F}(0)}_{\textnormal{op}}\Big] \norm{x}_{\textnormal{1-var},[s,r]} \\
        & +  L_\mathbf{F}\int_s^r \norm{z_u-z_s}\norm{dx_u}.
    \end{align}
    We may now use Gronwall's Lemma and the fact that $\norm{x}_{\textnormal{1-var}, [0,s]} \leq L_x$ for all $s \in [0,1]$ to obtain that 
    \begin{align}
        \norm{z_r-z_s} & \leq \Bigg[L_\mathbf{F}\big(\norm{z_0} + \norm{\mathbf{F}(0)}_{\textnormal{op}} L_x \big) \exp(L_\mathbf{F} L_x) +\norm{\mathbf{F}(0)}_{\textnormal{op}} \Bigg]\exp(L_\mathbf{F} L_x) \norm{x}_{\textnormal{1-var},[s,r]}.
    \end{align}
    and thus  
    \begin{align}
        \norm{z_r-z_s} \leq C_1 (\mathbf{F}) \norm{x}_{\textnormal{1-var},[s,r]}.
    \end{align}
    This means that the variations of $z$ on arbitrary intervals $[r,s]$ are bounded by the variations of $x$ on the same interval, times an interval independent constant.  
    
    From this, we may immediately conclude that 
    \begin{align}
        \norm{z}_{\textnormal{1-var},[0,t]} \leq C_1 (\mathbf{F})\norm{x}_{\textnormal{1-var},[0,t]},
    \end{align}
    which concludes the proof for a general CDE.
We now turn to $f_\theta \in \mathcal{F}_\Theta$. The previous result allows to bound the total variation of a CDE and thus of the trajectory of the latent state. Using this proposition with 
    \begin{align}
    & C_1 (\mathbf{G}_\psi):= \\
    &\Bigg[\big(L_\sigma B_\mathbf{A}\big)^q\Big(L_\sigma B_\mathbf{U}B_x + L_\sigma B_v + \kappa_\Theta(\mathbf{0})L_x\Big)\exp\big((L_\sigma B_\mathbf{A})^q L_x) + \kappa_\Theta(\mathbf{0})\Bigg]\exp \big((L_\sigma B_\mathbf{A})^q L_x\big)\\
    & = \Big[\Big(L_\sigma B_\mathbf{A}\Big)^q \frac{1}{B_\Phi}M_\Theta + \kappa_\Theta(\mathbf{0})\Big]\exp((L_\sigma B_\mathbf{A})^q L_x) 
    \end{align}
    yields
    \begin{align}
    \norm{z}_{\textnormal{1-var}} \leq C_1 (\mathbf{G}_\psi) L_x. 
\end{align}
This concludes the proof for $f_\theta$ with continuous input. For discretized inputs, the proof follows readily with constant $M_\Theta^D$.

\subsection{Proof of Theorem \ref{thm:flow_continuity}}

\label{appendix:proof_flow_continuity}
    
For all $t \in [0,1]$, we have the decomposition
\begin{align}
    w_t - v_t & = w_0 - v_0 + \int_0^t \mathbf{F}(w_s)dx_s - \int_0^t \mathbf{G}(v_s)dr_s \\
    & = w_0 - v_0 + \int_0^t \big(\mathbf{F}(w_s)-\mathbf{F}(v_s) \big)dx_s + \int_0^t \mathbf{F}(v_s) dx_s - \int_0^t \mathbf{G}(v_s)dr_s\\ 
    & = w_0 - v_0 \\
    & + \int_0^t \big(\mathbf{F}(w_s)-\mathbf{F}(v_s) \big)dx_s\\
    & + \int_0^t \mathbf{F}(v_s)d(x_s-r_s) \\
    & + \int_0^t \big(\mathbf{F}(v_s)- \mathbf{G}(v_s)\big)dr_s. 
\end{align}
We control every one of these terms separately and conclude by applying Gronwall's Lemma. Writing 
\begin{align*}
    & A := \norm{\int_0^t \big(\mathbf{F}(w_s)-\mathbf{F}(v_s) \big)dx_s}, \\
    & B := \norm{\int_0^t \mathbf{F}(v_s)d(x_s-r_s)},\\
    & C := \norm{\int_0^t \big(\mathbf{F}(v_s)- \mathbf{G}(v_s)\big)dr_s}
\end{align*}
it is clear that 
\begin{align}
    \norm{w_t-z_t} \leq \norm{w_0 - z_0} +A + B + C.
\end{align}

\paragraph{Control of the term $A$.} We have
\begin{align}
    A \leq \int_0^t \norm{\mathbf{F}(w_s)-\mathbf{F}(v_s)}_{\textnormal{op}} \norm{dx_s}.
\end{align}
Since $F$ is $L_\mathbf{F}$-Lipschitz, this gives 
\begin{align}
    A \leq L_\mathbf{F} \int_0^t \norm{w_s-v_s} \norm{dx_s}.
\end{align}

\paragraph{Control of the term $B$.} One gets using integration by parts 
\begin{align}
    \int_0^t \mathbf{F}(v_s)d(x_s-r_s) = (x_t - r_t) - (x_0 - r_0) - \int_0^t (x_s-r_s)^\top d\mathbf{F}(v_s).
\end{align}

This gives 
\begin{align}
    B \leq \norm{x_0-r_0} + \norm{x-r}_{\infty,[0,t]} (1 + \norm{\mathbf{F}(v)}_{\textnormal{1-var},[0,t]}).
\end{align}

Since $\mathbf{F}$ is $L_\mathbf{F}$-Lipschitz, one gets 
\begin{align}
    \norm{\mathbf{F}(v)}_{\textnormal{1-var},[0,t]} \leq L_\mathbf{F} \norm{v}_{\textnormal{1-var},[0,t]}.
\end{align}
Since $v$ is the solution to a CDE, we can resort to Proposition \ref{prop:total_variation_solution_cde} to bound its total variation. This yields 
\begin{align}
    \norm{v}_{\textnormal{1-var},[0,t]} \leq  C_1 (L_\mathbf{F},\mathbf{F},L_r) \norm{r}_{\textnormal{1-var},[0,t]}
\end{align}
and finally 
\begin{align}
    B \leq \norm{x_0-r_0} + \norm{x-r}_{\infty,[0,t]} (1 + L_\mathbf{F} C_1 (L_\mathbf{F},\mathbf{F},L_x)  \norm{r}_{\textnormal{1-var},[0,t]}).
\end{align}

\paragraph{Control of the term $C$.} Finally, we get that 
\begin{align}
    C & \leq \int_0^t \norm{\mathbf{F}(v_s)-\mathbf{G}(v_s)}_{\textnormal{op}}\norm{dr_s}\\
    & \leq \max_{v \in \Omega } \norm{\mathbf{F}(v)-\mathbf{G}(v)}_{\textnormal{op}} L_r
\end{align}
where we recall that $\Omega $ is the closed ball 
\[
\Omega = \big\{u \in \mathbb{R}^p \,|\, \norm{u} \leq (\norm{v_0} + \norm{\mathbf{G}(0)}L_r)\exp(L_\mathbf{G}L_r) \big\}. 
\]
Indeed, since $v$ is the solution of a CDE, its norm at any time $t \in [0,1]$ is bounded as stated in Lemma \ref{lemma:bound_y_and_f}. One can therefore bound the difference $\norm{\mathbf{F}(v_s)-\mathbf{G}(v_s)}_{\textnormal{op}}$ by considering all possible values of $v$.

\paragraph{Putting everything together.} Combining the obtained bounds on all terms, one is left with 
\begin{align}
     \norm{w_t-v_t} & \leq  \norm{w_0 - v_0} \\
     & \quad + \norm{x_0-r_0} + \norm{x-r}_{\infty,[0,t]} (1 + L_\mathbf{F}L_rC_1 (L_\mathbf{F},\mathbf{F},L_x))\\
     & \quad + L_r \max_{v \in \Omega} \norm{\mathbf{F}(v)-\mathbf{G}(v)}_{\textnormal{op}} \\
     & \quad + L_\mathbf{F} \int_0^t \norm{w_s-v_s} \norm{dx_s}.
\end{align}

The proof is concluded by using Gronwall's Lemma \ref{lemma:Gronwall}. If the path $(x_t)$ is continuous, we resort to its continuous version. If it is piecewise constant, we use its discrete version. The path $(\norm{w_t-v_t})_t$ only needs to be measurable and bounded. There are no assumptions on its continuity. This finally yields 
\begin{align}
    \norm{w_t-v_t} & \leq \Bigg(\norm{w_0 - v_0} + \norm{x_0-r_0}\\
    &\quad  + \norm{x-r}_{\infty,[0,t]} \big(1 + L_\mathbf{F}L_rC_1 (L_\mathbf{F},\mathbf{F},L_x)\big)+ \max_{v \in \Omega} \norm{\mathbf{F}(v)-\mathbf{G}(v)}_{\textnormal{op}} L_r\Bigg)\\
    & \quad \times \exp(L_\mathbf{F} L_x).
\end{align}

Now, notice that in our proof, the two CDEs are exchangeable. This means that we immediately get the alternative bound
\begin{align}
    \norm{w_t-v_t} & \leq \Bigg(\norm{w_0 - v_0} + \norm{x_0-r_0}\\
    &\quad  + \norm{x-r}_{\infty,[0,t]} \big(1 + L_\mathbf{G}L_xC_1 (L_\mathbf{G},\mathbf{G},L_r)\big)+ \max_{v \in \Omega} \norm{\mathbf{F}(v)-\mathbf{G}(v)}_{\textnormal{op}} L_x\Bigg)\\
    & \quad \times \exp(L_\mathbf{G} L_r)
\end{align}
where we recall that 
\[
\Omega = \big\{u \in \mathbb{R}^p \,|\, \norm{u} \leq (\norm{w_0} + \norm{\mathbf{F}(0)}L_x)\exp(L_\mathbf{F}L_x) \big\}. 
\]

\subsection{Proof of Theorem \ref{thm:lipschitz_ncde}}
\label{proof:theorem4}

\paragraph{Proof for continuous inputs.} We have
\begin{align}
    \abs{f_{\theta_1}(x)-f_{\theta_2}(x)} = \norm{\Phi_1^\top z^1_1 - \Phi_2^\top z^2_1} \leq \norm{\Phi_1-\Phi_2} \norm{z^1_1} + B_\Phi \norm{z_1^1-z_1^2}.
\end{align}

Using Theorem \ref{thm:flow_continuity}, one gets that 
\begin{align}
    \norm{z_1^1-z_1^2} \leq \exp\big((L_\sigma B_\mathbf{A})^qL_x\big) \Big( \norm{z_0^1-z_0^2} + L_x \norm{\mathbf{G}_{\psi^1}-\mathbf{G}_{\psi^2}}_{\infty, \Omega}\Big),
\end{align}
where 
\begin{align*}
    \Omega = \big\{u \in \mathbb{R}^p \,|\, \norm{u} \leq \big(L_\sigma (B_\mathbf{U}B_x + B_v) + \norm{\mathbf{G}_\psi(\mathbf{0})}_\textnormal{op}\big)\exp\big((L_\sigma B_\mathbf{A})^qL_x\big) \big\}.
\end{align*}

The quantity $\norm{\mathbf{G}_\psi(\mathbf{0})}$ is bounded by Lemma \ref{lemma:neural_vf_at0}. The radius of the ball $\Omega$ is thus bounded by a constant that does not depend on the value of the network's parameters, but only on the constants defining $\Theta$.

We first bound 
\begin{align}
    \norm{z_0^1-z_0^2} \leq L_\sigma \norm{\mathbf{U}_1-\mathbf{U}_2}B_x + L_\sigma \norm{v_1-v_2}. 
\end{align}
One also has that 
\begin{align}
    \norm{\mathbf{G}_{\psi^1}-\mathbf{G}_{\psi^2}}_{\infty, \Omega} \leq \max\limits_{z \in \Omega} \sum\limits_{j=1}^q \Big[C_\mathbf{b}^j\norm{\mathbf{b}_j^1-\mathbf{b}_j^2} + C_{\mathbf{A}}^j(z)\norm{\mathbf{A}^1_j-\mathbf{A}^2_j}\Big].
\end{align}

Since $\Omega$ is bounded, $(C_\mathbf{b}^j)$ is bounded and the functions $(C_\mathbf{A}^j(z))$ are continuous with respect to $z$, they are bounded on $\Omega$. Taking maximas, we get 
\begin{align}
    \norm{\mathbf{G}_{\psi^1}-\mathbf{G}_{\psi^2}}_{\infty, \Omega} \leq \max\limits_{ 1 \leq i \leq q} C^i_\mathbf{b} \sum\limits_{j=1}^q \norm{\mathbf{b}_j^1-\mathbf{b}_j^2} + \max\limits_{ z \in \Omega, 1 \leq i \leq q} C^i_\mathbf{A}\sum\limits_{j=1}^q\norm{\mathbf{A}^1_j-\mathbf{A}^2_j}.
\end{align}

Since $z_1^1$ is bounded as the endpoint of a NCDE, one gets using Theorem \ref{thm:flow_continuity} that 
\begin{align}
    \norm{\Phi_1-\Phi_2} \norm{z^1_1} \leq   B_\Phi L_\sigma  \exp\big((B_\mathbf{A}L_\sigma)^q L_x\big)\Big( B_\mathbf{U}B_x +  B_v +  \kappa_\Theta(\mathbf{0})L_x\Big) \norm{\Phi_1-\Phi_2}
\end{align}

Putting everything together, one gets that 
\begin{align}
     \norm{f_{\theta_1}(x)-f_{\theta_2}(x)} & \leq  M_\Theta \norm{\Phi_1-\Phi_2}\\
    & \quad  + C_\mathbf{A}\sum\limits_{j=1}^q \norm{\mathbf{A}_j^1-\mathbf{A}_j^2}  + C_\mathbf{b}\sum\limits_{j=1}^q \norm{\mathbf{b}_j^1-\mathbf{b}_j^2} + \\
    & \quad + C_\mathbf{U} \norm{\mathbf{U}_1-\mathbf{U}_2} + C_v \norm{v_1-v_2}
\end{align}
with
\begin{align}
    & M_\Theta := B_\Phi L_\sigma  \exp\big((B_\mathbf{A}L_\sigma)^q L_x\big)\Big( B_\mathbf{U}B_x +  B_v +  \kappa_\Theta(\mathbf{0})L_x\Big) ,\\
    & C_\mathbf{A} := B_\Phi L_x \exp\big((L_\sigma B_\mathbf{A})^qL_x\big) \times \max\limits_{ z \in \Omega, 1 \leq i \leq q} C^i_\mathbf{A}(z)\\
    & C_\mathbf{b} := B_\Phi L_x \exp\big((L_\sigma B_\mathbf{A})^qL_x\big) \times \max\limits_{1 \leq i \leq q} C^i_\mathbf{b}\\ 
    & C_\mathbf{U} := B_\Phi B_x \exp(L_\sigma B_\mathbf{A}L_x) L_\sigma \\
    & C_v := B_\Phi \exp(L_\sigma B_\mathbf{A}L_x) L_\sigma.
\end{align}

which concludes our proof.

\paragraph{Proof for discretized inputs.} The proof for $\abs{f_{\theta_1}(\mathbf{x}^D)-f_{\theta_2}(\mathbf{x}^D)}$ follows in a similar fashion, but with discretization dependant constants. Indeed, 
\begin{align}
    \abs{f_{\theta_1}(\mathbf{x}^D)-f_{\theta_2}(\mathbf{x}^D)} \leq \norm{\Phi_1^\top z^1_1 - \Phi_2^\top z^2_1} \leq \norm{\Phi_1-\Phi_2} \norm{z^{D,1}_1} + B_\Phi \norm{z^{D,1}_1-z^{D,2}_1}. 
\end{align}

By Theorem \ref{lemma:bound_y_and_f}, we have 
\begin{align}
    \norm{z^{D,1}_1} \leq L_\sigma \Big(L_\sigma B_\mathbf{U}B_x + B_v  + L_\sigma B_\mathbf{b} \frac{\big(L_\sigma B_\mathbf{A}\big)^q - 1}{L_\sigma B_\mathbf{A}} L_x \Big)\Big(1+\big(L_\sigma B_\mathbf{A}\big)^qL_x \abs{D}\Big)^{K}. 
\end{align}

To bound the difference $\norm{z^{D,1}_1-z^{D,2}_1}$, we resort to Theorem \ref{thm:flow_continuity} which gives us  
\begin{align}
    \norm{z^{D,1}_1-z^{D,2}_1} \leq \exp\big((L_\sigma B_\mathbf{A})^qL_x\big) \Big( \norm{z_0^1-z_0^2} + L_x \norm{\mathbf{G}_{\psi^1}-\mathbf{G}_{\psi^2}}_{\infty, \Omega^D}\Big).
\end{align}

Remark that the ball $\Omega^D$ depends on the discretization. Indeed, the diameter of this set upper bounds the value of $z_1^D$ for all possible controlled ResNets in $\Theta$ and initial values of $\mathbf{x}^D$, which depend on the discretization. As for continuous inputs, we have 
\begin{align}
    \norm{z_0^1-z_0^2} \leq L_\sigma \norm{\mathbf{U}_1-\mathbf{U}_2}B_x + L_\sigma \norm{v_1-v_2}. 
\end{align}

Now, one also has that 
\begin{align}
    \norm{\mathbf{G}_{\psi^1}-\mathbf{G}_{\psi^2}}_{\infty, \Omega^D} \leq \max\limits_{z \in \Omega^D} \sum\limits_{j=1}^q \Big[C_\mathbf{b}^{j}\norm{\mathbf{b}_j^1-\mathbf{b}_j^2} + C_{\mathbf{A}}^{j}(z)\norm{\mathbf{A}^1_j-\mathbf{A}^2_j}\Big]
\end{align}
and taking maximas again but this time on $\Omega^D$, one is left with 

\begin{align}
    \norm{f_{\theta_1}(x)-f_{\theta_2}(x)} & \leq  M_\Theta^D \norm{\Phi_1-\Phi_2}\\
    & \quad  + C^D_\mathbf{A}\sum\limits_{j=1}^q \norm{\mathbf{A}_j^1-\mathbf{A}_j^2}  + C^D_\mathbf{b}\sum\limits_{j=1}^q \norm{\mathbf{b}_j^1-\mathbf{b}_j^2} + \\
    & \quad + C_\mathbf{U} \norm{\mathbf{U}_1-\mathbf{U}_2} + C_v \norm{v_1-v_2}
\end{align}

with constants 
\begin{align}
    & M_\Theta^D := L_\sigma \Big(L_\sigma B_\mathbf{U}B_x + B_v  + L_\sigma B_\mathbf{b} \frac{\big(L_\sigma B_\mathbf{A}\big)^q - 1}{L_\sigma B_\mathbf{A}} L_x \Big)\Big(1+\big(L_\sigma B_\mathbf{A}\big)^qL_x \abs{D}\Big)^{K}\\
    & C^D_\mathbf{A} := B_\Phi L_x \exp\big((L_\sigma B_\mathbf{A})^qL_x\big) \times \max\limits_{ z \in \Omega^D, 1 \leq i \leq q} C^i_\mathbf{A}(z)\\
    & C^D_\mathbf{b} := B_\Phi L_x \exp\big((L_\sigma B_\mathbf{A})^qL_x\big) \times \max\limits_{ 1 \leq i \leq q} C^i_\mathbf{b}\\ 
    & C_\mathbf{U} := B_\Phi B_x \exp(L_\sigma B_\mathbf{A}L_x) L_\sigma \\
    & C_v := B_\Phi \exp(L_\sigma B_\mathbf{A}L_x) L_\sigma.
\end{align}

\newpage

\section{Experimental Details}

\label{appendix:experimental_details}


\subsection{Experiment 1: Analysing the Training Dynamics}

\textbf{Time series.} The paths are $4$-dimensional sample paths from a fBM with Hurst parameter $H=0.7$, sampled at $100$ equidistant time points in the interval $[0,1]$. The paths are generated using the Python package \texttt{stochastic}. We include time as a supplementary channel, as advised in \cite{kidger2020neural}.

\textbf{Well-specified model.} The output is generated from a shallow teacher NCDE
\begin{align*}
    dz^\star_t = \textnormal{Tanh}\Big(\mathbf{A}z^\star_t + \mathbf{b}\Big)dx_t
\end{align*}
initialized with
\begin{align*}
    z^\star_0 = \mathbf{U}x_0 + v
\end{align*}
with random parameters for which $p=3$.

\textbf{Model.} We train a student NCDE with identical parametrization. The model is initialized with Pytorch's default initialization. In the \textbf{first and second figures} (starting from the left), the model is trained for $2000$ iterations with Adam. We use the default values for $\alpha,\beta$ and a learning rate of $5\times 10^{-3}$. The size of the training sample is set to $n=100$. For the \textbf{third figure}, we run $25$ training runs on the same dataset, for the same model, with $1000$ training iterations and record the normalized value of the parameters at the end of training.

\subsection{Experiment 2: Effect of the Discretization}

\textbf{Time series.} We consider a classification task, in which a shallow NCDE is trained to distinguish between two dimensional fBMs with respective Hurst parameters $H=0.4$ ($y=0$) and $H=0.6$ ($y=1$). Time is included as a supplementary channel. 

\textbf{Model.} For the \textbf{first figure}, we train a shallow NCDE classifier with $p=3$ on $n=100$ time series sampled at $100$ equidistant time points in $[0,1]$ for $100$ iterations with Binary Cross Entropy (BCE) loss. We use Adam with default settings and a learning rate of $5\times 10^{-2}$. For the \textbf{second and third} figures, we fix the initialization and the last linear layer $\Phi$ of the NCDE and perform $300$ training runs with Adam (default parameters and learning rate of $1\times 10^{-4}$). For each run, we randomly downsample $n=50$ time series by selecting $K=5$ sampling points, which always include $0$ and $1$, among the $200$ initial points. The model is then trained for $600$ iterations, and the expected generalization error is computed on $50$ test samples. We freeze the initialisation layer and $\Phi$ following \cite{marion2023generalization} to isolate the effect of the discretization on the generalization error. 

\newpage 
\section{Supplementary Elements}

\subsection{Practical and theoretical implications of our results}
\label{appendix:implications}

We see three main practical and theoretical implications of our results. 
\paragraph{Designing new architectures for NCDEs.} Our paper offers performance guarantees and sheds light on the theoretical behaviour of NCDEs. We think that a promising application of our work lies in the design of new architectures for the neural vector field $\mathbf{G}_\psi$. In our work, as in most applications of NCDEs, this neural vector field is chosen as a simple MLP. However, a recent and rich literature has highlighted the benefits of more structured architectures (for instance through orthogonal weights - see \cite{li2019orthogonal}). Our work could help to understand the theoretical implications of such extensions to NCDEs. On a sidenote, we would like to mention the recent article by \cite{chiu2023exploiting}, which was unpublished by submission date, and which takes a first step in this direction by using a U-Net as a neural vector field for learning from video data.  
\paragraph{Quantifying the impact of sparse sampling.} On an applied level, our results can serve as a guideline to practitioners that use irregular time series. For instance, from our personal experience, MDs often wonder whether they have collected enough longitudinal data in a clinical setting to conduct inference with NCDEs. Our results on the sampling bias may help to make an informed decision in such a case. Additionally, our results can help a practitioner facing a "decide now or sample more" dilemma, in which an agent needs to decide whether she defers a decision in order to acquire more information through time-consuming sampling. See for instance \cite{schafer2020teaser} for a discussion of similar problems.
\paragraph{Connexion to Control Theory.} Control Theory can be seen as an inversion of the setup of NCDEs as noted by \cite{kidger2022neural} (see Section 3.1.6.2). In control theory, the dynamics are fixed (known or unknown) and one tries to optimize the input path. In our setup, the path is fixed and the dynamics are optimized. We think, however, that our results can transfer to control theory: our theoretical framework can indeed accommodate many situations in which one learns with continuous time data. Consider, for instance, a setup with fixed dynamics in which one minimizes a criterion by optimizing over the space of paths by choosing the vector fields of a neural SDE --- see for instance \cite{zhang2022neural} for a close setup. One could use our results on the continuity of the flow to bound the difference between two outputs - driven by two paths with different vector fields - by the difference between the two vector fields, and hence derive an upper bound on the covering number of such a class of predictors. To perform this extension, one would however need to leverage additional tools from stochastic calculus and rough analysis.

\subsection{Comparison with earlier results by \cite{chen2020generalization}}
\label{appendix:earlier_work_chen}
Questions about the comparison of our results with earlier results by \cite{chen2020generalization} were raised during the reviewing process of this article. We provide a thorough discussion of these questions in this section. The notations used are the ones by the authors of the aforementioned article and we refer to the AISTATS 2020 version of their work. \\
In their work, \cite{chen2020generalization} consider so-called vanilla RNN which embed for every individual $i=1,\dots,n$ a time series $(x_{i,t})_{t=1}^T$ through 
\begin{align}
    h_{i,t} = \sigma_h\big(Uh_{i,t-1} + Wx_{i,t}\big). 
\end{align}
Here, $U \in \mathbb{R}^{d_h \times d_h}$ and $W \in \mathbb{R}^{d_h \times d_x}$ are learnable parameters. This latent state is then in turn used to produce a prediction about the true label $(z_i,t)_{t=1}^T$ by computing 
\begin{align}
    y_{i,t} = \sigma_y \big(V h_{i,t}\big)
\end{align}
where $V \in \mathbb{R}^{d_z \times d_h}$ is again a learnable parameter. In their work, the authors derive generalization bounds by leveraging similar techniques to ours. Indeed, they first proceed to show that vanilla RNN are Lipschitz with respect to their parameters (see their Lemma 2, in Section 3 page 4 left column). This result is similar to our Theorem \ref{thm:lipschitz_ncde}. They then proceed to derive the covering number of vanilla RNN (see their Lemma 3, Section 3, page 5 left column). This is identical to our Proposition \ref{prop:covering_nb_cont_ncde}. Finally, as in our article,  Chen et al. are able to derive the Rademacher complexity using Dudley's entropy integral.\\
Let us now discuss the generalization bound in our two works. The first kind of generalization bound of \cite{chen2020generalization} scales with the number of parameters in their model equal to 
\begin{align}
    d = \sqrt{d_xd_h + d_h^2 + d_hd_y}
\end{align}
which is expected when using Lipschitz based complexity measures (see their Theorem 2). This is similar to our results in Theorem \ref{thm:generalization_bound}, where the complexity term also scales with the number of parameters.  \\
In a second time, \cite{chen2020generalization} prove more refined generalization bounds by controlling the $(2,1)$ norm of the weight matrices (see their Section 4). They obtain two Theorems, yielding a scaling either in 
\begin{align}
    \sqrt{d}\log \sqrt{d}
\end{align}
or 
\begin{align}
    \sqrt{d\log d}
\end{align}
using again the original notations. This bounds are stronger than the previous ones, and than our bounds, since their involve the $4$-th root of the number of parameters. They do, however, require stronger assumptions on the norm of the parameters. We leave an application of this kind of results to our setup to future work. 

\subsection{Comparaison with earlier results by \cite{golowich2018size}}
\label{appendix:earlier_work_golovitch}

\cite{golowich2018size} deploy new techniques, based on a refined control of the Schatten norms of the weight matrices, that allows to obtain depth-independent bounds. The authors consider deep non-residual neural networks of the form 
\begin{align}
    x \mapsto \sigma \circ \mathbf{W}_L \circ \sigma \circ \mathbf{W}_{L-1} \circ \dots \circ \sigma \circ \mathbf{W}_1 (x)
\end{align}
where $\big(\mathbf{W}_k\big)_{k=1}^L$ are learnable parameters. The bounds they obtain scale as 
\begin{align}
    \mathcal{O}\Bigg( \frac{\Pi_F \sqrt{\log \big(\Pi_F/\pi_S\big)}}{n^{1/4}}\Bigg)
\end{align}
where $\Pi_F$ is an upper bound on the product of Froebenius norms of the weight matrices and $\pi_S$ is a lower bound on the product of the spectral norms of the weight matrices. To obtain depth-independent bound, the authors require these two quantities to be depth independent. Interestingly, observe that these generalization bounds come at the price of a worsened sample complexity (in the sense of dependence on the number of samples). The interested reader may also consider the discussion of this point provided in \cite{marion2023generalization}. \\
To discuss these bounds and to avoid any confusion, we first need to make a clear distinction between the \textbf{depth} and the \textbf{length} of our NCDE predictor: 
\begin{itemize}
    \item The \textbf{length} of our controlled ResNet (or NCDE with piecewise constant path embedding) refers to the number of sampling times and is equal to $K$. In ResNets (see Appendix \ref{appendix:resnet_vs_ncde}), this quantity is usually referred to as \textbf{depth}, hence a possible confusion. Contrarily to ResNets, this parameter is data dependant i.e. it depends on the number of sampling times of the time series at hand. 
    \item The \textbf{depth} of our NCDE refers to the depth of the neural vector field $\mathbf{G}_\psi$, which is assumed in our work to be a standard MLP. 
\end{itemize} 
Our bounds are not \textbf{depth} independent since increasing the depth of the vector field $\mathbf{G}_\psi$ worsen our generalization bounds which scale with the square root of the number of parameters. \\
We conjecture that one could apply the bounds obtained by \cite{golowich2018size} to provide a refined generalization bound on the complexity stemming from the neural vector field $\mathbf{G}_\psi$. However, we believe that since in practice NCDEs are used with neural vector fields for which $q \leq 3$, trading depth-independence for a worsened convergence speed in the number of samples will most likely not benefice our bounds. 

\subsection{Comparaison of ResNets and NCDEs}
\label{appendix:resnet_vs_ncde}

A vanilla ResNet architecture can be described as an embedding of $x \in \mathbb{R}^d$ through 
\begin{align}
    & h_0 = \mathbf{U}x \in \mathbb{R}^p\\
    & h_{k+1} = h_k + \frac{1}{L^\beta}\mathbf{W}_k \mathbf{G}_\psi(h_k) \textnormal{ for all } k=0,\dots, L\\
    & \hat{y} = \Phi ^\top h_L
\end{align}
where $U\in \mathbb{R}^{d \times p}$, $(\mathbf{W}_k)_{k=0}^{L-1} \in \mathbb{R}^{p \times p}$, $\mathbf{G}_\psi:\mathbb{R}^p \to \mathbb{R}^p$ is a neural network parametrized by $\psi$ and $\hat{y}$ is the final prediction of the ResNet. The scaling factor $\beta$ is usually chosen between $1/2$ and $1$.\\

Consider now the \textit{controlled} ResNet architecture, which can be described as mentioned earlier as an embedding of a time series $\mathbf{x} = \big(\mathbf{x}_{0},\dots,\mathbf{x}_{t_K}\big) \in \mathbb{R}^{d \times (K+1)}$ through
\begin{align}
    & z_0 = \mathbf{U}x_0 \in \mathbb{R}^p\\
    & z_{k+1} = z_k + \mathbf{G}_\psi(z_k)\Delta \mathbf{x}^D_{t_{k+1}} \textnormal{ for all } k=0,\dots, K-1\\
    & \hat{y} = \Phi^\top z_K
\end{align}
where we have simplified, for the sake of the analogy, the initialization to be a linear initialization layer. 

One can see that, from an architectural point of view, the layer-dependant weights $(\mathbf{W}_k)_{k=0}^{L-1}$ of the ResNet play a similar role to the increments of the time series $(\Delta \mathbf{x}_{t_j})_{j=1}^{K}$. However, in ResNets these weights are learned, whereas in \textit{controlled} ResNets and NCDEs they are given through the embedded time series. Also remark that while the weights of a ResNet are shared between all inputs (i.e. two inputs $x$ and $x'$ are embedded using the same weights), the increments of time series are, by definition, individual specific. This allows NCDEs to have individual-specific dynamics for the latent state. \\

This allows us to compare our experimental findings with the ones of \cite{marion2023generalization}, who observes that the generalization error correlates with the Lipschitz constant of the weights defined as 
\begin{align*}
    \max\limits_{0 \leq k \leq L_1} \norm{\mathbf{W}_{k+1}-\mathbf{W}_k}.
\end{align*}

In our setting, this translates to the fact that the generalization error is correlated with the average absolute path variation $$n^{-1}\sum\limits_{i=1}^n \max_{k=1,\dots, K-1} \norm{\mathbf{x}^{D,i}_{t_{k+1}}-\mathbf{x}^{D,i}_{t_{k}}} = n^{-1}\sum\limits_{i=1}^n \max_{k=1,\dots, K} \norm{\Delta \mathbf{x}^D_{t_k}}.$$

Remark that we have to consider the \textit{average} over individuals, since the increments are individual specific. Since these increments are not trained, and since we make the assumption that they are bounded by $L_x \abs{D}$, we should also observe a correlation with $L_x \abs{D}$ - see Figure \ref{fig:sampling}. Our code is available at \href{https://github.com/LinusBleistein/generalization_ncde}{this link}.

\end{document}